\def\1{\bm{1}}
\def\vzero{{\bm{0}}}
\def\va{{\bm{a}}}
\def\ve{{\bm{e}}}
\def\vu{{\bm{u}}}
\def\vx{{\bm{x}}}
\def\vy{{\bm{y}}}
\def\vz{{\bm{z}}}
\def\mA{{\bm{A}}}
\def\mE{{\bm{E}}}
\def\mH{{\bm{H}}}
\def\mI{{\bm{I}}}
\def\mP{{\bm{P}}}
\def\mQ{{\bm{Q}}}
\def\mR{{\bm{R}}}
\def\mU{{\bm{U}}}
\def\mV{{\bm{V}}}
\def\mZ{{\bm{Z}}}
\DeclareMathAlphabet{\mathsfit}{\encodingdefault}{\sfdefault}{m}{sl}
\SetMathAlphabet{\mathsfit}{bold}{\encodingdefault}{\sfdefault}{bx}{n}
\newcommand{\E}{\mathbb{E}}
\newcommand{\R}{\mathbb{R}}
\newcommand{\Var}{\mathrm{Var}}
\DeclareMathOperator*{\argmin}{arg\,min}
\newtheorem{defination}{Defination}
\setlist[enumerate]{leftmargin=*}
\setlist[enumerate,1]{itemsep=0pt,topsep=0pt,parsep=0pt}
\definecolor{iccvblue}{rgb}{0.21,0.49,0.74}
\title{Zeroth-Order Fine-Tuning of LLMs in Random Subspaces}
\author{
    Ziming Yu\textsuperscript{1}~,
    Pan Zhou\textsuperscript{2}~,
    Sike Wang\textsuperscript{1}~,
    Jia Li\textsuperscript{1,4}\thanks{Corresponding Author}~,
    Mi Tian\textsuperscript{3}~,
    Hua Huang\textsuperscript{1,4}~ \\
    \textsuperscript{1~}Beijing Normal University \quad    
    \textsuperscript{2~}Singapore Management University \quad
    \textsuperscript{3~}TAL Education Group \\
    \textsuperscript{4~} Engineering Research Center of Intelligent Technology and Educational Application (MOE) \\
    {\parbox[t]{0.8\linewidth}{\centering\tt\small
    \{zimingyu, sikewang\}@mail.bnu.edu.cn,
    \{jiali, huahuang\}@bnu.edu.cn,
    panzhou@smu.edu.sg, tianmi@tal.com}}
}
\begin{document}
\maketitle

\begin{abstract}
Fine-tuning Large Language Models (LLMs) has proven effective for a variety of downstream tasks. However, as LLMs grow in size, the memory demands for backpropagation become increasingly prohibitive. Zeroth-order (ZO) optimization methods offer a memory-efficient alternative by using forward passes to estimate gradients, but the variance of gradient estimates typically scales linearly with the model's parameter dimension—a significant issue for LLMs. In this paper, we propose the random Subspace Zeroth-order (SubZero) optimization to address the challenges posed by LLMs' high dimensionality. We introduce a low-rank perturbation tailored for LLMs that significantly reduces memory consumption while improving performance.  Additionally, we prove  that our gradient estimation closely approximates the backpropagation gradient, exhibits lower variance than traditional ZO methods, and ensures convergence when combined with SGD. Experimental results show that SubZero enhances fine-tuning performance and achieves faster convergence compared to standard ZO approaches like MeZO across various language modeling tasks. Code is available at \url{https://github.com/zimingyy/SubZero}.
\end{abstract}
\section{Introduction} 

Large Language Models (LLMs), such as the GPT and LLaMA series~\citep{zhang2022opt, touvron2023llama}, have recently demonstrated impressive capabilities in natural language processing tasks and beyond~\citep{solaiman2019release,achiam2023gpt}. These models utilize deep learning, particularly the transformer architecture~\citep{vaswani2017attention}, to learn complex patterns  in language data. However, LLMs can struggle with specialized tasks that require domain-specific knowledge~\citep{tag-llm2024}. Fine-tuning presents an effective solution by slightly adjusting pre-trained LLMs with domain data, enabling them to adapt to specific tasks more effectively.

For fine-tuning, first-order (FO) optimizers, such as SGD~\citep{amari1993backpropagation} or Adam~\citep{kingma2014adam}, are commonly used to achieve promising performance on domain datasets.  However,  as LLMs grow in size, FO optimizers demand increasingly memory consumption due to the gradient computations required by backpropagation (BP)~\citep{zhao2024galore}. Additionally, they are unable to directly handle non-differentiable objectives. To enhance memory efficiency, MeZO~\citep{malladi2023fine} first introduces the zeroth-order (ZO)  optimizer to LLM fine-tuning without BP. It only requires forward passes and calculates gradient estimates using finite differences of training loss values, enabling it to directly handle non-differentiable objectives. Nevertheless, the variance of ZO gradient estimates scales linearly with the perturbation dimension, which corresponds to the number of model parameters. This can become extremely large in LLMs, leading to significant performance degradation compared to FO optimizers~\citep{gautamvariance, jiang2024zo, liu2024sparse}. 

Reducing the variance of ZO gradient estimates generally results in faster convergence~\cite{yue2023zeroth}. There are two main attempts to addressing the high variance of ZO gradient estimates.  The first approach involves increasing batch size alongside training steps, which reduces gradient noise and variance  in ZO gradient estimates~\citep{gautamvariance, jiang2024zo}. However, this leads to significant runtime and memory costs due to the large batch size in the later training stages. The second approach focuses on perturbing fewer parameters by employing sparse parameter perturbations, such as random and sparse pruning masks~\citep{liu2024sparse} and block-coordinate perturbations~\citep{zhang2024revisiting}, or by reducing the number of trainable parameters through techniques like parameter-efficient fine-tuning (PEFT)~\citep{malladi2023fine, zhang2024revisiting} and tensorized adapters~\citep{yang2024adazeta}. Recent theoretical advancements have proposed using random projections to lessen the dimensionality dependence in ZO optimizers~\citep{nozawa2024zeroth, roberts2023direct, kozak2021stochastic} by applying low-dimensional perturbations in random subspaces. Nonetheless, a major drawback of this approach is the need to store a huge projection matrix that scales with model parameter dimensionality, making it impractical for fine-tuning large LLMs.

\begin{figure*}[t]	
        \vspace{-0.5em}
	\centering	
	\begin{subfigure}{0.23\textwidth}
		\includegraphics[width=\textwidth]{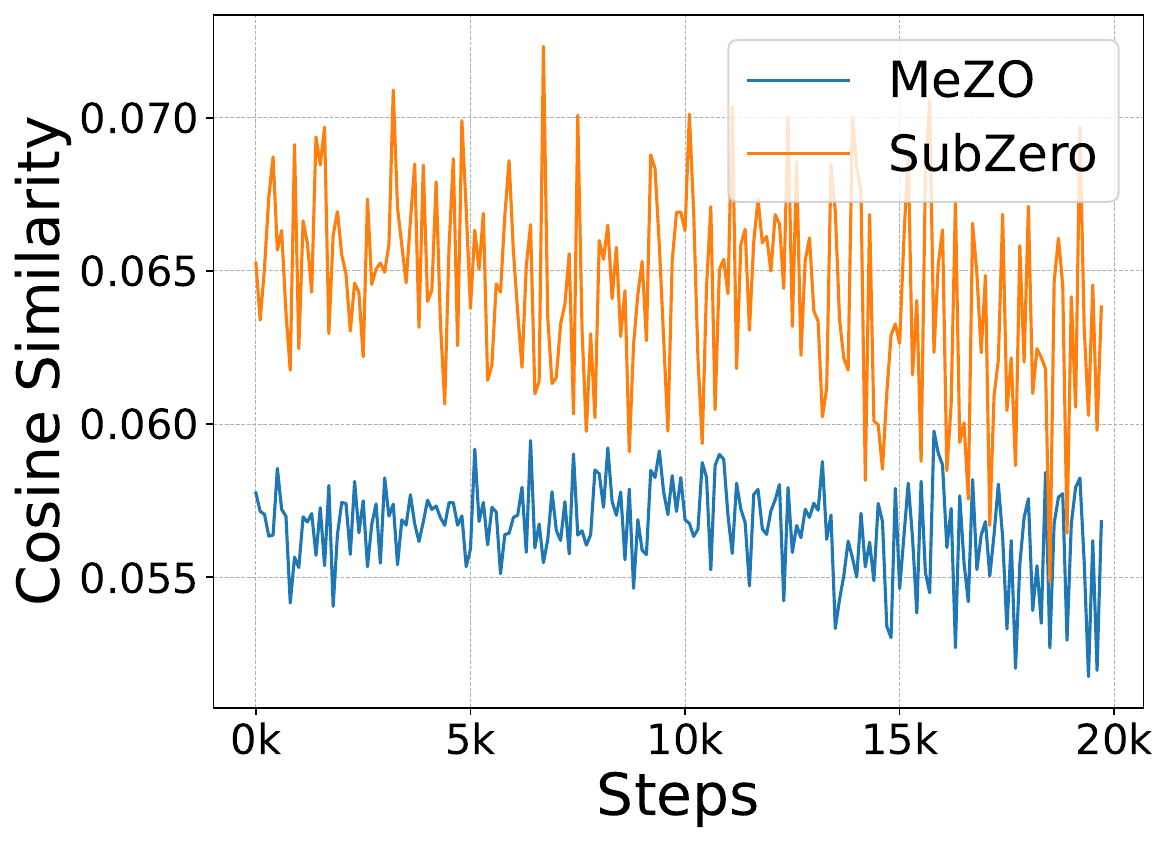}	
        \caption{Cosine Similarity}		
	\end{subfigure}
	\hfill
	\begin{subfigure}{0.22\textwidth}
		\includegraphics[width=\textwidth]{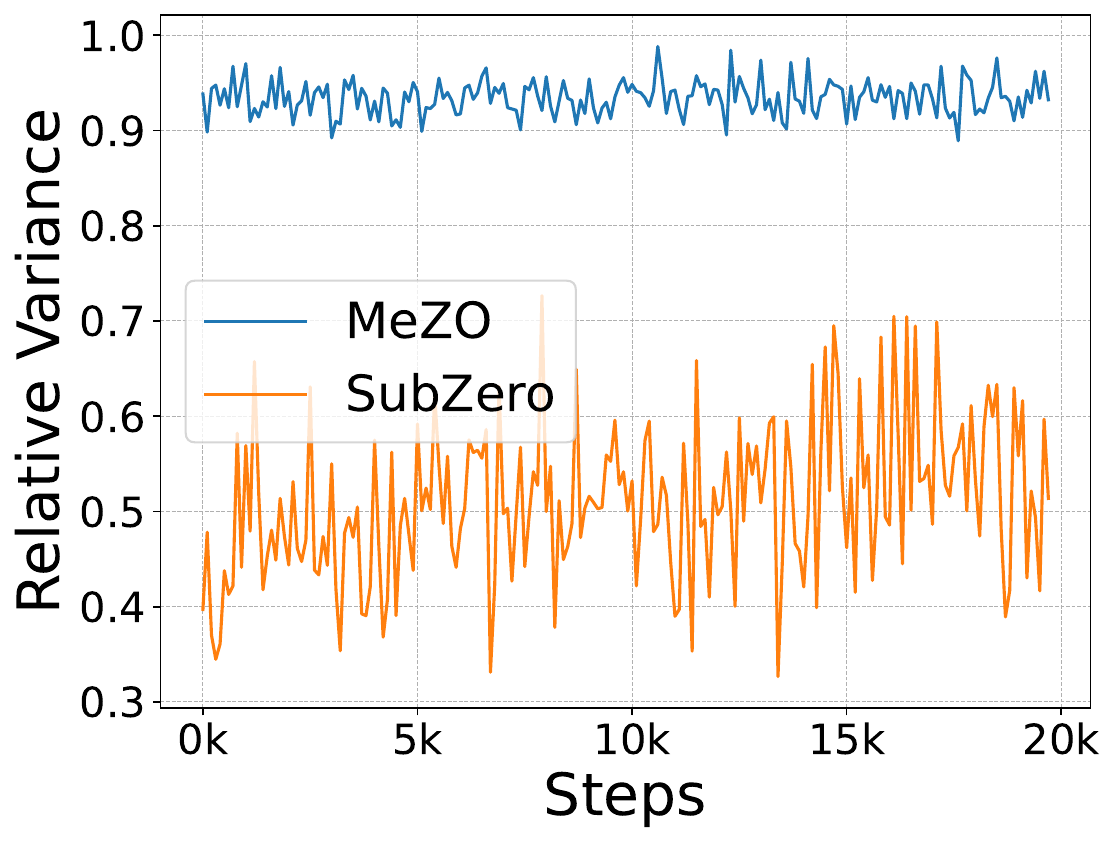}
        \caption{Relative Variance}		
	\end{subfigure}
	\hfill
	\begin{subfigure}{0.22\textwidth}
		\includegraphics[width=\textwidth]{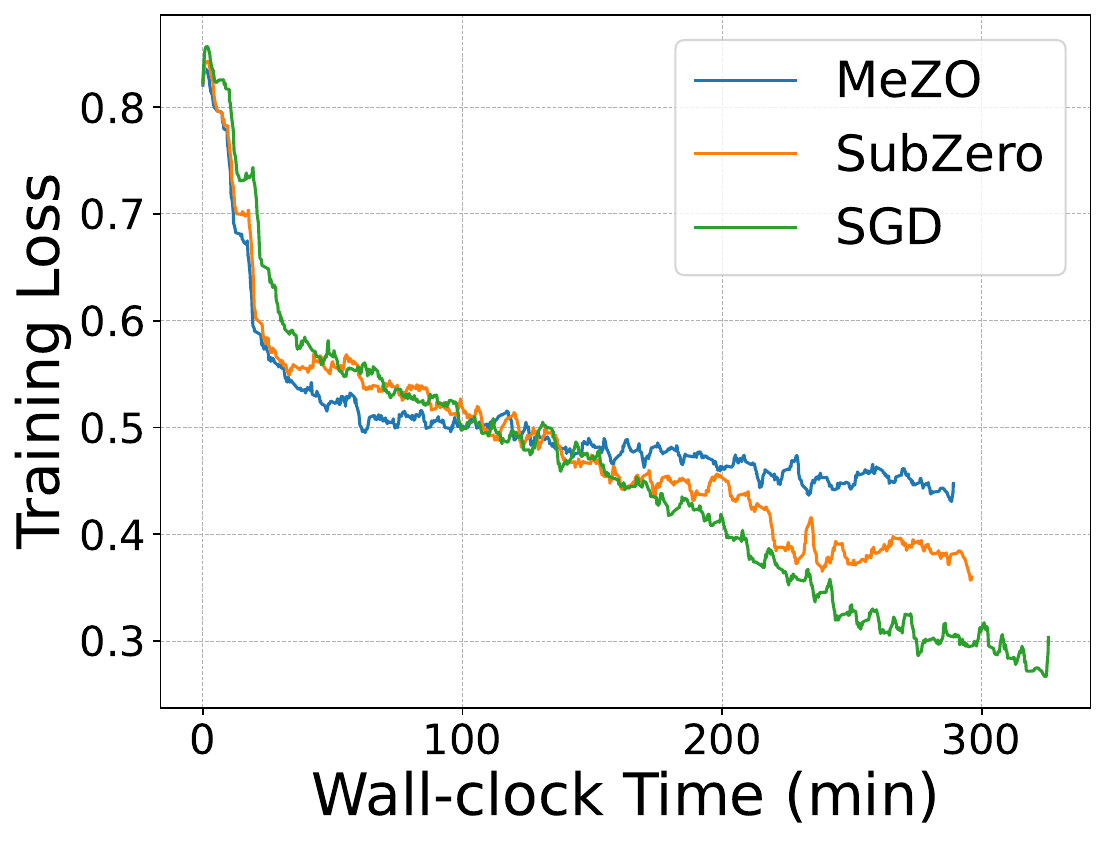}
        \caption{Training Loss}		
	\end{subfigure}
	\hfill
	\begin{subfigure}{0.22\textwidth}
		\includegraphics[width=\textwidth]{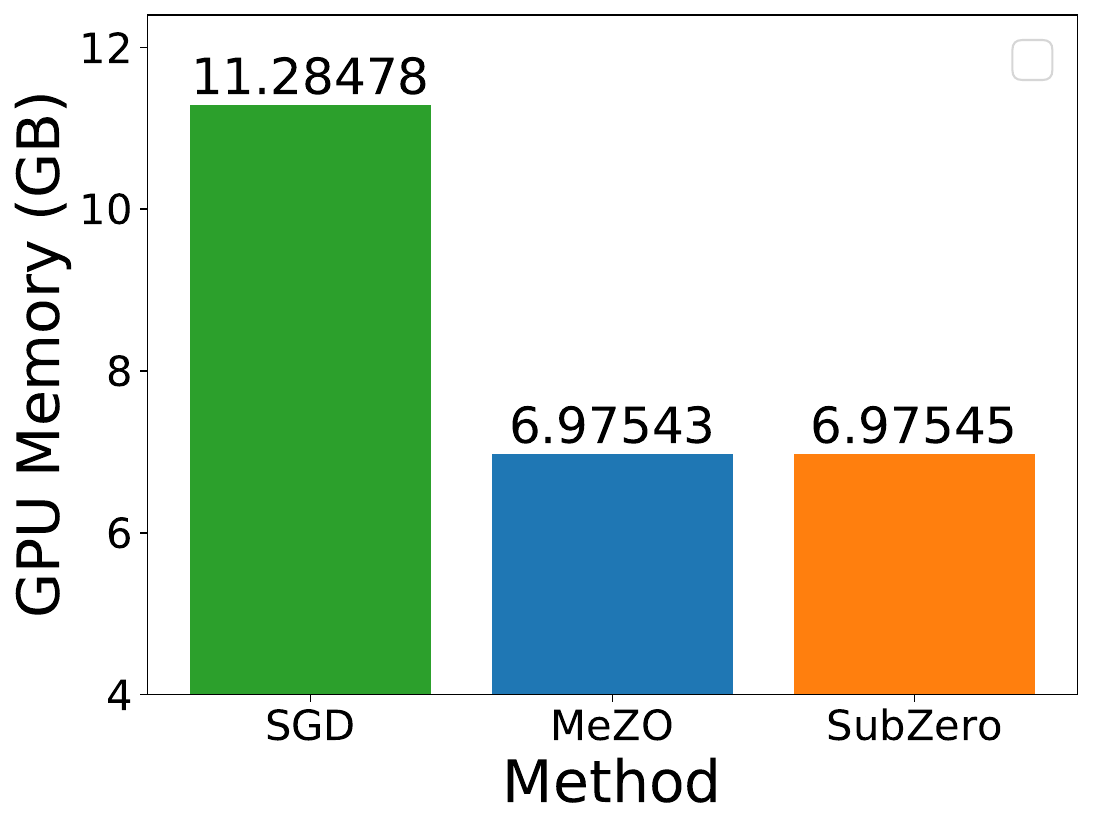}
        \caption{Memory Cost}		
	\end{subfigure}	
	\vspace{-0.8em}
	\caption{  
		Visualization of cosine similarity $\mathbb{E}\left[{\texttt{cosine}(\boldsymbol{g}, \hat{\boldsymbol{g}})}\right]$, relative variance $\Var \bigl[\left \|\hat{\boldsymbol{g}}\right\|\bigr] / \left \| \boldsymbol{g} \right \|^2$, training loss, and peak total GPU memory cost with OPT-1.3B on SST-2 in the prompt tuning scheme. All three methods utilize a batch size of 16 and run for 20K steps. Here, $\hat{\boldsymbol{g}}$ represents the gradient estimated by MeZO or our SubZero, and $\boldsymbol{g}$ denotes the expected gradient  $\mathbb{E}[\hat{\boldsymbol{g}}]$. Theorem~\ref{thm:oracles-mean} (b) ensures that SubZero maintains a small distance between $\boldsymbol{g}$ and the BP gradient in a subspace. (a) and (b) demonstrate that SubZero's estimated gradient $\hat{\boldsymbol{g}}$ has lower angle error and variance than MeZO. (c) and (d) indicate that SubZero enhances convergence speed with minimal extra memory usage. }
	\label{fig:motivation}
	\vspace{-1em}
\end{figure*}

\noindent\textbf{Contributions.} In this work, we propose the first random Subspace Zeroth-order (SubZero) optimization to tackle the challenges of high-dimensional LLM fine-tuning.  We introduce a low-rank  perturbation to estimate the gradient, specifically designed for LLM architecture, leading to reduced memory consumption and enhanced training performance. Our main contributions are as follows.

Firstly, we propose a layer-wise low-rank perturbation approach for gradient estimation, specifically designed for fine-tuning LLMs. In each layer, we generate a low-rank perturbation matrix by combining two column-orthogonal matrices with a Gaussian random matrix, which is then used for gradient estimation. Unlike traditional ZO methods like MeZO~\citep{malladi2023fine} which apply non-low-rank perturbations to the entire model, our approach significantly reduces the variance of gradient estimates and the angle error between the estimated gradient and its expectation, as respectively shown in Fig.~\ref{fig:motivation} (a) and (b). SubZero also improves upon random subspace ZO methods like S-RGF~\citep{nozawa2024zeroth} by using smaller and layer-specific low-rank perturbation matrices instead of a large and model-scale projection matrix, thus cutting memory and computational costs. Additionally, we introduce a lazy update strategy, generating perturbations periodically rather than iteratively, further reducing overhead. Besides, we successfully apply SubZero to four popular LLM fine-tuning schemes, highlighting the  compatibility of SubZero.

Secondly, we provide theoretical guarantees for SubZero. We first convert our gradient estimation into an equivalent formulation, highlighting the key differences between our approach and existing traditional ZO methods~\citep{malladi2023fine}, as well as random subspace ZO methods~\citep{nozawa2024zeroth}. Then, we prove that the gradient estimated by SubZero closely approximates the BP gradient, i.e., the ground-truth gradient,  and enjoys significantly lower gradient variance than traditional ZO methods like MeZO. Furthermore, we establish the theoretical convergence of SubZero when combined with the SGD optimizer.

Finally, experimental results demonstrate SubZero's superior performance and memory efficiency compared to other ZO approaches in both full-parameter tuning and parameter-efficient fine-tuning (PEFT) schemes, such as LoRA, prefix tuning, and prompt tuning. For instance, SubZero improves upon MeZO by 7.1\% on LLaMA-7B and by 3.2\% on OPT-1.3B under full-parameter tuning and prompt tuning, while maintaining nearly identical memory costs to MeZO.
\section{Related Work}
\noindent\textbf{Zeroth-Order Fine-Tuning.}
ZO optimizers utilize just two forward passes to estimate gradient without BP. \citet{malladi2023fine} first used ZO optimization to fine-tune LLMs, significantly lowering the GPU hours and memory usage to levels similar to inference, which offers a considerable advantage over FO optimizers. They demonstrated that LLM fine-tuning benefits from a well-structured loss landscape by introducing suitable task-specific prompt templates. Convergence theories for ZO optimization have been elaborated in both convex~\citep{nesterov2017random, jamieson2012query, duchi2015optimal} and non-convex settings~\citep{liu2018zeroth, ji2019improved}. However, these convergence rates typically increase linearly with the number of trainable parameters~\citep{nesterov2017random, jamieson2012query,duchi2015optimal, liu2018zeroth, ji2019improved}.

Recently, more work in ZO has focused on improving the convergence rates and reducing gradient estimation variance for LLM fine-tuning. Increasing batch size can diminish noise in ZO gradient estimation~\citep{gautamvariance, jiang2024zo}. Perturbing a subset of model parameters also lowers gradient variance. This approach induces sparse parameter perturbations through random and sparse pruning masks~\citep{liu2024sparse} or block-coordinate perturbations~\citep{zhang2024revisiting}. Additionally, some approaches tried to reduce trainable parameters through PEFT~\citep{malladi2023fine, zhang2024revisiting} and tensorized adapters~\citep{yang2024adazeta}. 

\noindent\textbf{Random Subspace Optimization.} To lessen dependence on dimensionality, some research utilizes random projections and low-dimensional perturbations in subspaces~\citep{nozawa2024zeroth, roberts2023direct, kozak2021stochastic}. However, these methods are hindered by the need to store a large projection matrix that increases with dimensionality, making it impractical for fine-tuning LLMs. 

\noindent\textbf{Memory-Efficient Fine-Tuning.} 
Fine-tuning generally employs FO optimizers like SGD~\citep{amari1993backpropagation} or Adam~\citep{kingma2014adam}. Various approaches have been developed to reduce the memory cost of BP, such as LoRA~\cite{hu2021lora,wang2024mlae}, gradient sparsification~\citep{sun2017meprop}, low-rank gradient projection~\citep{zhao2024galore}, and optimizer state quantization~\citep{dettmers2022bit,li2024memory}. Additional methods to conserve activation and weight memory during forward and backward passes include gradient checkpointing~\citep{chen2016training}, FlashAttention~\citep{dao2022flashattention}, QLoRA~\citep{dettmers2024qlora}, and LLM.int8()~\citep{dettmers2022gpt3}.

\section{Preliminaries}\label{sec:preliminaries}
Here we introduce the most popular ZO optimization approach and existing random subspace optimization methods.
 
\noindent\textbf{Notations.}
Let non-bold letter like $a$ and $A$  denote a scalar, a boldfaced lower-case letter like $\boldsymbol{w}$ denote a column vector, and a boldfaced upper-case letter like $\boldsymbol{W}$ denote a matrix. $\mathcal{N}(\boldsymbol{0}, \boldsymbol{I})$ is a multivariate normal distribution with a  zero mean vector and an identity covariance matrix. $ \text{vec}(\boldsymbol{W})$ denotes the vectorization of matrix $\boldsymbol{W}$ which reshapes $\boldsymbol{W}$ into a column vector by stacking the columns of $\boldsymbol{W}$ vertically. $\boldsymbol{A} \otimes \boldsymbol{B}$ is the Kronecker product of matrices $\boldsymbol{A}$ and $\boldsymbol{B}$. $\mathbb{E}[\vx]$ is the expected value of a random variable $\vx$. $\Var[\vx]$ is the variance of a random variable $\vx$. The $\ell_2$-norm of a vector $\vx$ is $\left \| \vx \right \| = \sqrt{\sum_{i=1}^{n}\vx_i^2 }$. The spectral norm of a matrix $\mA$ is $ \left \| \mA \right \|$. The Frobenius norm of a matrix $\bm{A}$ is $\|\bm{A}\|_F\!=\!\sqrt{\langle\bm{A}, \bm{A}\rangle}$. 
$\displaystyle C^{s,p}_L(\mathcal{S})$ denotes the class of $s$-th smooth and $p$-th $L$-smooth functions over the set $\mathcal{S}$.  ${\rm bdiag}(\bm{A}_1, \bm{A}_2, \cdots, \bm{A}_{l})$ is a block diagonal matrix with diagonal blocks $\bm{A}_1, \bm{A}_2, \cdots, \bm{A}_{l}$.

We are interested in fine-tuning large LLMs~\citep{ding2023parameter}. These models typically comprise multiple layers, with trainable parameter vectors represented as  $\boldsymbol{w}\!=\! \left [ \boldsymbol{w}_1^\mathsf{T}, \boldsymbol{w}_2^\mathsf{T}, \dots, \boldsymbol{w}_{l}^\mathsf{T} \right ]^\mathsf{T} \!\in\! \mathbb{R}^d$, where $\boldsymbol{w}_i$ denotes the flattened parameter vector from the $i$-th layer and $d$ is model parameter dimension. Then training these models involves optimizing the  problem: 
\begin{align}
	\min\nolimits_{\boldsymbol{w}} \mathcal{L}(\boldsymbol{w}),
\end{align}
where $\mathcal{L}(\cdot)$  denotes the loss function.

\noindent\textbf{Zeroth-Order Optimization.} ZO optimization is BP-free and estimates gradients via random perturbations. A classical gradient estimator is the simultaneous perturbation stochastic approximation (SPSA)~\citep{spall1992multivariate}, which is defined as
\begin{align}
	\widehat{\nabla} \mathcal{L}(\boldsymbol{w}; \mathcal{B})  = \frac{\mathcal{L}(\boldsymbol{w} + \varepsilon \boldsymbol{z}; \mathcal{B}) - \mathcal{L}(\boldsymbol{w} - \varepsilon \boldsymbol{z}; \mathcal{B})}{2 \varepsilon} \boldsymbol{z},
\label{eq:mezo_estimate_grad}
\end{align}
where  $\mathcal{L}(\boldsymbol{w} ; \mathcal{B})$  is the loss on a  minibatch $\mathcal{B}$ of size $B$ uniformly sampled from the training dataset $ \mathcal{D}$, 
$\boldsymbol{z} \in \mathbb{R}^d$ represents a random perturbation sampled from $\mathcal{N}(\boldsymbol{0}, \boldsymbol{I}_d)$, and $\varepsilon$ is the perturbation scale. 

The SPSA in Eqn.~\eqref{eq:mezo_estimate_grad} is an unbiased gradient estimator of the desired gradient $\nabla \E_{\boldsymbol{z}}[\mathcal{L}(\boldsymbol{w}+ \varepsilon\boldsymbol{z})] $~\citep{nesterov2017random}. It only requires two forward passes to estimate the gradient and eliminates  BP computation, greatly reducing  computation cost and GPU memory. With this estimated gradient, one can integrate with existing FO optimizers like SGD to develop corresponding ZO optimizers, e.g., ZO-SGD defined as:
\begin{align}
    \boldsymbol{w}^{t+1} = \boldsymbol{w}^{t} - \eta^t \widehat{\nabla} \mathcal{L}(\boldsymbol{w}^{t}; \mathcal{B}^t),
\end{align}
where $\eta^t$ is the learning rate at step $t$. In practice, MeZO~\citep{malladi2023fine} implements ZO-SGD via in-place operations and  uses a single random seed to  facilitate efficient perturbation regeneration,   greatly reducing memory overhead. 

\noindent\textbf{Random Subspace Optimization.}  
Recent theoretical work~\citep{nozawa2024zeroth, roberts2023direct} has explored using low-dimensional perturbations in random subspaces to reduce gradient variances and hence enhance convergence rates. The key to random subspace methods is the generation of the perturbation vector $\Tilde{\boldsymbol{z}}$ within a subspace spanned by  $\boldsymbol{P}$:  
\begin{align}
\label{eqn:subspace_spsa}
   \Tilde{\boldsymbol{z}} = \boldsymbol{P} \boldsymbol{z}, 
\end{align}
where $\boldsymbol{P} \in \mathbb{R}^{d \times q}$ is a random projection matrix with entries drawn from $\mathcal{N}\left(0, 1\right) $, $\boldsymbol{z} \in \mathbb{R}^{q}$ is a low-dimensional random perturbation vector sampled from $\mathcal{N}(\boldsymbol{0}, \boldsymbol{I}_q)$, and $q<d$ is the dimension of the subspace. Thus,  the gradient estimator in the subspace is given as follows:
\begin{align}
	\widehat{\nabla}\! \mathcal{L}(\boldsymbol{w}, \boldsymbol{P}; \mathcal{B}) \! = \!\frac{\mathcal{L}(\boldsymbol{w} \!+ \!\varepsilon \boldsymbol{P} \boldsymbol{z};\! \mathcal{B}) \!-\! \mathcal{L}(\boldsymbol{w} \!-\! \varepsilon \boldsymbol{P}\boldsymbol{z};\! \mathcal{B})}{2 \varepsilon} \boldsymbol{P}\boldsymbol{z}.
\label{eq:subspace_zo_estimate_grad}
\end{align}

LLMs have a large model size, and thus their training and fine-tuning parameters can be very high-dimensional. This results in an excessively large matrix $\boldsymbol{P}$ which is $q$ times larger than the model size $d$ in full-parameter tuning~\citep{aghajanyan2021intrinsic} and is also large in other fine-tuning schemes e.g., LoRA~\citep{hu2021lora}. Consequently, this approach significantly increases memory requirements and computational complexity. Therefore, it is crucial to develop an efficient subspace construction strategy with minimal memory consumption for LLM fine-tuning.

\section{Methodology}

Here we first elaborate on our SubZero, a powerful ZO  framework  for LLM fine-tuning. Then we present how to integrate SubZero into four  representative fine-tuning schemes.

\subsection{Random Subspace Optimization for LLM Fine-Tuning}\label{techniques}

Our intuition is that exploring update directions in a low-dimensional subspace may result in a reduced  variance of the estimated gradient~\citep{nozawa2024zeroth, roberts2023direct} compared to the estimation in the vanilla space as used in MeZO. Moreover, recent work indicates that BP gradients in LLM fine-tuning rapidly converge to a small subspace~\citep{zhang2023fine,malladi2023fine,zhao2024galore,hao2024flora}. Accordingly, we propose the random Subspace Zeroth-order (SubZero) optimization framework  tailored for LLM fine-tuning. This framework reduces gradient estimation variance, and minimizes the memory overhead  associated with gradient estimation, such as  the memory overhead caused by the projection matrix $\boldsymbol{P}$ in Eqn.~(\ref{eq:subspace_zo_estimate_grad}) used in~\citep{nozawa2024zeroth, roberts2023direct}.

\noindent\textbf{Layer-wise Random Subspace Perturbation.} LLMs  primarily consist of dense layers that perform matrix multiplication. We denote the trainable parameters of the $i$-th layer in matrix form as $\boldsymbol{W}_i \in \mathbb{R}^{m_i \times n_i}$. Then we will explain how to design its low-rank perturbation $\tilde{\boldsymbol{Z}}_i \in \mathbb{R}^{m_i \times n_i}$.  

We propose a low-rank perturbation strategy for model parameter matrix of each layer, contrasting with previous random subspace methods that focus on the entire model's parameters~\citep{nozawa2024zeroth, roberts2023direct}. At each step, we generate a low-dimensional random matrix $\boldsymbol{Z}_i \in \mathbb{R}^{r \times r} $, where $ r \ll \min\{m_i, n_i\} $,  and perform QR decomposition on two Gaussian random matrices with entries sampled from $\mathcal{N}(0,1)$ to create projection matrices $\boldsymbol{U}_i \in \mathbb{R}^{m_i \times r}$ and $\boldsymbol{V}_i \in \mathbb{R}^{n_i \times r}$ (see Algorithm~\ref{alg:getortho}). Both $\boldsymbol{U}_i$ and $\boldsymbol{V}_i$ are column-orthogonal matrices. Our experiments in Table~\ref{tab:abla_projtype} indicate that directly using Gaussian random projection matrices yields worse performance than using our designed column-orthogonal  matrices. Then we combine these three matrices  to yield a low-rank perturbation as follows:
\begin{align}\label{low_rank_pertubation}
    \tilde{\boldsymbol{Z}}_i = \boldsymbol{U}_i \boldsymbol{Z}_i \boldsymbol{V}_i^\mathsf{T}, 
\end{align}
where $\Tilde{\boldsymbol{Z}}_i$ is the perturbation matrix in a subspace spanned by  $\boldsymbol{U}_i$ and $\boldsymbol{V}_i$, and $\boldsymbol{Z}_i$ represents the low-dimensional random perturbation matrix with entries sampled from $\mathcal{N}(0,1)$. The projection matrices for SubZero can only be derived using the QR decomposition of the weights, activations, ZO gradients, or random matrix, without increasing memory overhead. As shown in Table~\ref{tab:proj_choice}, our proposed random matrix achieves the best performance. While the weight and ZO gradient matrices are feasible, their performance drops significantly. The activation matrix is ineffective due to its batch size dependency, requiring more sophisticated handling. Detailed experimental setups are in Appendix~\ref{sec: hyperparam}.

Let the model consist of $l$ layers, with the parameter matrix set defined as $\mathcal{W}=\{\boldsymbol{W}_{i}\}_{i=1}^{l}$ and the perturbation matrix set as $\tilde{\mathcal{Z}}=\{\Tilde{\boldsymbol{Z}}_i\}_{i=1}^{l}$.  
Similar to Eqns.~\eqref{eq:mezo_estimate_grad} and~\eqref{eq:subspace_zo_estimate_grad}, we  compute the loss  difference: 
\begin{equation}
	\rho = \frac{\mathcal{L}(\mathcal{W} + \varepsilon \tilde{\mathcal{Z}}; \mathcal{B} ) - \mathcal{L}(\mathcal{W} - \varepsilon \tilde{\mathcal{Z}} ; \mathcal{B})}{2 \varepsilon} \label{eq:cal_rho}.
\end{equation}
Note that multiplying a set by a scalar means that the scalar is multiplied by each element in the set. The addition of two sets means that the corresponding elements are added. This is only for mathematical expression, and $\rho$ in Eqn.~\eqref{eq:cal_rho} can be calculated by two forward passes through all the layers in practice. Then we obtain the gradient estimate for the $i$-th layer as 
\begin{equation} \label{eq:gradient_estimate}
\widehat{\nabla} \mathcal{L}(\boldsymbol{W}_i; \mathcal{B})= \rho \tilde{\boldsymbol{Z}}_i = \rho \boldsymbol{U}_i \boldsymbol{Z}_i \boldsymbol{V}_i^\mathsf{T}.
\end{equation}
In Sec.~\ref{sec:theroy}, we analyze the effectiveness of this new gradient estimation~(\ref{eq:gradient_estimate}).  Specifically, Theorem~\ref{thm:oracles-mean} proves the close distance between our gradient estimate~\eqref{eq:gradient_estimate} and the vanilla gradient computed  by BP in FO methods, while Theorem~\ref{thm:oracles-quadratic} shows smaller variance and angle error of our gradient estimate in Eqn.~\eqref{eq:gradient_estimate} compared to the gradient estimate~\eqref{eq:mezo_estimate_grad} in  MeZO~\citep{malladi2023fine}. See more theoretical details in Sec.~\ref{sec:theroy}.

Then, one can use estimated gradient in~\eqref{eq:gradient_estimate} to replace the gradient in any FO
optimizer such as SGD:
\begin{equation}\label{update}
	\boldsymbol{W}_{i}^{t+1}\! =\!  \boldsymbol{W}_i^{t} \!-\! \eta^t  \widehat{\nabla} \mathcal{L}(\boldsymbol{W}_i^t;\! \mathcal{B}^t)\! =\! \boldsymbol{W}_i^{t} \! -\! \eta^t \rho^{t} \boldsymbol{U}_i^t \boldsymbol{Z}_i^t {\boldsymbol{V}_i^t}^\mathsf{T}.
\end{equation}
Here we choose SGD as the default optimizer of SubZero. Theorem~\ref{thm:convergence}  in Sec.~\ref{sec:theroy} guarantees  the  convergence  of SubZero with SGD as basic optimizer and gives its convergence rate.  
The choice of FO optimizers is orthogonal to ZO optimization. Also, some empirical work indicates that adaptive optimizers like Adam~\citep{kingma2014adam} do not necessarily enhance  convergence of ZO approaches  during LLM fine-tuning~\citep{zhang2024revisiting, guo2024zeroth}. So the combination of SubZero and Adam is included in Appendix~\ref{sec:add_results} due to the limited space. We apply the primitive ZO approach. There are other ZO optimizers that utilize stochastic momentum~\citep{jiang2024zo} and second-order information~\citep{zhao2024second} to facilitate faster convergence.  While SubZero can be adapted to these ZO optimizers,  we leave a comprehensive evaluation of these approaches for future work.


\begin{table}[t] 
    \vspace{-0.3em}   
    \begin{minipage}{0.27\textwidth}
        \centering
		\begin{small}
        \caption{Projection matrix generation for SubZero in full-parameter tuning with OPT-1.3B and LLaMA2-7B on SST-2 and OPT-13B on RTE.}
        \vspace{-1em}
        \label{tab:proj_choice}
        \setlength{\tabcolsep}{2pt}
        \renewcommand{\arraystretch}{0.95} 
        \resizebox{0.7\textwidth}{!}{
            \begin{tabular}{l|ccc}
                \toprule
                Matrix & 1.3B &  7B & 13B \\
                \midrule
                Weight & 91.5 & 91.7 & 65.3\\
                Activation & 51.5& 52.9 & 53.1 \\
                ZO Gradient & 89.6 & 92.0 & 67.5\\
                Random & \textbf{93.4}& \textbf{94.5}& \textbf{74.0} \\
                \bottomrule
            \end{tabular}
        }
		\end{small}
    \end{minipage}
    \hspace{0.8em}
    \begin{minipage}{0.18\textwidth}
    	\centering
    	\begin{small}
    		\caption{Memory cost in full-parameter tuning   with RoBERTa-large on SST-2. }
    		\vspace{-1em}
    		\label{tab:memorycompare}
    		\setlength{\tabcolsep}{3pt}
    		\renewcommand{\arraystretch}{1} 
    		\resizebox{0.9\textwidth}{!}{
    			\begin{tabular}{l|c}
    				\toprule
    				Method & Mem.(GB) \\
    				\midrule
    				SGD & ~6.063  \\
    				MeZO \citep{malladi2023fine}& ~2.683  \\            
    				S-RGF \citep{nozawa2024zeroth}& 23.845 \\
    				SubZero & ~2.690 \\
    				\bottomrule
    			\end{tabular}
    		}
    	\end{small}
    \end{minipage}
    \vspace{-0.5cm}
\end{table}

We compare the memory overhead of SubZero with the existing random subspace method \mbox{S-RGF}~\citep{nozawa2024zeroth} using identical experimental settings, including layer-wise perturbation and matching subspace dimension, with all methods utilizing the SGD optimizer. As shown in Table~\ref{tab:memorycompare}, S-RGF's memory usage is roughly four times greater than SGD and 8.8 times that of MeZO~\citep{malladi2023fine}, while our SubZero's memory usage is comparable to MeZO. See more experimental comparisons on OPT-13B in Table~\ref{tab:memoryusage} of Appendix~\ref{sec:add_results}.

\noindent\textbf{Lazy Low-rank Subspace Update.} According to Eqn.~\eqref{update}, at the $t$-th step, the gradient estimate of the parameter matrix in the  $i$-th layer, $\widehat{\nabla} \mathcal{L}(\boldsymbol{W}_i^t; \mathcal{B}^t)$, lies within a subspace defined by the  projection matrices $\boldsymbol{U}_{i}^{t}$ and $\boldsymbol{V}_{i}^{t}$. Specifically,  $\boldsymbol{U}_{i}^{t}$  spans the column subspace, while  $\boldsymbol{V}_{i}^{t}$ determines the row subspace, with both matrices generated iteratively, leading to extra computational overhead to LLM fine-tuning.  

However, for LLM fine-tuning, enhancing the computational efficiency and the accuracy of gradient subspace approximation is crucial.  An excessively short update interval for  $\boldsymbol{U}_{i}$   and  $\boldsymbol{V}_{i}$, such as generating them iteratively, can incur high computational costs and limit exploration of the gradient subspace they established. Conversely, a long interval may result in inaccuracies in subspace approximation and fail to capture the evolving nature of the gradient subspace.  Accordingly, we propose a lazy subspace update strategy that periodically regenerates the projection matrices $\boldsymbol{U}_{i}$   and  $\boldsymbol{V}_{i}$. Specifically, these matrices are generated at the first step of every $F>1$ training steps and remain unchanged for the subsequent $F-1$ steps (see lines 4-7 in Algorithm~\ref{alg:ss_mezo}). We utilize QR decomposition on two different random matrices for generating the column-orthogonal matrices  $\boldsymbol{U}_{i}$   and  $\boldsymbol{V}_{i}$, as summarized in Algorithm~\ref{alg:getortho}. This lazy subspace update strategy is both efficient and effective in all our experiments.

\noindent 
\begin{minipage}[c]{0.47\textwidth}
	\vspace{-1em}
	\begin{algorithm}[H]	
		\caption{GenerateProjMatrix$(m, n, r)$}
		\label{alg:getortho}	
		\begin{algorithmic}[1]
			\REQUIRE{size of parameter matrix $m\times n$,  rank $r$.
			}			
			\STATE Generate random matrices $\boldsymbol{R}_1 \in \mathbb{R}^{m \times r} $ and  $\boldsymbol{R}_2 \in \mathbb{R}^{n \times r}$ whose entries are sampled from $\mathcal{N}(0,1)$ \\
			\STATE $\boldsymbol{U}, \_ \leftarrow$ QR\_Decomposition$(\boldsymbol{R}_1)$ 
			\STATE $\boldsymbol{V}, \_ \leftarrow$ QR\_Decomposition$(\boldsymbol{R}_2)$ 
			
			\RETURN $\boldsymbol{U}$, $\boldsymbol{V}$			
		\end{algorithmic}	 	
	\end{algorithm}
	\vspace{-2em}  
	\begin{algorithm}[H]
		\caption{PerturbParams$(\mathcal{W}, \mathcal{U}, \mathcal{V}
			, r, \varepsilon, s)$ }
		\label{alg:subspaceperturb}
		\begin{algorithmic}[1]
			\REQUIRE{
				model parameter set $\mathcal{W}$, projection matrix sets $\mathcal{U} $ and $\mathcal{V}$, rank $r$, perturbation scale $\varepsilon$, seed $s$. \\
			}
			
			\STATE Reset random number generator with seed $s$ \\
			\FOR{$i = 1, 2, \dots, l$}
			\STATE Generate the perturbation matrix $\boldsymbol{Z}_i  \in \mathbb{R}^{r \times r} $ whose entries are sampled from $\mathcal{N}(0,1)$ \\
			\STATE $\boldsymbol{W}_i \leftarrow \boldsymbol{W}_i + \varepsilon \boldsymbol{U}_i \boldsymbol{Z}_i \boldsymbol{V}_i^\mathsf{T} $\\
			\ENDFOR
			\RETURN $\mathcal{W}$			
		\end{algorithmic}
	\end{algorithm}
	\vspace{-2em} 	
	\begin{algorithm}[H]	
		\caption{SubZero}
		\label{alg:ss_mezo}
		\begin{algorithmic}[1]
			\REQUIRE{
				parameter matrix in the $i$-th layer $\boldsymbol{W}_i \in \mathbb{R}^{m_i \times n_i}, i=1,2,\dots, l$, loss $\mathcal{L}$, step budget $T$, perturbation scale $\varepsilon$, learning rate schedule $\{\eta^t\}$, subspace change frequency $F$, rank $r$. \\
			}           
			
			\FOR{$t = 0, 1, \dots, T-1$}
			\STATE Sample a minbatch $\mathcal{B}^t \subset \mathcal{D}$ and a random seed  $s^t$\\
			\FOR{$i = 1, 2, \dots, l$}
			\IF{$t\mod F  \equiv 0$ } 
			\STATE $\boldsymbol{U}_{i}^{t}$,$\boldsymbol{V}_{i}^{t} \leftarrow $ GenerateProjMatrix$(m_i, n_i, r) $
			\ELSE 
			\STATE $\boldsymbol{U}_{i}^{t} \leftarrow \boldsymbol{U}_{i}^{t-1}$, $\boldsymbol{V}_{i}^{t} \leftarrow \boldsymbol{V}_{i}^{t-1}$
			\ENDIF
			\ENDFOR            
			\STATE // Note that $\mathcal{W}^t=\{\boldsymbol{W}_{i}^{t}\}_{i=1}^{l}$, $\mathcal{U}^t=\{\boldsymbol{U}_{i}^{t}\}_{i=1}^{l}$, \\ ~~~$\mathcal{V}^t=\{\boldsymbol{V}_{i}^{t}\}_{i=1}^{l}$                  
			\STATE 	$\mathcal{W}^t \leftarrow $ PerturbParams $(\mathcal{W}^t, \mathcal{U}^t$, $\mathcal{V}^t, r,  \varepsilon, s^t)$,    \\  $\ell_{+}^t \leftarrow \mathcal{L}(\mathcal{W}^t; \mathcal{B}^t)$\\
			\STATE 	$\mathcal{W}^t \leftarrow $ PerturbParams $(\mathcal{W}^t, \mathcal{U}^t$, $\mathcal{V}^t, r, -2 \varepsilon, s^t)$, \\ $\ell_{-}^t \leftarrow \mathcal{L}(\mathcal{W}^t; \mathcal{B}^t)$\\
			\STATE 	$\mathcal{W}^t \leftarrow $ PerturbParams $(\mathcal{W}^t, \mathcal{U}^t$, $\mathcal{V}^t, r, \varepsilon, s^t)$\\		
			\STATE $\rho^t$ $\leftarrow\left(\ell_{+}^t-\ell_{-}^t\right) /(2 \varepsilon)$   
			\STATE Reset random number generator with seed $s^t$
			\FOR{$i = 1, 2, \dots, l$}                   
			\STATE Regenerate the perturbation matrix $\boldsymbol{Z}_{i}^{t} \in \mathbb{R}^{r \times r}$ whose entries are sampled from $\mathcal{N}(0,1)$
			\STATE $\boldsymbol{W}_i^{t+1} \leftarrow \boldsymbol{W}_i^t - \eta^t \rho^t  \left(\boldsymbol{U}_{i}^{t} \boldsymbol{Z}_{i}^{t} {\boldsymbol{V}_{i}^{t}}^\mathsf{T}\right) $\\                
			\ENDFOR            
			\ENDFOR
			\RETURN $\mathcal{W}^{t+1}$  
		\end{algorithmic}
	\end{algorithm}
	\vspace{-1em}
\end{minipage}

 SubZero maintains just three small matrices per layer: a perturbation matrix $\boldsymbol{Z}_i \in \mathbb{R}^{r \times r}$, and two column-orthogonal matrices $\boldsymbol{U}_i \in \mathbb{R}^{m_i \times r}$ and $\boldsymbol{V}_i \in \mathbb{R}^{n_i \times r}$. This design enhances memory efficiency, as $r$ is generally much smaller than the size of the corresponding parameter matrix $\boldsymbol{W}_i \in \mathbb{R}^{m_i \times n_i}$ (i.e., $r \ll \min\{m_i, n_i\}$). Moreover, we employ in-place operations and per-layer parameter updates to estimate gradients and update parameters in parallel (see Appendix~\ref{app:details}). Consequently, SubZero uses significantly less GPU memory than previous methods while achieving similar or better performance. For example, fine-tuning OPT-1.3B~\citep{zhang2022opt} on SST-2~\citep{socher-etal-2013-recursive} using SGD (without momentum) in full-parameter scheme as shown in Table~\ref{tab:zo_peft}, SubZero requires only 6.8GB GPU memory, compared to 11.5GB for SGD, yielding a $1.6 \times$ improvement in memory efficiency, similar as illustrated in Fig.~\ref{fig:motivation} (d).   

Now we are ready to summarize the overall algorithm of SubZero in Algorithm~\ref{alg:ss_mezo}. Each training step consists of three sequential phases. First, it obtains the projection matrices $\boldsymbol{U}_{i}^{t}$   and  $\boldsymbol{V}_{i}^{t}$ using Algorithm \ref{alg:getortho} or directly adopts previous ones.  Next, it computes the loss value difference $\rho$ with Eqn.~\eqref{eq:cal_rho} by applying Algorithm~\ref{alg:subspaceperturb} to perturb all parameter matrices. Finally, SubZero updates all parameter matrices layer by layer, following Eqn.~\eqref{update}.

\subsection{Integration into Fine-Tuning Schemes}\label{PEFT}
We describe the integration of SubZero into full-parameter tuning~\citep{aghajanyan2021intrinsic} and three prominent PEFT schemes: LoRA~\citep{hu2021lora}, prefix tuning~\citep{li2021prefix}, and prompt tuning~\citep{lester2021power}. Typically, SubZero can be easily incorporated into these fine-tuning schemes. However, it encounters a challenge with extremely non-square parameter matrices, which have far more rows than columns or vice versa. This issue is  particularly prevalent in LoRA, which employs two low-rank matrices $\boldsymbol{A}_i\in\mathbb{R}^{m_i\times k }$ and $\boldsymbol{B}_i\in \mathbb{R}^{k\times n_i}$  to approximate a full matrix $\boldsymbol{W}_i'\in\mathbb{R}^{m_i\times n_i}$, with  $k \ll \min\{m_i,n_i\}$, e.g., $k=8$ while $\min\{m_i,n_i\}=2048$ used in~\citep{zhang2024revisiting}. Consequently, it is impossible to find a smaller rank $r\ll k$ to compute the gradient estimates of $\boldsymbol{A}_i$ and $\boldsymbol{B}_i$ using Eqn.~\eqref{low_rank_pertubation}, imposing a challenge when applying SubZero to  this scenario.

To overcome this limitation, we propose a reshaping strategy that transforms the original non-square matrix into an approximate square matrix. For instance, we reshape  $\boldsymbol{A}_i\in\mathbb{R}^{m_i\times k}$ into $\boldsymbol{A}_i' \in\mathbb{R}^{m_i'\times k'}$ such that $m_ik=m_i'k'$  and $m_i'$ is close to $k'$.  This reshaping allows us to  apply Eqn.~\eqref{low_rank_pertubation} to find a low-rank perturbation with rank $r$ significantly smaller than $\min\{m_i', k'\}$, demonstrating the applicability of SubZero in the scenario. Table~\ref{tab:abla_reshape} in Sec.~\ref{sec:ablation} shows the effectiveness of this reshaping strategy.
\section{Theoretical Analysis}\label{sec:theroy}

In this section, we theoretically analyze why SubZero can reduce the variance of gradient estimates and hence accelerate convergence.
Before the analysis, we first define some necessary notations: 
\begin{align}
	\label{eq:def_P}\mP &= {\rm bdiag} (\boldsymbol{V}_1 \otimes \boldsymbol{U}_1, 
	\cdots, \boldsymbol{V}_l \otimes \boldsymbol{U}_l )\!,  \\
	\vz &= [{\rm vec}(\mZ_1)^{\mathsf{T}}, 
	 \dots, {\rm vec}(\mZ_l)^{\mathsf{T}}]^{\mathsf{T}}\!, \\
	\tilde{\vz} &= [{\rm vec}(\tilde{\mZ}_1)^{\mathsf{T}}, 
	 \dots, {\rm vec}(\tilde{\mZ}_l)^{\mathsf{T}}]^{\mathsf{T}}\!.
\end{align}
Then we first state the main theoretical results on our gradient estimation in Eqn.~\eqref{eq:gradient_estimate}. 

\begin{restatable}{theorem}{theoremmean}
	\label{thm:oracles-mean}
	For the gradient estimation in Eqn.~\eqref{eq:gradient_estimate}, the following two properties hold.  \\
	\textbf{a)} By using gradient estimation in~\eqref{eq:gradient_estimate}, our estimated gradient $\hat{g}_{\varepsilon}(\vx, \mP, \boldsymbol{z})$  is equivalent to 
	\begin{equation}\label{safsad}
		\hat{g}_{\varepsilon}(\vx, \mP, \boldsymbol{z})=\frac{f(\vx+\varepsilon\mP\boldsymbol{z})-f(\vx-\varepsilon\mP\boldsymbol{z})}{2\varepsilon}\mP\boldsymbol{z},
	\end{equation}
	 where   $\boldsymbol{z} \sim \mathcal{N} (\boldsymbol{0} , \mI_q)$, $\varepsilon > 0$, $\mP\in\mathbb{R}^{d\times q}$ satisfies $\mP^{\mathsf{T}}\mP=\mI_q$ with $d=\sum_{i=1}^l m_i n_i$   and $q=lr^2$.\\
	\textbf{b) } 
	Let $\boldsymbol{z} \sim \mathcal{N} (\boldsymbol{0} , \mI_q)$, and $f\in C^{2,2}_{L_2}(\R^d)$. Then we have 
	\begin{align*}
	\Phi(\vx ) \!=\!\| \mathbb{E}_{\boldsymbol{z}}[ \hat{g}_{\varepsilon}(\vx, \mP, \boldsymbol{z}) ] \!-\! \mP\mP^{\mathsf{T}}\nabla f(\vx) \|_2 \!\le\! \frac{\varepsilon^2}{6}L_2(q+4)^2.
	\end{align*}
\end{restatable}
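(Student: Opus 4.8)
The plan is to prove part (a) first by unpacking the block-diagonal Kronecker structure of $\mP$, and then to prove part (b) by a Taylor-expansion argument applied to the ``projected'' directional derivative, closely mirroring the classical smoothing estimates of \citet{nesterov2017random} but carried out on the $q$-dimensional random vector $\mP\vz$.

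For part (a), I would start from the layer-wise estimate $\widehat{\nabla}\mathcal{L}(\boldsymbol{W}_i;\mathcal{B})=\rho\,\boldsymbol{U}_i\boldsymbol{Z}_i\boldsymbol{V}_i^{\mathsf{T}}$ in Eqn.~\eqref{eq:gradient_estimate} and vectorize it. Using the identity ${\rm vec}(\boldsymbol{U}_i\boldsymbol{Z}_i\boldsymbol{V}_i^{\mathsf{T}})=(\boldsymbol{V}_i\otimes\boldsymbol{U}_i){\rm vec}(\boldsymbol{Z}_i)$, stacking over $i=1,\dots,l$ turns the concatenation of the layer estimates into $\mP\vz$ with $\mP={\rm bdiag}(\boldsymbol{V}_1\otimes\boldsymbol{U}_1,\dots,\boldsymbol{V}_l\otimes\boldsymbol{U}_l)$ and $\vz$ the stacked ${\rm vec}(\boldsymbol{Z}_i)$'s. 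Likewise $\tilde{\vz}=\mP\vz$, so the perturbation $\varepsilon\tilde{\mathcal{Z}}$ used inside $\rho$ is exactly $\varepsilon\mP\vz$ in vectorized form; substituting into Eqn.~\eqref{eq:cal_rho} gives Eqn.~\eqref{safsad}. It remains to check the two stated facts about $\mP$: that $d=\sum_i m_in_i$ and $q=lr^2$ are the right dimensions (immediate from the block sizes), and that $\mP^{\mathsf{T}}\mP=\mI_q$. The latter follows because each block satisfies $(\boldsymbol{V}_i\otimes\boldsymbol{U}_i)^{\mathsf{T}}(\boldsymbol{V}_i\otimes\boldsymbol{U}_i)=(\boldsymbol{V}_i^{\mathsf{T}}\boldsymbol{V}_i)\otimes(\boldsymbol{U}_i^{\mathsf{T}}\boldsymbol{U}_i)=\mI_r\otimes\mI_r=\mI_{r^2}$, using column-orthogonality of $\boldsymbol{U}_i$ and $\boldsymbol{V}_i$, and a block-diagonal matrix with orthonormal-columns blocks again has orthonormal columns.

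For part (b), I would write $u=\mP\vz$ and expand $f(\vx\pm\varepsilon u)$ to second order with integral remainder: $f(\vx+\varepsilon u)-f(\vx-\varepsilon u)=2\varepsilon\langle\nabla f(\vx),u\rangle + R(\vx,\varepsilon,u)$, where the Hessian terms cancel by antisymmetry and the remainder is controlled using $f\in C^{2,2}_{L_2}$, giving $|R|\le\frac{\varepsilon^3 L_2}{3}\|u\|^3$ (the standard Nesterov-type bound on the third-order error of the central difference). Dividing by $2\varepsilon$ and multiplying by $u$ yields $\hat g_\varepsilon(\vx,\mP,\vz)=\langle\nabla f(\vx),u\rangle u + \frac{R}{2\varepsilon}u$. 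Taking $\mathbb{E}_{\vz}$: the main term gives $\mathbb{E}_{\vz}[\,uu^{\mathsf{T}}\,]\nabla f(\vx)=\mP\,\mathbb{E}_{\vz}[\vz\vz^{\mathsf{T}}]\,\mP^{\mathsf{T}}\nabla f(\vx)=\mP\mP^{\mathsf{T}}\nabla f(\vx)$, using $\mathbb{E}[\vz\vz^{\mathsf{T}}]=\mI_q$. Hence $\Phi(\vx)=\|\mathbb{E}_{\vz}[\frac{R}{2\varepsilon}u]\|\le\frac{\varepsilon^2 L_2}{6}\mathbb{E}_{\vz}\|u\|^4$. Since $\mP^{\mathsf{T}}\mP=\mI_q$, $\|u\|=\|\mP\vz\|=\|\vz\|$, so $\mathbb{E}\|u\|^4=\mathbb{E}\|\vz\|^4=q(q+2)\le(q+4)^2$ for the $q$-dimensional standard Gaussian, which delivers the bound $\frac{\varepsilon^2}{6}L_2(q+4)^2$.

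The main obstacle is the bookkeeping in part (b): making sure the constant in the third-order remainder of the central difference is exactly $\frac{L_2}{3}\|u\|^3$ (so that after dividing by $2\varepsilon$ and bounding $\mathbb{E}\|u\|^4$ one lands on the factor $\frac{1}{6}$), and verifying the Gaussian moment identity $\mathbb{E}\|\vz\|^4=q(q+2)$ together with the clean—if slightly lossy—upper bound $q(q+2)\le(q+4)^2$. A secondary point requiring care is that the expansion is only valid because $\|\mP\vz\|=\|\vz\|$ (the isometry from part (a)); without $\mP^{\mathsf{T}}\mP=\mI_q$ the moment computation would pick up the spectral data of $\mP$, so part (a) is genuinely used in part (b).
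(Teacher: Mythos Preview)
Your proposal is correct and follows essentially the same route as the paper: part (a) via the vectorization identity ${\rm vec}(\mU_i\mZ_i\mV_i^{\mathsf{T}})=(\mV_i\otimes\mU_i){\rm vec}(\mZ_i)$ together with the block-diagonal assembly (the paper packages these as Lemmas~\ref{lem:UV-proj} and~\ref{lem:block-diag}), and part (b) via the third-order Taylor remainder bound from $C^{2,2}_{L_2}$ smoothness (the paper's Lemma~\ref{lem:C22-func}) combined with the Gaussian moment $\mathbb{E}\|\vz\|^4\le(q+4)^2$ (the paper's Lemma~\ref{lem:gaussnorm1}). The only cosmetic difference is that you compute the exact fourth moment $q(q+2)$ before relaxing to $(q+4)^2$, whereas the paper cites Nesterov's moment bound directly; your explicit remark that $\|\mP\vz\|=\|\vz\|$ is used in the remainder estimate is a point the paper leaves implicit.
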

See its proof in Appendix~\ref{appendix: proofs}. Theorem~\ref{thm:oracles-mean} (a) provides the equivalent form~\eqref{safsad} of our gradient estimation~\eqref{eq:gradient_estimate}. By comparing this with the gradient estimation~\eqref{eq:subspace_zo_estimate_grad} in random subspace optimization~\citep{nozawa2024zeroth, roberts2023direct}, we observe significant differences. First, our gradient estimation~\eqref{safsad} accounts for the layer-wise structure of the network, requiring the projection matrix \(\mP\) to be block-diagonal, whereas in random subspace optimization, \(\mP\) is not. Additionally, our method introduces a layer-wise low-rank perturbation matrix, reflected by the block-diagonal structure of \(\mP\), with lazy updates to the column and row spaces defined by \(\mU_i\) and \(\mV_i\). In contrast, random subspace optimization simply requires \(\mP\) to be random. These distinctions highlight the key differences between our gradient estimation  and existing methods in random subspace optimization.

 Theorem~\ref{thm:oracles-mean} (b) guarantees that the distance \( \Phi(\vx ) \) between the expected gradient estimate and the BP gradient in the subspace spanned by \(\mP\) is small. Moreover, by setting \(\varepsilon = \frac{1}{q+4}\), the distance \( \Phi(\vx ) \) is bounded by a constant \( L_2 / 6 \), independent of the parameter dimension \(d\). This implies that the error in our gradient estimation does not scale with the extremely high parameter dimensions of LLMs, providing accurate gradient estimation—crucial for optimizing LLMs.

Next, we utilize a strictly convex quadratic loss to further analyze our gradient estimation in Eqn.~\eqref{safsad}. This choice is motivated by the fact that, after pretraining, the LLM parameters tend to converge toward a local minimum within a local basin, which can be well-approximated by a quadratic loss~\citep{neyshabur2020being}.
 
\begin{restatable}{theorem}{theoremquadratic}
	\label{thm:oracles-quadratic}
	Let $f(\vx)=\vx^{\mathsf{T}}\mH\vx$ and $\boldsymbol{z} \sim \mathcal{N} (\boldsymbol{0} , \mI_q)$, where $\mH\in\mathbb{R}^{d\times d}$ is positive definite. We have
	\begin{align}
		&\label{eq:quad_dist}
		\mathbb{E}_{\boldsymbol{z}}[ \hat{g}_{\varepsilon}(\vx, \mP, \boldsymbol{z}) ]  = \mP\mP^{\mathsf{T}}\nabla f(\vx) ,\\
	    &\label{eq:quad_var}	
		\mathbb{E}_{\boldsymbol{z}}[ \| \hat{g}_{\varepsilon}(\vx, \mP, \boldsymbol{z}) \|^2 ] = (q+2) \| \mP^{\mathsf{T}}\nabla 	f(\vx) \|^2 ,\\
        &\label{eq:quad_cosine_sim}
		\mathbb{E}_{\boldsymbol{z}} \left[ \frac{\langle \nabla f(\vx), \hat{g}_{\varepsilon}(\vx, \mP, \boldsymbol{z}) 	\rangle^2}{\| \mP^{\mathsf{T}}\nabla f(\vx) \|^2 \| \hat{g}_{\varepsilon}(\vx, \mP, \boldsymbol{z}) \|^2} \right]  = \frac{1}{q} .
	\end{align}
	
\end{restatable}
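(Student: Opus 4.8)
The plan is to exploit the orthonormality condition $\mP^{\mathsf{T}}\mP = \mI_q$ established in Theorem~\ref{thm:oracles-mean}(a), which reduces the analysis of the equivalent estimator~\eqref{safsad} to a computation that is essentially one-dimensional along the direction $\mP^{\mathsf{T}}\nabla f(\vx)$. First I would substitute $f(\vx)=\vx^{\mathsf{T}}\mH\vx$ into~\eqref{safsad}: since $f$ is an exact quadratic, the finite-difference quotient $\frac{f(\vx+\varepsilon\mP\vz)-f(\vx-\varepsilon\mP\vz)}{2\varepsilon}$ telescopes exactly to $\langle \nabla f(\vx), \mP\vz\rangle = 2\vx^{\mathsf{T}}\mH\mP\vz$, with the $\varepsilon$-dependent quadratic terms cancelling. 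Hence $\hat g_\varepsilon(\vx,\mP,\vz) = \langle \nabla f(\vx), \mP\vz\rangle\,\mP\vz$, which is independent of $\varepsilon$; this is the key simplification making all three identities clean.

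Next I would push everything through the orthonormal change of variables. Write $\vu := \mP^{\mathsf{T}}\nabla f(\vx)\in\R^q$. For~\eqref{eq:quad_dist}: $\mathbb{E}_{\vz}[\langle \nabla f(\vx),\mP\vz\rangle \mP\vz] = \mP\,\mathbb{E}_{\vz}[(\vu^{\mathsf{T}}\vz)\vz] = \mP\vu = \mP\mP^{\mathsf{T}}\nabla f(\vx)$, using $\mathbb{E}[\vz\vz^{\mathsf{T}}]=\mI_q$. For~\eqref{eq:quad_var}: $\|\hat g_\varepsilon\|^2 = \langle \nabla f(\vx),\mP\vz\rangle^2 \|\mP\vz\|^2 = (\vu^{\mathsf{T}}\vz)^2 \|\vz\|^2$, since $\|\mP\vz\|^2 = \vz^{\mathsf{T}}\mP^{\mathsf{T}}\mP\vz = \|\vz\|^2$; then I would compute $\mathbb{E}[(\vu^{\mathsf{T}}\vz)^2\|\vz\|^2]$ for standard Gaussian $\vz$, which by rotational invariance equals $\|\vu\|^2\,\mathbb{E}[\vz_1^2(\vz_1^2+\cdots+\vz_q^2)] = \|\vu\|^2(3 + (q-1)) = (q+2)\|\vu\|^2$, giving $(q+2)\|\mP^{\mathsf{T}}\nabla f(\vx)\|^2$. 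For~\eqref{eq:quad_cosine_sim}: the numerator is $\langle \nabla f(\vx), \hat g_\varepsilon\rangle^2 = \langle \nabla f(\vx), \mP\vz\rangle^4 = (\vu^{\mathsf{T}}\vz)^4$ (since $\langle\nabla f(\vx),\mP\vz\rangle$ factors out and $\langle\nabla f(\vx),\langle\nabla f(\vx),\mP\vz\rangle\mP\vz\rangle = \langle\nabla f(\vx),\mP\vz\rangle\,\vu^{\mathsf{T}}\vz$), while the denominator is $\|\vu\|^2 (\vu^{\mathsf{T}}\vz)^2\|\vz\|^2$; the ratio simplifies to $\frac{(\vu^{\mathsf{T}}\vz)^2}{\|\vu\|^2\|\vz\|^2}$. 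I would then recognize this as the squared cosine between a fixed vector $\vu$ and an isotropic Gaussian vector $\vz$ in $\R^q$, whose expectation is the classical value $1/q$ (e.g.\ via the Dirichlet/Beta distribution of $\vz_1^2/\|\vz\|^2$, which has mean $1/q$).

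The main obstacle is not conceptual but bookkeeping: I need to be careful that the positive-definiteness of $\mH$ is actually used only to ensure $f$ is a genuine strictly convex quadratic (so that $\nabla f(\vx)=2\mH\vx$ is the relevant gradient and the statement is non-degenerate), and that the symmetry $\mH=\mH^{\mathsf{T}}$ is what makes the finite difference telescope exactly — an asymmetric $\mH$ would still telescope but $\nabla f$ would be $(\mH+\mH^{\mathsf{T}})\vx$, so I should either assume $\mH$ symmetric WLOG or carry the symmetrization. The only genuinely computational steps are the two fourth-moment Gaussian identities $\mathbb{E}[\vz_1^2\|\vz\|^2]=q+2$ and $\mathbb{E}[(\vu^{\mathsf{T}}\vz)^2/(\|\vu\|^2\|\vz\|^2)]=1/q$; both follow from rotational invariance of the standard Gaussian and are standard, so I would state them as lemmas (or cite Nesterov--Spokoiny-style moment bounds) rather than rederive them in detail. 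Everything else is linear algebra driven by $\mP^{\mathsf{T}}\mP=\mI_q$.
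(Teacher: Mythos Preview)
Your proposal is correct and follows essentially the same route as the paper: both first observe that for the quadratic $f$ the finite difference collapses exactly to $\hat g_\varepsilon(\vx,\mP,\vz)=\langle \mP^{\mathsf{T}}\nabla f(\vx),\vz\rangle\,\mP\vz$, then use $\mP^{\mathsf{T}}\mP=\mI_q$ together with rotational invariance of the standard Gaussian to reduce each of the three identities to the elementary moment computations $\mathbb{E}[\vz\vz^{\mathsf{T}}]=\mI_q$, $\mathbb{E}[(\vu^{\mathsf{T}}\vz)^2\|\vz\|^2]=(q+2)\|\vu\|^2$, and $\mathbb{E}[\vz_1^2/\|\vz\|^2]=1/q$. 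The paper packages the second identity as a separate lemma (your $q+2$ computation is exactly its proof), and obtains the third by the same symmetry argument you allude to; your remarks on the role of the symmetry/positive-definiteness of $\mH$ are accurate but not something the paper dwells on.
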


See its proof in Appendix~\ref{appendix: proofs}. Theorem~\ref{thm:oracles-quadratic} demonstrates several advantageous properties of our gradient estimation on the quadratic function. First, Eqn.~\eqref{eq:quad_dist} establishes the equivalence between the expected gradient estimation and the BP gradient within the subspace spanned by our projection matrix \(\mP\). Second, Eqn.~\eqref{eq:quad_var} shows that, in this subspace, the variance of the gradient estimation scales linearly with the subspace dimension \(q\). In contrast, the variance of gradient estimation~\eqref{eq:mezo_estimate_grad} in MeZO depends linearly on the model's parameter dimension \(d\), which is significantly larger than \(q\). Finally, Eqn.~\eqref{eq:quad_cosine_sim} reveals that the expected cosine similarity between our estimated gradient and the BP gradient within the subspace depends only on the subspace dimension \(q \ll d\), indicating that our gradient estimation provides a highly accurate parameter update direction.

Building upon the above results, we can prove the convergence of our SubZero. 
\begin{restatable}{theorem}{theoremconvergence}
	\label{thm:convergence}	
	Let $f \in C^{1,1}_{L_1}(\mathbb{R}^d)$ be a non-convex function bounded below by $f^*$. Suppose $\mathcal{E}_{k} = (\boldsymbol{z}_0, \boldsymbol{z}_1, \cdots, \boldsymbol{z}_{k})$ with $\boldsymbol{z}_{k} \sim \mathcal{N}(\boldsymbol{0}, \mI_q)$, and let $\mathcal{P}_j = (\mP_0, \mP_1, \cdots, \mP_{j})$, where $\mP_j$ is defined in~\eqref{eq:def_P} with a fixed update frequency $F$. Then, the sequence $\{\vx_{k}\}_{k>0}$ generated by Algorithm~\ref{alg:ss_mezo} satisfies:
	\begin{align*}
		\frac{1}{T} \sum_{k=0}^{T - 1} \mathbb{E}_{\mathcal{E}_{k}, \mathcal{P}_{\left \lfloor k / F \right \rfloor }} \left[\| \nabla f(\vx_{k}) \|^{2}\right] \leq \epsilon
	\end{align*}
	with $T = \mathcal{O}\left(\frac{d}{\epsilon}\right)$ if the perturbation scale satisfies $\varepsilon \leq \mathcal{O} \left(\frac{\epsilon^{1/2}}{q^{3/2} d^{1/2} L_1^{3/2}}\right)$. Here $T = KF$, where $K$ denotes the total number of subspace updates.
\end{restatable}
See its proof in Appendix~\ref{appendix: proofs}. Theorem~\ref{thm:convergence} guarantees the convergence of our SubZero when the projection matrix $\boldsymbol{P}$ is updated at a fixed frequency $F$.

\section{Experiments}\label{sec:experiment}

\begin{table*}[t]
	\caption{Performance of fine-tuning OPT-13B on SuperGLUE with various experimental settings (with 1000 examples). AVG: average relative percentage difference with MeZO of all tasks.}
\vspace{-1em}	 
\label{tab:opt13b_performance}
\vspace{-0.7em}
\begin{center}
	\setlength{\tabcolsep}{4pt}
	\renewcommand{\arraystretch}{0.9} 
	\resizebox{1.0\linewidth}{!}{
		\begin{tabular}{l|ccccccccccc|c}
			\toprule
			Task type & \multicolumn{7}{c}{------------------------ classification ------------------------} & \multicolumn{2}{c}{-- multiple choice --} & \multicolumn{2}{c|}{--- generation ---} &\\ 
			Task & 
			\multicolumn{1}{c}{\bf SST-2} & 
			\multicolumn{1}{c}{\bf RTE} & 
			\multicolumn{1}{c}{\bf CB} & 
			\multicolumn{1}{c}{\bf BoolQ} & 
			\multicolumn{1}{c}{\bf WSC} & 
			\multicolumn{1}{c}{\bf WIC} & 
			\multicolumn{1}{c}{\bf MultiRC} & 
			\multicolumn{1}{c}{\bf COPA} & 
			\multicolumn{1}{c}{\bf ReCoRD} & 
			\multicolumn{1}{c}{\bf SQuAD} & 
			\multicolumn{1}{c|}{\bf DROP} &
			\multicolumn{1}{c}{\bf AVG.} \\
			\midrule
			SGD(FT) & 94.9 & 82.3& 85.7&78.4 & 65.3& 65.8&74.2 & 90.0& 82.4& 88.0& 35.5& -\\                      
			\midrule
			Zero-shot&58.8&59.6&46.4&59.0&38.5&55.0&46.9&80.0&81.2&46.2&14.6 & -\\
			ICL&87.0&62.1&57.1&66.9&39.4&50.5&53.1&87.0&82.5&75.9&29.6 & -\\
			LP&93.4&68.6&67.9&59.3&63.5&60.2&63.5&55.0&27.1&3.7&11.1 &-\\
			\midrule
			MeZO(FT) & 92.1  & 71.5 & 71.4 &74.4 &61.5 &60.0 &60.1 & 87.0 & 82.0&84.2 & 31.2 &0\%\\
			ZO-AdaMU(FT) & 92.1 & 72.9&  67.9 & 73.0 & 61.5 & 60.7 & 63.0 & \bf{89.0} & \bf{83.0} & 82.4 & \bf{32.0} &0.46\%\\
			S-MeZO(FT) & 92.3 & \bf{76.9} & \bf{75.0} & \bf{76.5}& 61.1& 58.2& \bf{63.3}&87.0 &71.2 &77.9 & 31.9 & -0.10\%\\
			HiZOO(FT) & 91.3 & 69.3 & 69.4 & 67.3 & 63.5 & 59.4 & 55.5 & 88.0 & 81.4 & 81.9 & 31.3 & -2.15\%\\
			LOZO(FT) & \bf{92.9} & 73.6 & 71.4 & 70.7 & 63.5 & 60.2 & 60.3 & 87.0 & 81.7 & \bf{84.5} & 30.7 & 0.10\%\\
			SubZero(FT) & 92.1 & 74.0& 73.2 & 75.3& \bf{65.4} & \bf{60.8} & 61.0& 88.0  & 82.3 & \bf{84.5} & \bf{32.0} &\bf{1.89\%}\\
			\midrule
			MeZO(LoRA) & 92.2  & 74.4& 69.6&75.2 &64.4 &59.7 &58.2 &87.0 & 82.0 & 82.9 & 31.0& 0\%\\
			ZO-AdaMU(LoRA) & 88.0 & 72.0 & 71.6 & 72.6 & 60.1 & 56.4 & 58.9 & 88.0 & \bf{83.2} & 76.8 & \bf{32.4} & -1.78\%\\
			S-MeZO(LoRA) & 90.8 & 62.2 & \bf{75.0} & 72.9 &51.9 &55.8&  56.4& 86.0 & 69.9 & 76.4 & 31.7 & -5.79\%\\
			HiZOO(LoRA)  &90.6 &67.5 &69.6 &70.5 &63.5 &60.2 &60.2 &87.0 &81.9 & \bf{83.8}& 31.2& -1.16\%\\
			SubZero(LoRA) & \bf{93.8} & \bf{75.5} &  71.4& \bf{76.1}& \bf{65.4} & \bf{60.3} & \bf{60.3} & \bf{89.0} & 81.9 & 83.7 & 31.3 & \bf{1.57\%}\\
			\bottomrule
	\end{tabular}}
\end{center}
\vspace{-1.5em}
\end{table*}

\begin{figure*}[t]	
	\centering	
	\hfill
	\begin{subfigure}{0.24\textwidth}
		\includegraphics[width=\textwidth]{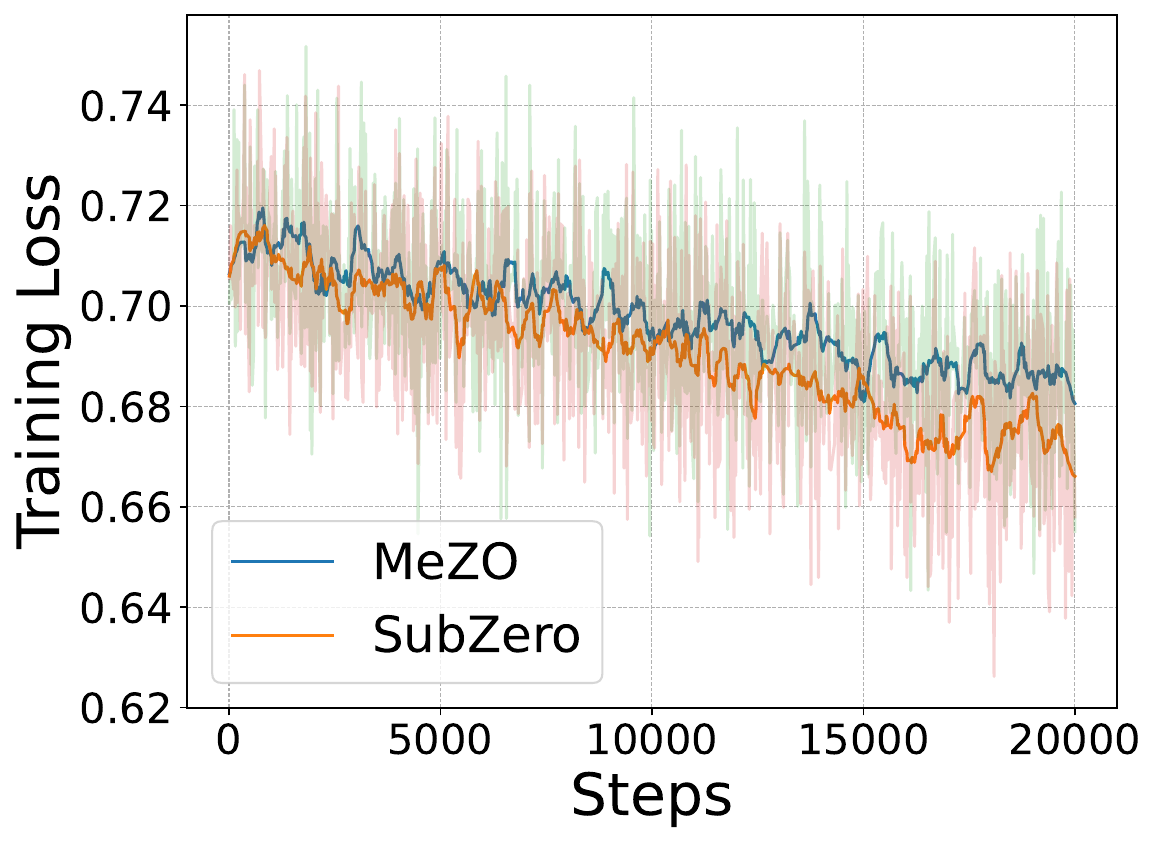}
		\caption{WIC: OPT-13B, FT}		
	\end{subfigure}
	\hspace{0.01em}
	\begin{subfigure}{0.24\textwidth}
		\includegraphics[width=\textwidth]{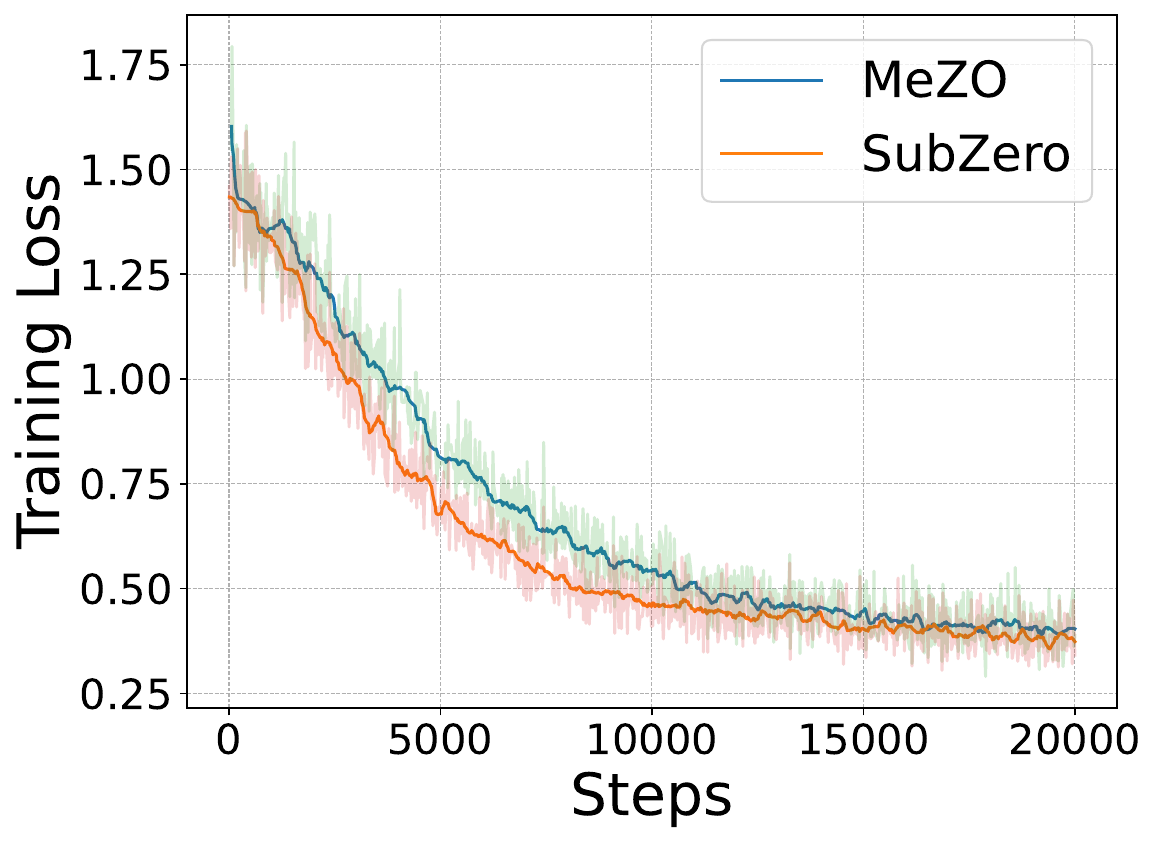}
		\caption{SQuAD: OPT-13B, LoRA}	
	\end{subfigure}
	\hspace{0.01em}
	\begin{subfigure}{0.235\textwidth}
		\includegraphics[width=\textwidth]{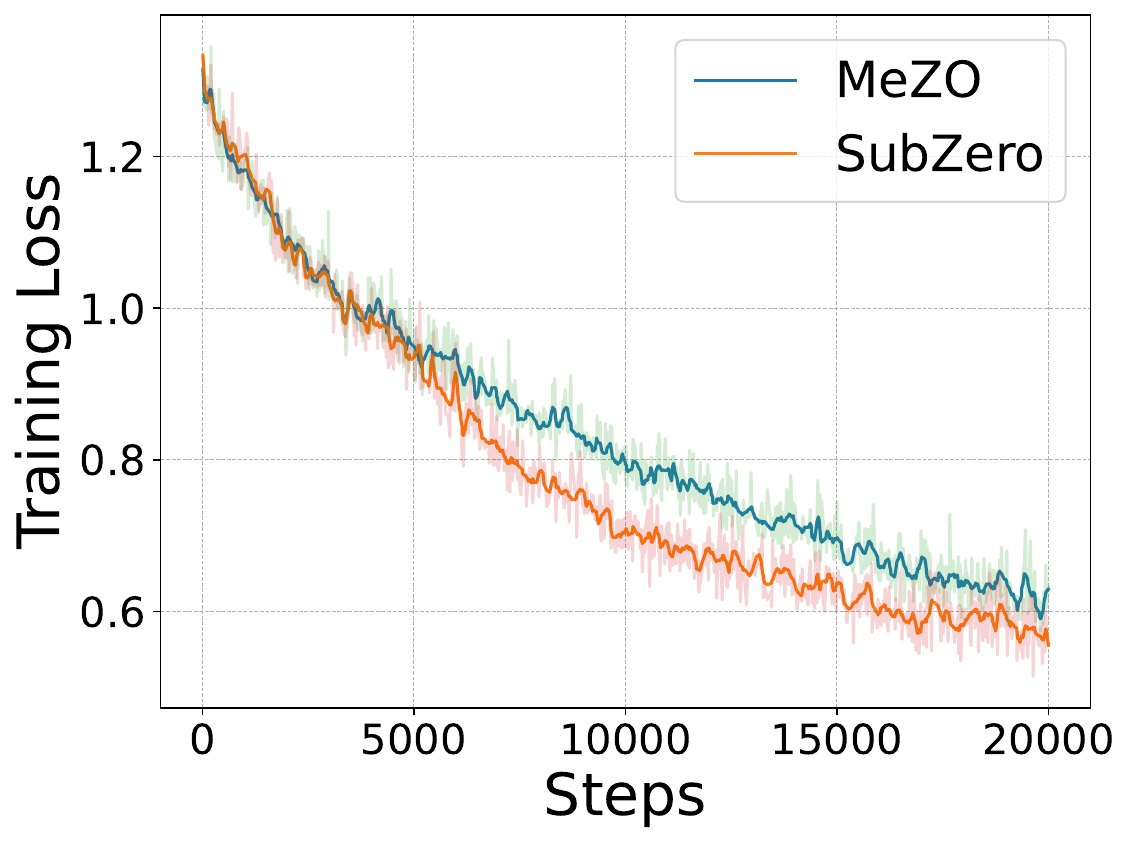}
		\caption{CB: LLaMA2-7B, Prompt}
	\end{subfigure}	
	\hspace{0.01em}
	\begin{subfigure}{0.244\textwidth}
		\includegraphics[width=\textwidth]{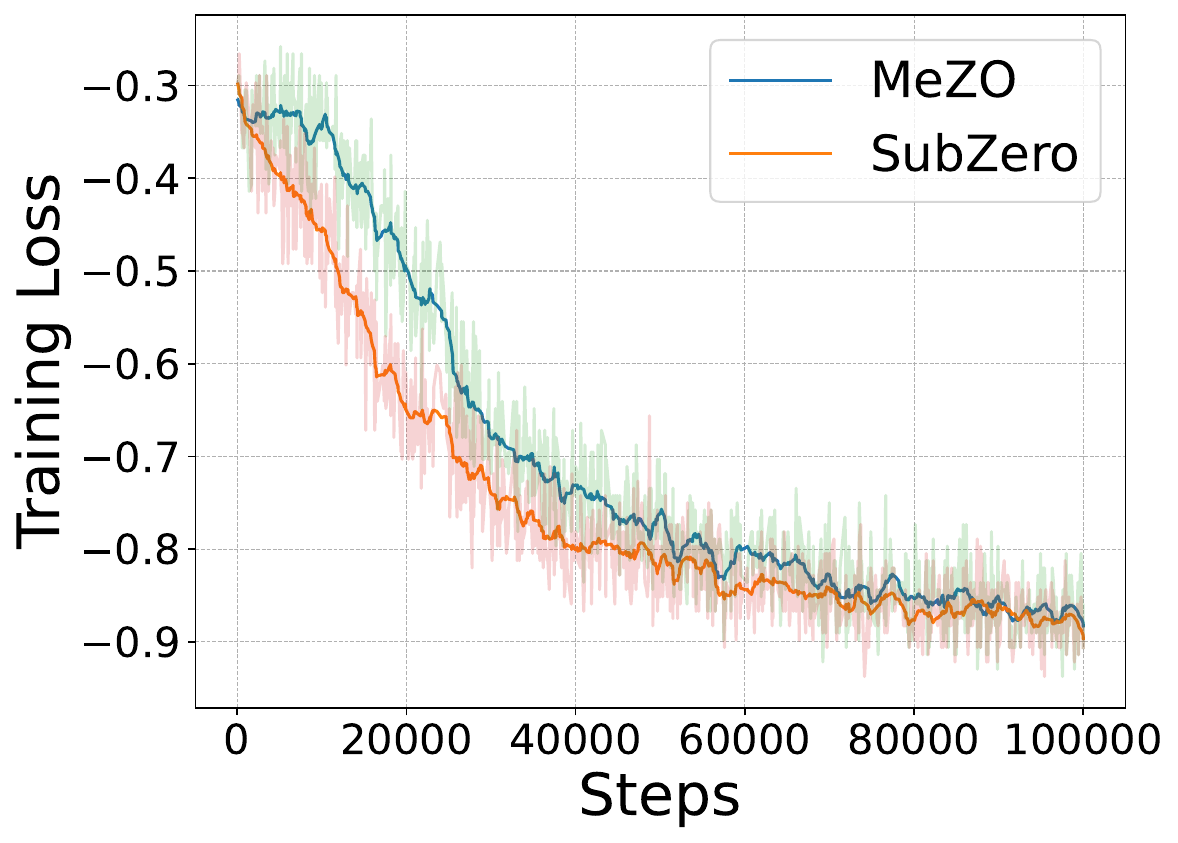}
		\caption{MNLI: RoBERTa, FT, Non-diff}
	\end{subfigure}	

	\vspace{-0.8em}
	\caption{  
		Training loss curves of MeZO and SubZero.}
	\label{fig:train_loss}
	\vspace{-1.6em}
\end{figure*}

\begin{table*}[t]
	\begin{small}
	\renewcommand{\arraystretch}{0.9} 
	\begin{minipage}[t]{0.7\textwidth} 
	\caption{Performance of fine-tuning LLaMA2-7B and Mistral-7B on CB, and OPT-1.3B on SST-2.}
	\vspace{-1em}
	\label{tab:zo_peft}
	\setlength{\tabcolsep}{1.5pt} 
	\centering
	\begin{tabular}{l|cccc|cccc|cccc}
	\toprule
	& \multicolumn{4}{c|}{LLaMA2-7B} & \multicolumn{4}{c|}{ Mistral-7B} & \multicolumn{4}{c}{OPT-1.3B} \\
	& \bf{FT} & \bf{LoRA} & \bf{Prefix} & \bf{Prompt} & 
	\bf{FT} & \bf{LoRA} & \bf{Prefix} & \bf{Prompt} & \bf{FT} & \bf{LoRA} & \bf{Prefix} & \bf{Prompt} \\
	\midrule
	SGD &69.6& 75.0 & 69.6& 69.6&73.2 & 75.0& 69.6& 62.5 & 93.2 & 93.0 & 93.1 & 90.7\\
	\midrule
	MeZO & 64.3& 73.2& 69.6 & 60.7 & 62.5 &69.6 & 58.3&57.1 &92.3 & 92.8& 91.6 & 85.9\\
	SubZero & \bf{71.4} & \bf{75.0} &  \bf{76.8} & \bf{66.1} & \bf{64.3} & \bf{73.2} & \bf{64.3 }& \bf{62.5} &\textbf{93.4}& \textbf{92.9}& \textbf{92.2 } & \textbf{89.1}\\
	\bottomrule
	\end{tabular}
	\end{minipage}
	\hfill
	\begin{minipage}[t]{0.28\textwidth}
	\centering
	\caption{Orthogonal projection matrix.}
	\vspace{-1em}
	\label{tab:abla_projtype}
	\setlength{\tabcolsep}{1.5pt} 
	\renewcommand{\arraystretch}{0.9} 
	\begin{tabular}{l|cc}
	\toprule
	Dataset  & Ortho. & Accuracy \\
	\midrule
	\multirow{ 2}{*}{RTE} &  \ding{55} & 67.5 \\
	& \ding{51} & \textbf{74.0} \\
	\midrule
	\multirow{ 2}{*}{WSC} & \ding{55} & 59.6 \\
	& \ding{51} & \textbf{65.1} \\
	\bottomrule
	\end{tabular}
	\end{minipage}
	\end{small}
	\vspace{-1em}
	\end{table*}

\begin{table}[t]	
	\vspace{-0.6em}
	\caption{Fine-tuning performance comparison between  SubZero and MeZO on RoBERTa-large and OPT-13B with non-differentiable objectives. }
	\vspace{-2em}
	\label{tab:non-diff performance}
	\begin{small}
	\begin{center}
		\setlength{\tabcolsep}{2.2pt}
		\resizebox{1\linewidth}{!}{
			\begin{tabular}{l|cccc|c}
				\toprule
				Model & \multicolumn{4}{c|}{RoBERTa-large} & OPT-13B \\
				Task & 
				\bf{SST-2} & 
				\bf{SST-5} & 
				\bf{SNLI}&
				\bf{MNLI} &
				\bf{SQuAD} \\
				\midrule			
				Zero-shot & 79.0 &35.5& 50.2 &48.8 & 46.2 \\
				Cross entropy (Adam)& 93.9 &55.9& 88.7& 83.8& 84.2 \\
				Cross entropy (MeZO) &92.9 &53.2 & 83.0& 77.0&84.2\\
				Cross entropy (SubZero)  & {93.4} & {54.0}& {84.7}& {77.4} & {84.5}  \\
				\midrule
				Accuracy/F1 (MeZO) & 92.4 &46.5 &81.9 &73.9 & 80.2 \\
				Accuracy/F1 (SubZero) & \textbf{92.7} & \textbf{47.1} & \textbf{83.0}& \textbf{74.8} & \textbf{81.1}\\
				\bottomrule
			\end{tabular}
		}
	\end{center}
	\end{small}
	\vspace{-2.5em}
\end{table}

In this section, we present comprehensive experiments to evaluate the effectiveness of SubZero. We conduct our experiments using medium-sized masked LLMs (RoBERTa-large~\citep{liu2019roberta}) and large-scale autoregressive LLMs (OPT-1.3B and 13B~\citep{zhang2022opt}, LLaMA2-7B~\citep{touvron2023llama}, and Mistral-7B~\citep{jiang2023mistral}). Our exploration covers full-parameter tuning (FT)~\citep{aghajanyan2021intrinsic} and three PEFT schemes: LoRA~\citep{hu2021lora}, prefix tuning~\citep{li2021prefix}, and prompt tuning~\citep{lester2021power}. For comparison, we include leading ZO methods, such as MeZO~\citep{malladi2023fine}, ZO-AdaMU~\citep{jiang2024zo}, S-MeZO~\citep{liu2024sparse},  HiZOO~\citep{zhao2024second}, and LOZO~\citep{chen2024enhancing} alongside inference-only memory-efficient baselines like zero-shot, in-context learning (ICL)~\citep{brown2020language}, and linear probing (LP)~\citep{kumar2022fine}. As the first and most popular ZO optimizer for LLM fine-tuning, MeZO is considered our primary competitor. We also use the FO optimizer SGD as a benchmark. Since appropriate prompts are critical for ZO optimization~\citep{malladi2023fine,zhang2024revisiting}, all experiments incorporate prompt templates. 
Since a larger batch size reduces the variance of the estimated gradients, all compared methods use a batch size of 16 unless otherwise specified. All experimental settings are detailed in Appendix~\ref{app:details}-\ref{sec:prompt}. 

\subsection{Results under Different Experimental Settings}

Following the settings in MeZO~\citep{malladi2023fine}, we evaluated SubZero using OPT-13B on the SuperGLUE benchmark~\citep{wang2019superglue}, which covers a diverse range of tasks, including classification, multiple-choice, and generation, as outlined in Table~\ref{tab:opt13b_performance}.  The ZO methods were applied to both full-parameter tuning (FT) and LoRA fine-tuning schemes. The comparisons with vanilla LoRA and SGD with gradient accumulation  are provided in Appendix~\ref{sec:add_results}.

Table~\ref{tab:opt13b_performance} presents the key findings, highlighting the best-performing ZO method in bold. The results show that ZO techniques significantly outperform baseline approaches like zero-shot, in-context learning, and linear probing, underscoring their ability to enhance a pre-trained model’s performance on downstream tasks. From Table~\ref{tab:opt13b_performance}, one can also observer that MeZO, the first ZO optimizer for LLM fine-tuning, is highly competitive after carefully tuning its hyperparameters. Only ZO-AdaMU and LOZO, apart from SubZero, outperform MeZO in FT scheme.  SubZero consistently surpasses MeZO across all tasks and fine-tuning schemes. 
In FT scheme, S-MeZO showed competitive performance on several classification tasks. However, its performance on ReCoRD remained unsatisfactory even after hyperparameter tuning.  
Excluding ReCoRD, SubZero still outperforms S-MeZO with 2.05\%  vs. 1.20\% in FT scheme and 1.74\% vs. -4.90\% in LoRA scheme.   
 
We further extended our evaluation of SubZero using OPT-1.3B, LLaMA2-7B, and Mistral-7B in FT and three PEFT schemes: LoRA, prefix tuning, and prompt tuning. As shown in Table~\ref{tab:zo_peft}, SubZero outperformed MeZO across all models and fine-tuning schemes. Notably, while MeZO struggled in the prompt tuning scheme, SubZero excelled, achieving performance levels that closely matched those of the SGD optimizer.

We also present several training loss curves in \mbox{Fig.~\ref{fig:train_loss} (a)-(c)}, demonstrating that SubZero generally converges faster and achieves lower losses compared to MeZO. 
 
\subsection{Results on  Non-Differentiable Objectives}
Following MeZO~\citep{malladi2023fine}, we apply SubZero to fine-tune RoBERTa-large and OPT-13B using full-parameter fine-tuning with two non-differentiable objectives—optimizing accuracy in classification tasks and F1 in the SQuAD task. As baselines, we also report results using the differentiable cross-entropy objective with Adam, MeZO, and SubZero. As shown in Table~\ref{tab:non-diff performance}, SubZero consistently outperforms MeZO with the two non-differentiable objectives. The training loss curves in \mbox{Fig.~\ref{fig:train_loss} (d)} also demonstrate the advantage of SubZero over MeZO.


\subsection{Memory Usage and Wall-Clock Time Analysis}\label{sec:memory_and_time}
We compare the memory consumption and wall-clock time of ZO methods (MeZO and SubZero), SGD, and inference-only approaches (zero-shot and in-context learning (ICL)) using OPT-13B (see Table~\ref{tab:memoryusage} in Appendix~\ref{sec:add_results}). Since inference-only methods do not involve fine-tuning, they have zero wall-clock time and their memory usage reflects only the inference load. For fine-tuning, all methods were run for 20K steps. The ZO methods, including SubZero, achieved over a 1.8× reduction in memory usage compared to SGD. Notably, SubZero’s memory footprint closely aligns with MeZO’s, while offering improved performance. We use per-layer weight updates for MeZO and SubZero (see Appendix~\ref{app:details}), resulting in nearly identical memory usage for FT and LoRA schemes when one decimal place is reserved.

Although SubZero introduces additional computational overhead for generating projection matrices via QR decomposition, the observed extra time overhead remains below 9\% across all OPT model sizes, with a mean value of 4.79\% of the total wall-clock time (see Table~\ref{tab:opt_QR_comparison} in Appendix~\ref{sec:add_results}). 

Note that due to differences in how steps are defined between ZO methods and SGD, direct wall-clock time comparisons between the two are not entirely meaningful.

 \subsection{Ablation Study}\label{sec:ablation}
  
 We conducted a thorough investigation of the effectiveness of our techniques. Table~\ref{tab:abla_projtype} shows that using a column-orthogonal projection matrix significantly outperforms a Gaussian random projection matrix, primarily due to the low-rank structure of the perturbation matrices (see experimental settings in Appendix~\ref{sec: hyperparam}). This low-rank perturbation is key to improving the quality of gradient estimation. 
 
 Next, Table~\ref{tab:abla-rank-T} explores the effects of subspace rank \(r\) and update frequency \(F\) in Algorithm~\ref{alg:ss_mezo}. The results demonstrate that SubZero is robust to variations in the subspace rank. However, performance drops sharply when the update frequency is too low, as the optimization becomes constrained to a single subspace for too long, limiting its adaptability. 
 
\begin{table}[ht] 
	\begin{small}
    \vspace{-0.6em}
    \begin{minipage}{0.2\textwidth}
        \centering
		\begin{small}
        \caption{Subspace change frequency $F$ and rank $r$.}
        \vspace{-1em}
        \label{tab:abla-rank-T}
        \setlength{\tabcolsep}{2.5pt}
        \renewcommand{\arraystretch}{0.92} 
        \resizebox{0.87\textwidth}{!}{
            \begin{tabular}{l|ccc}
                \toprule
                $F$ \textbackslash~$r$ & 32 & 64 & 128 \\
                \midrule
                500 & \textbf{72.6} & 70.0  & 72.2\\
                1000& 73.6 & 71.8& \textbf{74.0}\\
                2000 & 72.2 &\textbf{73.3} &72.2\\
                \midrule
                20000 & 70.4& \textbf{71.1}&68.6 \\
                \bottomrule
            \end{tabular}
        }
		\end{small}
    \end{minipage}
    \hspace{0.5em}
    \begin{minipage}{0.25\textwidth}
        \centering
        \caption{Reshaping strategy for non-square matrices on SST-2 with OPT-1.3B in PEFT schemes.}
        \vspace{-0.75em}
        \label{tab:abla_reshape}
        \setlength{\tabcolsep}{1pt}
        \renewcommand{\arraystretch}{0.9} 
        \resizebox{\textwidth}{!}{
            \begin{tabular}{l|ccc}
                \toprule
                Method & LoRA & Prefix & Prompt \\
                \midrule
                MeZO & 92.8 & 91.6 & 85.9 \\
                SubZero(w/o) & 92.1& 89.4& 74.2 \\
                SubZero(w/) & \textbf{92.9} & \textbf{92.2} & \textbf{89.1} \\
                \bottomrule
            \end{tabular}
        }
    \end{minipage}
    \vspace{-0.65em}
\end{small}
\end{table}

 Finally, Table~\ref{tab:abla_reshape} underscores the critical role of the reshaping strategy for handling highly non-square perturbation matrices,  essential for ensuring effective perturbations in different layers of the model. Together, these results highlight the improvements brought by our design choices, particularly in terms of projection and reshaping strategies, and their impact on SubZero's robustness and performance.  
 
 Due to limited space, the ablation studies of SubZero on random seed, batch size, and combination with Adam are given in Appendix~\ref{sec:add_results}.   
 
\section{Conclusion} 
We have demonstrated that SubZero effectively fine-tunes large LLMs across various tasks and schemes with a memory cost comparable to that of inference. Extra experiments indicate that SubZero can optimize non-differentiable objectives. Our theory explains how SubZero reduces the variance of gradient estimates and hence accelerates convergence. 
 
\setlength{\parskip}{0.4\baselineskip}
\noindent\textbf{Limitation.}  In addition to the representative first-order and primitive zero-order optimizers, we have yet to investigate the combinations of SubZero with other first-order and zero-order optimizers to evaluate the implications on convergence speed. While SubZero is compatible with memory-efficient techniques like parameter quantization~\citep{li2024memory}, we have not thoroughly explored the practical effects of these combinations. A theoretical analysis of the reshaping strategy is valuable but left for future work.

\clearpage

\section*{Acknowledgments}
This work is supported by the NSF of China (grant nos. 62131003 and 62102034), Beijing Municipal Science and Technology Project (Z241100001324011), the Ministry of Education, Singapore, under its AcRF Tier 2 Funding (Proposal ID: T2EP20224-0048). Any opinions, findings and conclusions or recommendations expressed in this material are those of the author(s) and do not reflect the views of the Ministry of Education, Singapore.

{
    \small
    \bibliographystyle{ieeenat_fullname}
    \bibliography{main}
}

\clearpage
\onecolumn  
\section{Appendix}

\subsection{Additional Results}\label{sec:add_results}

\noindent\textbf{Empirical Evidence of BP Gradients Converging to Subspaces}

Previous studies have shown that BP gradients during fine-tuning of LLMs rapidly converge to low-dimensional subspaces. Building on this, our empirical investigation using the OPT-1.3B architecture on the SST-2 dataset uncovers a persistent low-rank structure in gradient matrices throughout the optimization process. As shown in Fig.~\ref{fig:appendix_lowrank_grad}, singular value decomposition (SVD) of gradient matrices across layers consistently reveals pronounced spectral decay, with only a small subset of singular values dominating the gradient spectrum during LLM fine-tuning with SGD.

\begin{figure*}[ht]	
	\centering	
	\begin{subfigure}{0.39\textwidth}
		\includegraphics[width=\textwidth]{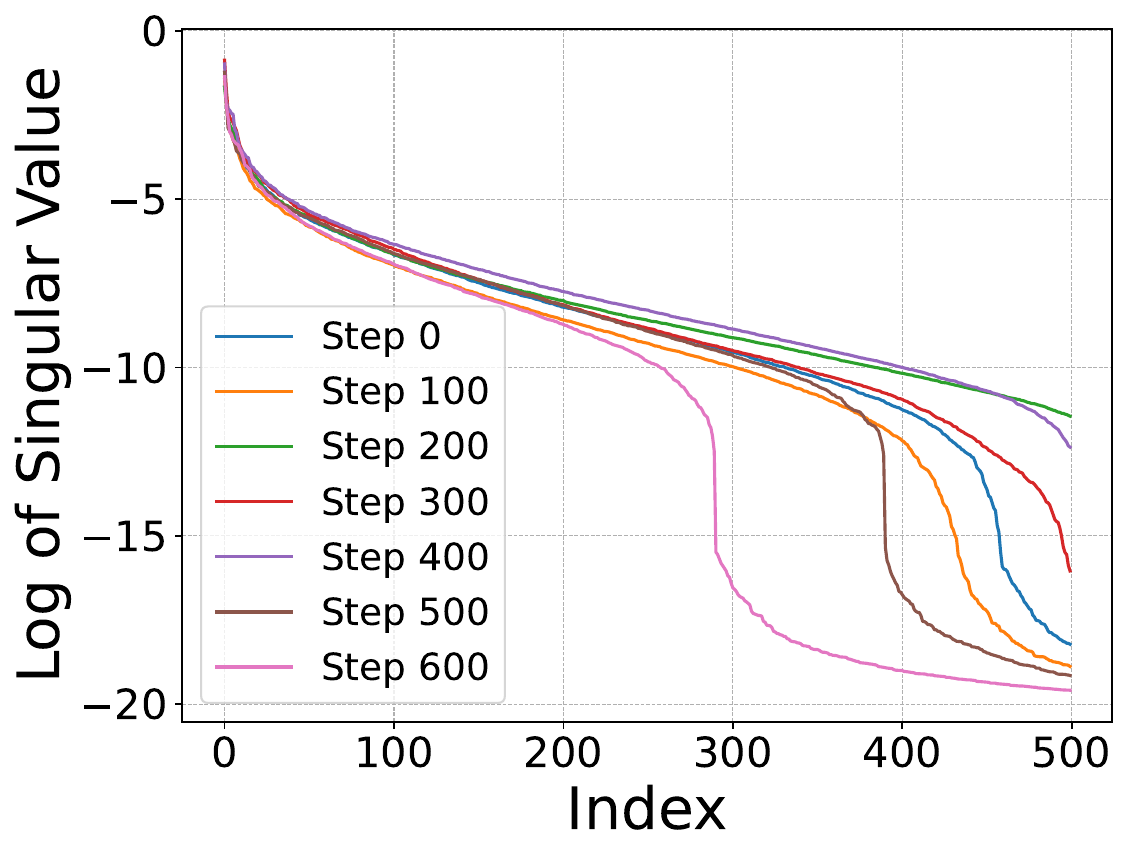}
		\caption{Singular values of \texttt{k\_proj} matrix for Layer 22.}
	\end{subfigure}
	\begin{subfigure}{0.39\textwidth}
		\includegraphics[width=\textwidth]{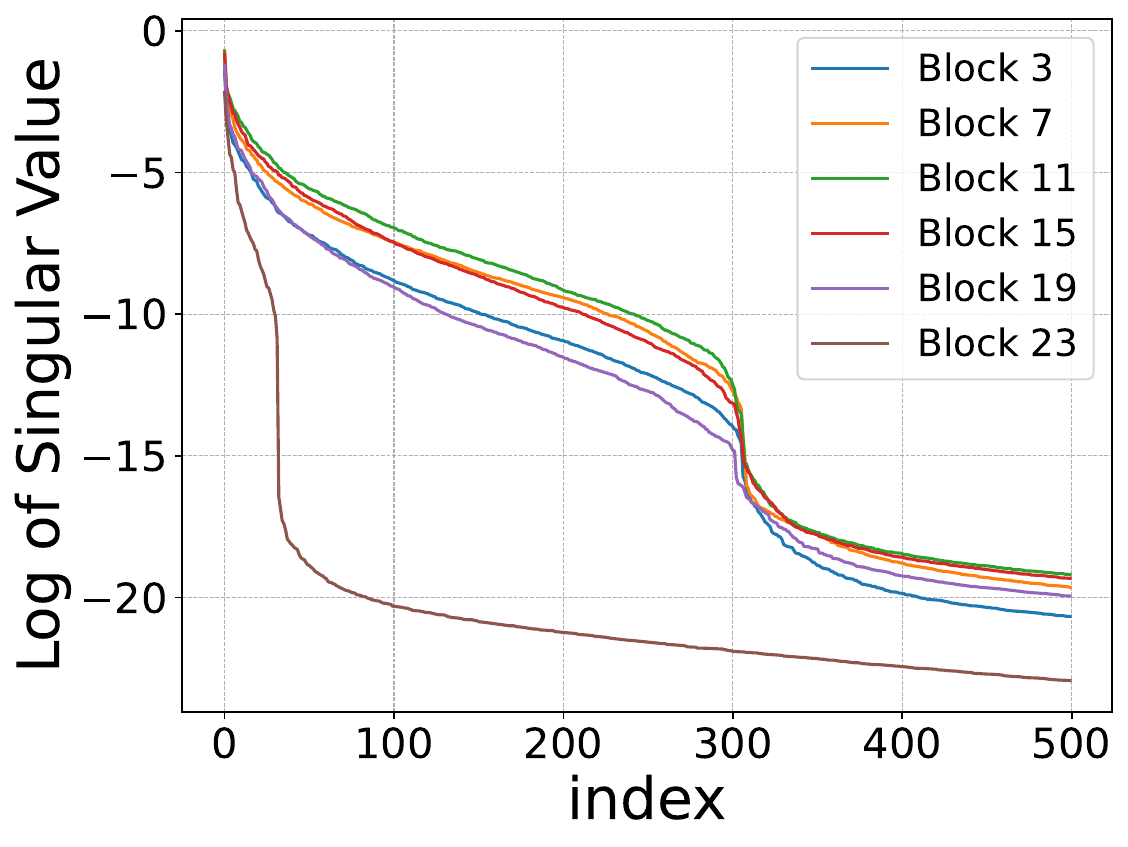}
		\caption{Singular value of \texttt{q\_proj} matrix graident over step 100.}	
	\end{subfigure}
	\vspace{-0.8em}
	\caption{Low-rank structures in gradient matrices during LLM fine-tuning. (a) illustrates the temporal evolution of singular values for key projections within a single layer, showing consistent spectral decay across training steps. (b) presents the cross-layer comparison at step 100, revealing layer-invariant low-rank patterns in query projection gradients.} 
	\label{fig:appendix_lowrank_grad}
\end{figure*}

\noindent\textbf{More Comparisons }

In the main manuscript, we use the identical batch size for FO and ZO optimizers. Here, we adjust SGD with gradient accumulation to match the memory usage of ZO optimizers, and then compare their convergence speed and performance. The experimental settings are the same as those in Fig.~\ref{fig:motivation}, and the experimental results are shown in Fig.~\ref{fig:appendix_ga_exper}. With similar  memory usage, SubZero attains a convergence rate nearly on par with SGD, surpasses MeZO, and achieves test accuracy comparable to that of SGD. 

\begin{figure*}[ht]	
	\centering	
	\begin{subfigure}{0.31\textwidth}
		\includegraphics[width=\textwidth]{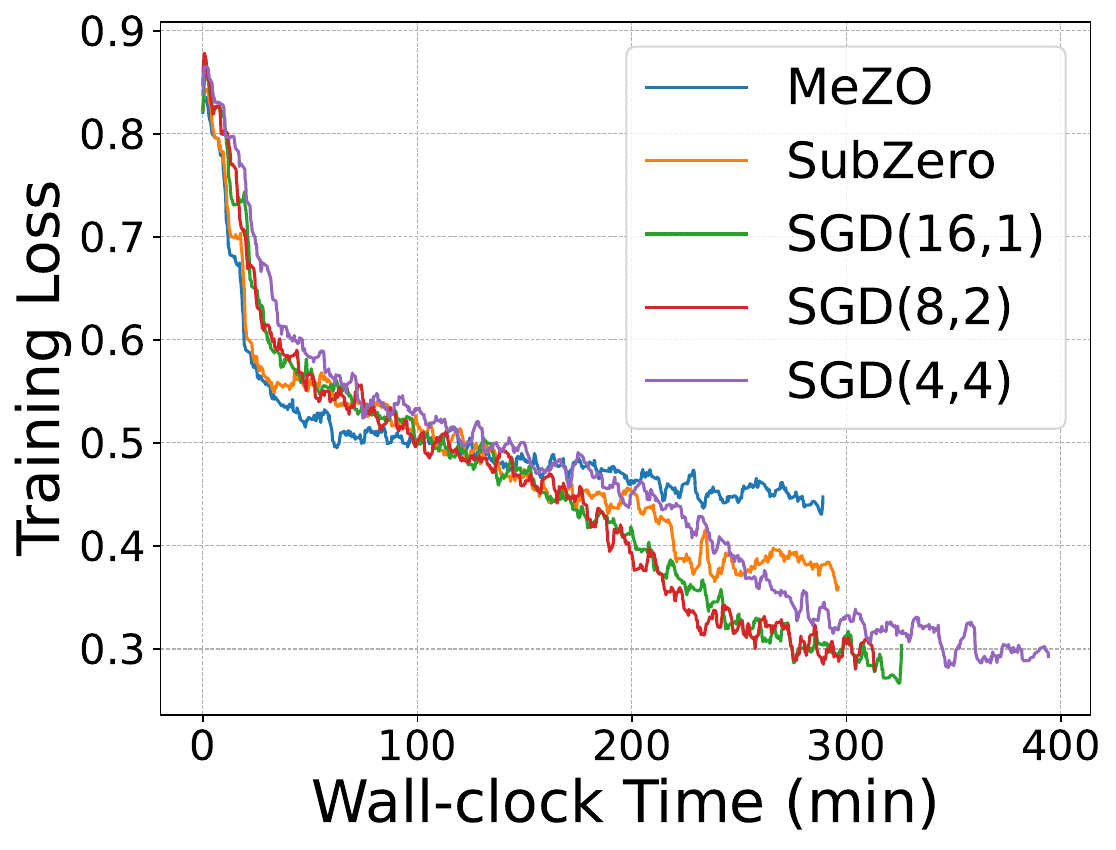}
		\caption{Training Loss}
	\end{subfigure}
	\hfill
	\begin{subfigure}{0.32\textwidth}
		\includegraphics[width=\textwidth]{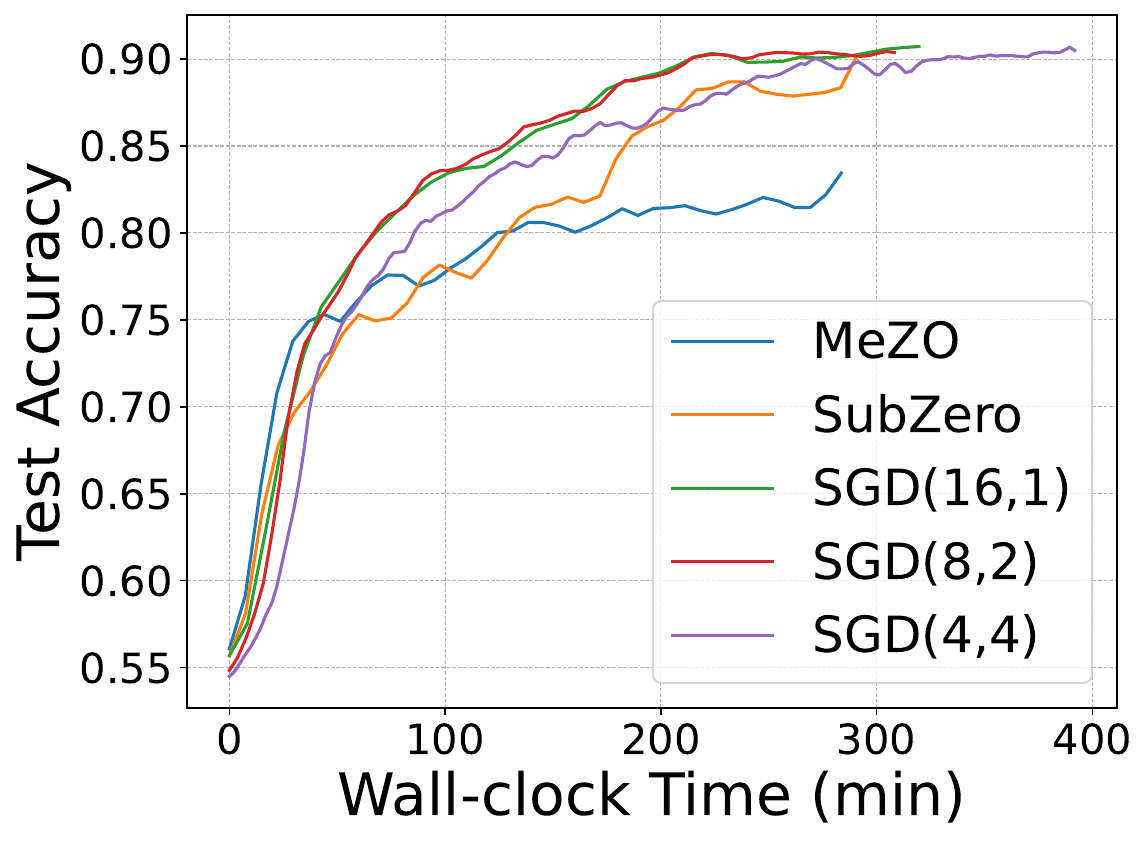}
		\caption{Test Accuracy}		
	\end{subfigure}
	\hfill
	\begin{subfigure}{0.31\textwidth}
		\includegraphics[width=\textwidth]{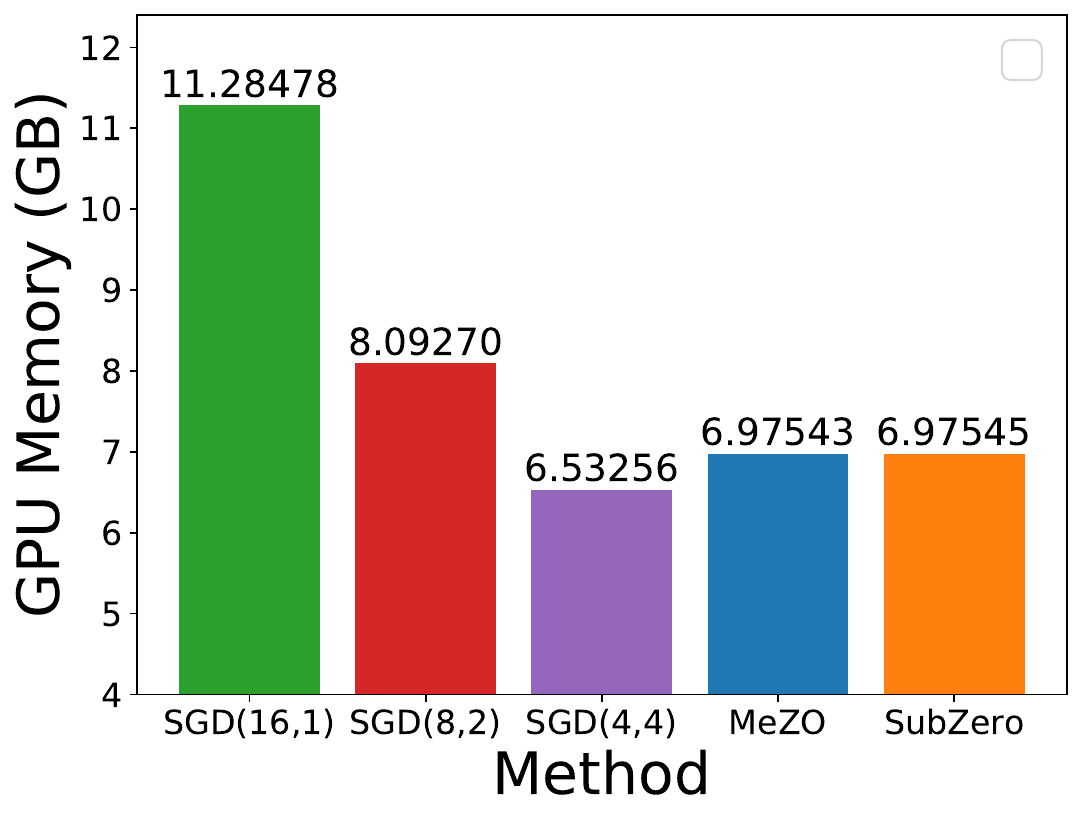}
		\caption{Memory Cost}		
	\end{subfigure}	
	\vspace{-0.8em}
	\caption{ Visualization of training loss, test accuracy, and peak total GPU memory usage with OPT-1.3B on SST-2 in prompt tuning scheme. SGD(BS, GA) refers to SGD with a batch size of BS and GA times of gradient accumulation. All ZO methods utilize a batch size of 16, while SGD(BS, GA) applies gradient accumulation to ensure its memory usage aligns with that of the ZO optimizers. All methods are executed for 20K steps.}
	\label{fig:appendix_ga_exper}
\end{figure*}

The vanilla LoRA is fine-tuned by Adam. We compare SubZero with SGD in the FT and LoRA schemes with vanilla LoRA using the pretrained OPT-1.3B model  on SST-2. For Adam and SubZero with SGD, we apply the constant learning rate schedule. The results are given in Table~\ref{tab:vanilla_LoRA}. We can see that SubZero with SGD in the FT scheme outperforms vanilla LoRA in both test accuracy and memory usage. SubZero with SGD in the LoRA scheme also achieves comparable test accuracy while maintaining minimal memory usage.

\begin{table}[ht]	
	\caption{Comparison with vanilla LoRA using the pretrained OPT-1.3B model  on SST-2.}
	\vspace{-1.5em}
	\begin{small}
	\label{tab:vanilla_LoRA}
	\begin{center}
		\begin{tabular}{l|ccc|c}
			\toprule		
			Method & Test Accuracy(\%) & Total Memory (GB) \\
			\midrule
			LoRA (Adam) & 93.2 & 10.75 \\
			SubZero (FT) &\textbf{93.4} & ~6.88 \\
			SubZero (LoRA) &92.9 & ~\textbf{6.80} \\			
			\bottomrule
		\end{tabular}
	\end{center}
\end{small}
\end{table}

As described in Sec.~\ref{sec:memory_and_time}, we compare the memory consumption and wall-clock time of ZO methods (MeZO and SubZero), SGD, and inference-only approaches (zero-shot and in-context learning (ICL)) using OPT-13B (see Table~\ref{tab:memoryusage}). Since inference-only methods do not involve fine-tuning, they have zero wall-clock time and their memory usage reflects only the inference load. For fine-tuning, all methods were run for 20K steps. The ZO methods, including SubZero, achieved over a 1.8× reduction in memory usage compared to SGD. Notably, SubZero’s memory footprint closely aligns with MeZO’s, while offering improved performance. We use per-layer weight updates for MeZO and SubZero (see Appendix~\ref{app:details}), resulting in nearly identical memory usage for FT and LoRA schemes when one decimal place is reserved.

Although SubZero introduces computational and memory overhead due to QR decomposition when generating projection matrices, our empirical analysis reveals strictly bounded resource costs across all tested OPT model scales (see Table~\ref{tab:opt_QR_comparison}). Specifically, the additional time overhead remains below 8.5\% (peaking at 8.36\% for the 6.7B model), while the memory overhead stays under 1.8\%, even under bfloat16 precision. This indicates that the computational cost of QR decomposition becomes asymptotically negligible as model complexity increases, thereby establishing SubZero's practical scalability.

\begin{table}[ht]
	\centering
	\caption{Memory usage (GB) and wall-clock time (minutes) of fine-tuning OPT-13B, with SGD's batch size being 8 for SQuAD and 16 for other tasks. } 
\label{tab:memoryusage}
\setlength{\tabcolsep}{7pt}
\begin{small}
\begin{threeparttable}
	\begin{tabular}{l|cc|cc|cc}
		\toprule
		Task & \multicolumn{2}{c|}{SST-2} & \multicolumn{2}{c|}{WIC} & \multicolumn{2}{c}{SQuAD} \\
		Method & \bf{Mem.} & \bf{Time} & \bf{Mem.} & \bf{Time} & \bf{Mem.} & \bf{Time} \\
		\midrule			
		Zero-shot/ICL & 24.2 & 0 & 24.8 & 0& 27.2 & 0 \\
		SGD(FT) & 48.9 & 190.3 & 48.9 & 257.3 & 122.7 & 623.7 \\
		\midrule
		MeZO(FT) & 26.1 & 324.9& 26.6 &370.5 & 37.4 &  670.2 \\
		SubZero(FT) & 26.5 & 337.3 & 27.1& 385.3 & 37.8 & 690.5\\
		\midrule
		MeZO(LoRA) & 26.1 & 123.9 & 26.6 & 171.6 & 37.4 & 476.7\\
		SubZero(LoRA) & 26.1& 130.3 & 26.6 & 179.7 & 37.4 & 486.5\\
		\bottomrule
	\end{tabular}
\end{threeparttable}
\end{small}
\end{table} 

\begin{table}[ht]
	\caption{Peak memory usage (GB) and wall-clock time (seconds) for fine-tuning OPT models on the SST-2 dataset. All models fine-tuned on SST-2 for 20K steps. Precision strategy: FP32 for models $\le$ 6.7B, BF16 for $\geq$ 6.7B.}
	\begin{small}
	\vspace{-1em}
	\begin{center}
	\setlength{\tabcolsep}{2pt}
		\begin{tabular}{@{}cc|ccc|ccc@{}}
	\toprule
	OPT series & hidden size & MeZO Mem. & SubZero Mem. & \begin{tabular}[c]{@{}c@{}}Mem.\\ Overhead (\%)\end{tabular} & MeZO Time & SubZero Time & \begin{tabular}[c]{@{}c@{}}Time\\ Overhead (\%)\end{tabular} \\
	\midrule
	1.3B & 2048 & 6.80 & 6.88 & \textbf{+1.18\%} & 11214.95 & 11683.39 & \textbf{+4.18\%} \\
	2.7B & 2560 & 12.40 & 12.60 & \textbf{+1.61\%} & 21578.56 & 22335.05 & \textbf{+3.51\%} \\
	6.7B & 4096 & 13.98 & 14.20 & \textbf{+1.57\%} & ~9832.69 & 10654.59 & \textbf{+8.36\%} \\
	13B  & 5120 & 26.08 & 26.53 & \textbf{+1.73\%} & 18667.50 & 19245.10 & \textbf{+3.09\%} \\
	\bottomrule
		\end{tabular}
	\end{center}
	\end{small}
	\label{tab:opt_QR_comparison}
\end{table}

\noindent\textbf{Integration with Other ZO Optimizers}

SubZero is orthogonal to many ZO methods and can be combined with them to further boost performance, such as the momentum mechanism in ZO-AdaMU~\cite{jiang2024zo}, the sparsity pruning in S-MeZO~\cite{liu2024sparse}, and the second-order information in HiZOO~\cite{zhao2024second}. Here, we report the experimental results of integrating SubZero with ZO-AdaMU, as shown in Table~\ref{tab:subzero_momentum_mem_acc} and Fig.~\ref{fig:subzero_momentum_loss_curve}. This integration not only achieves faster convergence but also significantly reduces the memory overhead associated with ZO-AdaMU’s momentum mechanism.

\begin{figure}[ht]  
    \centering  
    \begin{minipage}{0.45\textwidth}  
        \centering  
		\Large
		\renewcommand{\arraystretch}{1.3} 
		\resizebox{0.85\textwidth}{!}{
       \begin{tabular}{l|cc}
		  \toprule
          
		      Method & Memory (GB) & Accuracy (\%) \\
              \midrule
		    MeZO &  \textbf{26.62} & 60.0 \\
		    SubZero & 27.06 & 60.8  \\ 
        ZO-AdaMU & 50.77 & 60.7 \\
        SubZO-AdaMU & 27.07 & \textbf{61.1} \\ 
        \bottomrule
	\end{tabular}}
        \captionof{table}{Peak memory usage (GB) and test accuracy (\%) for fine-tuning OPT-13B models on the WIC dataset.}  
        \label{tab:subzero_momentum_mem_acc}  
    \end{minipage}  
    \begin{minipage}{0.40\textwidth}  
        \centering  
        \includegraphics[width=0.85\linewidth]{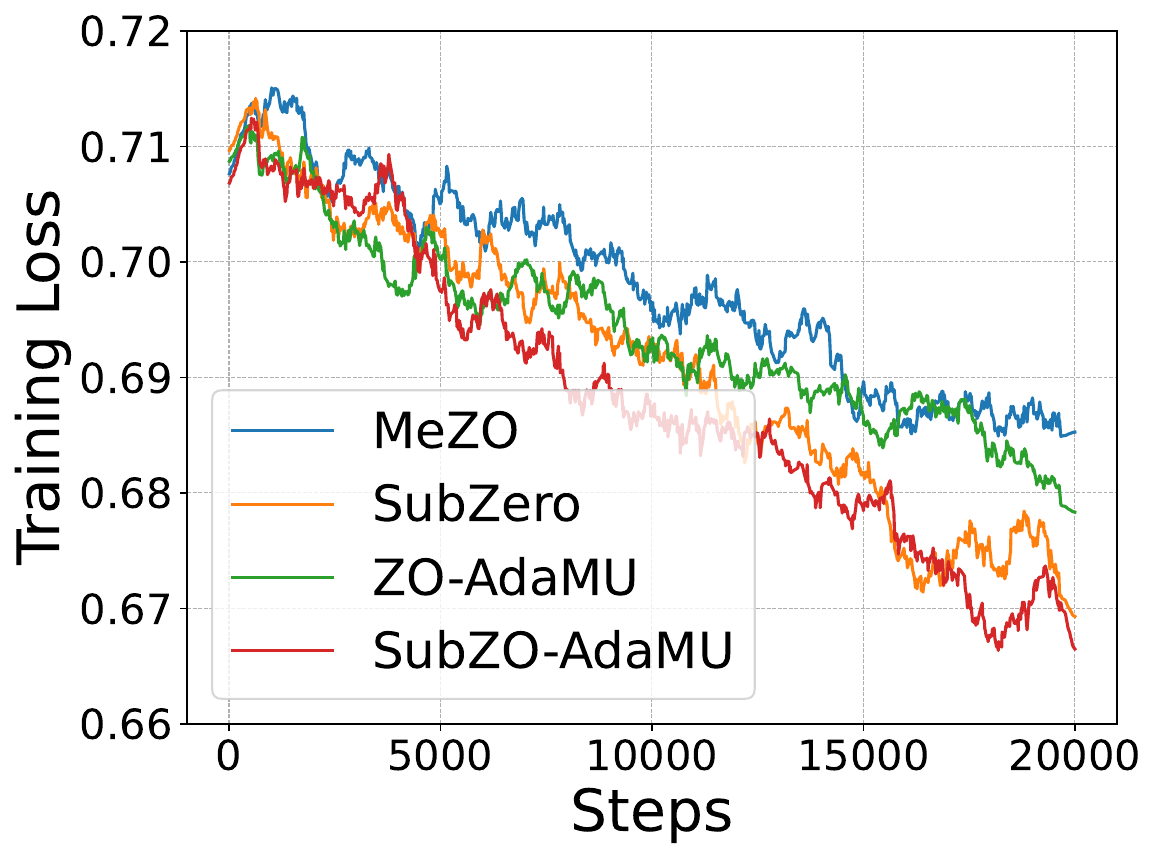}  
        \vspace{-1em}
        \captionof{figure}{Training loss curves of Table~\ref{tab:subzero_momentum_mem_acc}.}  
        \label{fig:subzero_momentum_loss_curve}  
    \end{minipage}  
\end{figure}

ZO-AdaMU accelerates convergence by introducing stochastic momentum, a mechanism that is orthogonal to SubZero. In all experiments, we use a shared learning rate and perturbation scale. SubZero-specific hyperparameters, such as rank and update interval, are set as detailed in Table~\ref{tab:opt13b_performance}, while all momentum-related settings remain consistent with the original ZO-AdaMU paper~\cite{jiang2024zo}. Notably, vanilla ZO-AdaMU requires storing the full momentum history, leading to substantial memory overhead. In contrast, when combined with SubZero, only low-dimensional momentum $\boldsymbol{M}_i^{t} \in \mathbb{R}^{r \times r}$ needs to be maintained, i.e.,
\begin{align}
\boldsymbol{M}_i^{t} &= \beta_i^{t} \boldsymbol{M}_i^{t - 1} + (1 - \beta_i^{t}) \boldsymbol{Z}_i^{t}, \label{eq:subzero1} \\
\boldsymbol{W}_i^{t} &= \boldsymbol{W}_i^{t - 1} - \eta_i^{t} \boldsymbol{U}_i \boldsymbol{M}_i^{t} \boldsymbol{V}_i^T. \label{eq:subzero_momentum}
\end{align}
When the subspaces are updated at time steps $t$ such that $t\mod F  \equiv 0$ (as specified in Algorithm~\ref{alg:ss_mezo}), the old momentum needs to be projected from the old subspaces $\boldsymbol{U}^{t-1}$ and $\boldsymbol{V}^{t-1}$ onto the new subspaces $\boldsymbol{U}^{t}$ and $\boldsymbol{V}^{t}$. This projection ensures the optimizer's continuity and maintains the effectiveness of the momentum mechanism. The projection is formulated as the following optimization problem:
\begin{align}
\min_{M}\|\boldsymbol{U}^{t-1}\boldsymbol{M}^{t-1}(\boldsymbol{V}^{t-1})^{T}-\boldsymbol{U}^{t}\boldsymbol{M}(\boldsymbol{V}^{t})^{T} \| . \label{eq:subzero_momentum2}
\end{align}

\noindent\textbf{More Ablation Studies}

We investigate the effects of random seed, batch size, and combination with Adam for SubZero. We also provide more results on the reshaping strategy. 

We first fine-tune the OPT-1.3B model on the SST-2 dataset in prompt tuning scheme with three random seeds. We present the results in Table~\ref{tab:randomseed}, and hyperparameters are presented in Table~\ref{tab:hyper_grid2}. For various random seeds, the variance of MeZO is quite large, whereas the variance of SubZero is small, and its average performance is superior.

Then we examine the impact of batch size for ZO optimizers using the RoBERTa-large model on SST-2 in full-parameter tuning scheme. The results are shown in Table~\ref{tab:batchsize}. The training epochs are 100K in Table~\ref{tab:non-diff performance}, while they are 20K in Table~\ref{tab:batchsize}. The remaining hyperparameters are consistent with Table~\ref{tab:non-diff performance}, as detailed in Appendix~\ref{sec: hyperparam}. For ZO optimizers, a large batch size always gets better performance. Across various batch sizes, SubZero demonstrates better fine-tuning performance compared to MeZO. 

\begin{table}[ht]	
	\caption{The impact of random seed with the pretrained OPT-1.3B model  on SST-2 in prompt tuning scheme.}
	\vspace{-1.5em}
	\label{tab:randomseed}
	\begin{small}
	\begin{center}
		\begin{tabular}{l|ccc|c}
			\toprule
			Seed & 42 & 0 & 1234 & AVG. \\
			\midrule
			MeZO & 85.9 & 83.3 & 80.7 & 83.3 \\
			SubZero &\textbf{89.1} & \textbf{89.4} & \textbf{89.2} & \textbf{89.2}\\			
			\bottomrule
		\end{tabular}
	\end{center}
	\end{small}
\end{table}

\begin{table}[ht]	
	\caption{The impact of batch size with the pretrained RoBERTa-large model in full-parameter tuning scheme.}
	\vspace{-1.5em}
	\label{tab:batchsize}
	\begin{small}
	\begin{center}
		\begin{tabular}{c|c|cccc|c}
			\toprule
			{Batch Size} & {Method} & \textbf{SST-2} & \textbf{SST-5} & \textbf{SNLI} & \textbf{MNLI} & AVG.\\
			\midrule
			\multirow{2}{*}{16} & MeZO & 91.7 & 44.7 & 77.3 & \textbf{53.0} & 66.7\\
			& SubZero & \textbf{91.9} & \textbf{45.9} & \textbf{77.5} & 52.8 & \textbf{67.0}\\
			\midrule
			\multirow{2}{*}{32} & MeZO & 92.9 & 45.4 & 78.3 & 53.2 & 67.5\\
			& SubZero & \textbf{93.0} & \textbf{45.5} & \textbf{79.6} & \textbf{54.0} & \textbf{68.0}\\
			\bottomrule
		\end{tabular}
	\end{center}
\end{small}
\end{table}

Next, we assess the impact of the Adam optimizer. We fine-tune the OPT-1.3B model on the SST-2 dataset, and the experimental results are displayed in Table~\ref{tab:adam}. For ZO optimizers with Adam, we perform a grid search on the hyperparameters and find that keeping the learning rate and perturbation scale consistent with those with SGD resulted in good convergence, as detailed in  Table~\ref{tab:hyper_grid2}. We utilize the linear and the constant learning rate schedules for SGD and Adam, respectively. For all ZO optimizers with SGD, we apply the constant learning rate schedule. For all ZO optimizers with Adam, we test the constant and the cosine annealing schedules. We note that SubZero surpasses MeZO when employing the Adam optimizer with both constant and cosine annealing schedules. Also, Adam does not provide an advantage over SGD for ZO optimization, which aligns with the conclusions of previous studies~\citep{zhang2024revisiting, guo2024zeroth}.

\begin{table}[ht]	
	\caption{Comparison of test accuracy (\%) for the pretrained OPT-1.3B model fine-tuned on SST-2 with SGD and Adam.}
	\vspace{-1.5em}
	\label{tab:adam}
	\begin{small}
	\begin{center}
		\begin{tabular}{l|cccc|c}
			\toprule
			Method & 
			\bf{FT} & 
			\bf{LoRA} & 
			\bf{Prefix}&
			\bf{Prompt} & AVG.\\
			\midrule
			SGD &\bf{93.2} &93.0 &\bf{93.1} &90.7 & 92.5\\
			Adam &92.6 &\bf{93.2} &92.9 & \bf{93.3} & \bf{93.0}\\
			\midrule			
			MeZO\_SGD & 92.3 & 92.8 & 91.6 & 85.9 & 90.7\\
			SubZero\_SGD & \bf{93.4} & \bf{92.9} & \bf{92.2} & \bf{89.1} & \textbf{91.9}\\
			\midrule
			MeZO\_Adam(constant) & 92.3 & \bf{93.3} & 90.7 & 84.6 & 90.2\\
			SubZero\_Adam(constant) & \bf{93.2} & 92.4 & \bf{90.9} & \bf{89.3} & \textbf{91.5}\\
			\midrule
			MeZO\_Adam(cosine) & \bf{91.9} & \bf{93.1} & 86.1 & 78.7 & 87.5\\
			SubZero\_Adam(cosine) & 91.7 & 92.0 & \bf{86.6} & \bf{83.4} & \textbf{88.4}\\
			\bottomrule
		\end{tabular}
	\end{center}
\end{small}
\end{table}

Finally, we provide more ablations on the reshaping strategy with OPT-1.3B on Winogrande in the PEFT schemes. The Winogrande dataset~\cite{sakaguchi2021winogrande} is a benchmark for commonsense reasoning and available at~\url{https://winogrande.allenai.org/}. The results are shown in Table~\ref{tab:reshape_winogrande}. We can set that the reshaping strategy clearly enhances performance, aligning with the conclusion presented in Table~\ref{tab:abla_reshape}.

\begin{table}[ht]	
	\caption{Reshaping strategy for non-square matrices with the pretrained OPT-1.3B model fine-tuned on Winogrande in PEFT schemes.}
	\vspace{-1.5em}
	\label{tab:reshape_winogrande}
	\begin{small}
	\begin{center}
		\begin{tabular}{l|ccc|c}
			\toprule
			Method & 
			\bf{LoRA} & 
			\bf{Prefix}&
			\bf{Prompt} & AVG. \\
			\midrule		
			SGD & 58.3	& 56.9	& 58.4 & 57.9\\
			\midrule
			SubZero(w/o) &56.6&	56.6&	56.5	& 56.6 \\
			SubZero(w/) & \textbf{57.8} &	\textbf{57.3}&	\textbf{57.6} & \textbf{57.6}\\	 
			\bottomrule
		\end{tabular}
	\end{center}
\end{small}
\end{table}

\subsection{Implementation Details} \label{app:details}

We use one A800 GPU with the PyTorch 2.1.0+CUDA 11.8 framework for ZO methods and, if needed,  two A800 GPUs for SGD.

The gradient estimation in SubZero is applicable to parameter matrices, while LLMs mainly consist of dense layers. For other trainable parameters, such as biases and layer normalization parameters, we recommend using the gradient estimation in MeZO~\citep{malladi2023fine}, as these layers contain fewer parameters.  

We introduce two useful strategies to implement our SubZero efficiently in memory. 

\textbf{In-place Operation.} As indicated in Eqn.~\eqref{eq:cal_rho}, directly computing the loss  difference $\rho$ requires twice the memory of inference, as it must store both the parameter matrix set $\mathcal{W}$ and the perturbation matrix set  $\tilde{\mathcal{Z}}$. To mitigate this, we draw inspiration from MeZO and utilize in-place operations. By employing the random seed trick, we store a random seed to compute $\rho$ (see lines 9-12 in Algorithm~\ref{alg:ss_mezo} and Algorithm~\ref{alg:subspaceperturb}) and regenerate the low-dimensional perturbation matrices $\boldsymbol{Z}_{1},\boldsymbol{Z}_{2},\cdots,\boldsymbol{Z}_{l}$ (see line 15 in Algorithm~\ref{alg:ss_mezo}). Consequently, the memory cost for fine-tuning with SubZero is nearly equivalent to that of inference (see Table~\ref{tab:memorycompare} and Table~\ref{tab:memoryusage}).

\textbf{Per-layer Weight Update.}  FO optimizers update all model parameters after BP by storing the entire gradients in memory. In contrast, ZO optimizers like SubZero calculate gradient estimates by first determining the loss value difference from two forward passes, then calculating the gradient estimate for each layer using this difference along with the layer's perturbation. To reduce memory usage during training, we can implement the parameter update with \texttt{optimizer.step()} after calculating the gradient estimate for each layer.

SubZero significantly reduces GPU memory consumption with the two implementation strategies. It should note that we use the per-layer weight update strategy for MeZO in all experiments.

To simplify hyperparameter tuning, we employ a norm alignment trick, allowing SubZero to directly utilize hyperparameter settings, such as the learning rate, from MeZO~\citep{malladi2023fine}. For a random perturbation matrix $\mZ \in \mathbb{R}^{m \times n}$, and its low-rank approximation is $\hat{\mZ} = \mU \mZ' \mV^\mathsf{T}$, where $\mU \in \mathbb{R}^{m \times r}$, $\mV \in \mathbb{R}^{n \times r}$, and $\mZ' \in \mathbb{R}^{r \times r}$. If $\mZ$ and $\mZ'$ are Gaussian random matrices, and $\mU$ and $\mV$ are column-orthogonal matrices, then we have:
\begin{align}
	\label{eqn:norm alignment}
	\mathbb{E}[\left \| \mZ \right \|_F] = \sqrt{\frac{m\times n}{r^2}} \mathbb{E}\left[ \| \hat{\mZ}\|_F\right].
\end{align}
Define $\mu = \sqrt{\frac{m \times n}{ r^2}}$. Let MeZO's learning rate be $\eta$ and perturbation scale be $\varepsilon$. There are two equivalent approaches to obtain the perturbation for SubZero. The first approach involves multiplying the random low-dimensional perturbation matrix by $\mu$, with SubZero adopting MeZO's hyperparameters directly: $\eta'=\eta$  and $\varepsilon'=\varepsilon$. The second approach keeps the random low-dimensional perturbation matrix fixed and sets SubZero's learning rate and perturbation scale as follows:
\begin{align*}
	\eta' = \eta \mu^2, \varepsilon' = \varepsilon \mu.
\end{align*}
We argue that norm alignment is crucial for SubZero, as changing the rank $r$ affects the norm of the gradient estimate, complicating the fine-tuning of the associated learning rate.

S-MeZO~\citep{liu2024sparse}, a new ZO method, aims to improve MeZO's performance and convergence speed. However, its source code and detailed layer-wise hyperparameter configurations have not been released.  \citet{yang2024adazeta} reproduce S-MeZO using a fixed sparsity ratio for each layer, selected based on the best overall result shown in Fig. 6 of their paper. So we perform S-MeZO with this non-official implementation code available at~\url{https://github.com/yifanycc/AdaZeta}.

\subsection{Datasets}

Following~\citep{malladi2023fine}, we use SuperGLUE~\citep{wang2019superglue} for OPT experiments, including BoolQ~\citep{clark2019boolq}, CB~\citep{demarneffe:cb}, COPA~\citep{roemmele2011choice}, MultiRC~\citep{khashabi2018looking}, ReCoRD~\citep{zhang2018record}, RTE~\citep{dagan2005pascal_rte1,bar2006second,giampiccolo2007third_rte3,bentivogli2009fifth_rte4}, WiC~\citep{pilehvar-camacho-collados-2019-wic}, and WSC~\citep{levesque2012winograd}.
We also utilize SST-2~\citep{socher2013recursive_sst-2} and two question answering (QA) datasets, SQuAD~\citep{rajpurkar-etal-2016-squad} and DROP~\citep{dua-etal-2019-drop}. For each task, we randomly sampled 1000 examples for training, 500 for validation, and 1000 for testing.

For LLama2-7B and Mistral-7B, we use CB~\citep{demarneffe:cb} in the full-parameter tuning and three PEFT schemes. For OPT-1.3B, we utilize SST-2~\citep{socher2013recursive_sst-2} in the full-parameter tuning and three PEFT schemes.

For RoBERTa-large, we consider classification datasets: SST-2~\citep{socher2013recursive_sst-2}, SST-5~\citep{socher2013recursive_sst-2}, MNLI~\citep{williams2018broad_mnli}, and SNLI~\citep{bowman2015large}.  Following~\cite{malladi2023fine}, the test set has 1000 examples for fast iteration, while we have 512 examples per class for both training and validation.

\begin{table}[t]	
	\caption{The hyperparameter search grids for OPT-13B. For each task, we run 20K steps for ZO methods (MeZO, S-MeZO, and SubZero) and SGD. We record the best model checkpoint based on the validation loss every 500 training steps.}
	\vspace{-1em}
	\begin{small}
	\label{tab:hyper_grid1}
	\begin{center}
		\renewcommand{\arraystretch}{0.9} 
		\begin{tabular}{lrc}
			\toprule
			Experiment & Hyperparameter & Value \\
			\midrule
			\multirow{3}{*}{MeZO(FT)} & batch size & 16 \\
			& learning rate & \{1e-7, 2e-7, 5e-7, 1e-6\} \\
			& $\varepsilon$ & 1e-3 \\
			\midrule
			\multirow{3}{*}{MeZO(LoRA)} & batch size & 16 \\
			& learning rate & \{1.5e-5, 3e-5, 5e-5\} \\
			& $\varepsilon$ & 1e-3 \\			
			\midrule
			\multirow{4}{*}{S-MeZO(FT)} & batch size & 16 \\
			& learning rate & \{1e-6, 5e-6\} \\
			& $\varepsilon$ & 1e-3 \\
			& sparse rate & 0.75 \\
			\midrule
			\multirow{4}{*}{S-MeZO(LoRA)} & batch size & 16 \\
			& learning rate & \{5e-5, 1e-4, 1e-3\} \\
			& $\varepsilon$ & 1e-3 \\
			& Sparse rate & 0.75 \\
			\midrule
			\multirow{5}{*}{SubZero(FT)} & batch size & 16 \\
			& learning rate & \{1e-7, 2e-7, 5e-7, 1e-6\} \\
			& $\varepsilon$ & 1e-3 \\
			& rank & \{32, 64, 128, 256 \} \\
			& subspace change frequency & \{500, 1000, 2000\} \\
			\midrule
			\multirow{5}{*}{SubZero(LoRA)} & batch size & 16 \\
			& learning rate & \{1.5e-5, 3e-5, 5e-5\} \\
			& $\varepsilon$ & 1e-3 \\
			& rank & \{4, 8, 16\} \\
			& subspace change frequency & \{500, 1000, 2000\} \\
			\midrule
			\multirow{2}{*}{SGD(FT)} & batch size & 16 \\
			& learning rate & \{1e-4, 1e-3, 5e-3\} \\
			
			\bottomrule
		\end{tabular}
	\end{center}
\end{small}
\end{table}

\subsection{Hyperparameters}
\label{sec: hyperparam}

Using a larger batch size can consistently reduce the variance in ZO optimization, thus enhancing fine-tuning performance~\citep{malladi2023fine, gautamvariance, yang2024adazeta}. However, this increase in batch size also raises the time for forward passes and significantly elevates memory usage. We focus on developing ZO methods that minimize variance and improve performance with small batch sizes, with a default setting of 16. In some SGD experiments, like on MultiRC and SQuAD, the batch size is reduced to 8 due to limited GPU resources.

Consistent with previous studies~\citep{malladi2023fine, zhang2024revisiting, liu2024sparse, yang2024adazeta}, we employ SGD without momentum by default to maintain memory efficiency. SGD utilizes the linear learning rate schedule, while all ZO methods with SGD apply a constant learning rate schedule, with weight decay set to 0.

For the projection matrix generation experiments summarized in Table~\ref{tab:proj_choice}, we perform full-parameter fine-tuning on the SST-2 dataset using three models: the OPT-1.3B model with a rank of 24 and a subspace update frequency of 1000, the LLaMA2-7B model with a rank of 24 and a subspace update frequency of 1000 and the OPT-13B model with a rank of 128 and a subspace update frequency of 1000. All other hyperparameters, including learning rate and perturbation scale, remain consistent across experiments. While the decomposition of the weight and Gaussian random matrices is relatively straightforward, the activation matrix relies on the AdaBK algorithm~\citep{yong2023general}. Specifically, we compute the second-order statistics of activations (see line 7 of Algorithm 1 in the paper of AdaBK) and employ an adaptive damping strategy based on the maximum singular value to improve the condition number, followed by QR decomposition. For dimensionality alignment, the second-order statistics of output and input features are decomposed to derive $\mathbf{U}$ and $\mathbf{V}$, respectively. Finally, for the decomposition of the historical zeroth-order gradient, its matrix from the previous iteration is readily obtained and subsequently QR-decomposed, facilitated by our random seed technique and the saved historical projection matrix.

For RoBERTa, we run Adam for 1K steps and ZO methods for 100K steps. In the rest experiments, we run Adam for 5 epochs and SGD and ZO methods for 20K steps.
 
We follow previous work to set the hyperparameters in PEFT schemes~\citep{malladi2023fine, zhang2024revisiting}. For LoRA, the rank is set to 8 and $\alpha$ is set to 16. For prefix tuning, the length of prefix tokens is set to 5, and we initialize these tunable representations by randomly sampling tokens from the vocabulary and then passing them through the LLM to get their keys and values at different attention
layers. For prompt tuning, the length of prompt virtual tokens is set to 10, and the prompt tokens are initialized with actual token values from the model's embedding.

We present the hyperparameter search grids in Tables~\ref{tab:hyper_grid1} and~\ref{tab:hyper_grid2} to assist with result reproduction. For OPT-1.3B, we utilize the same hyperparameter settings as in Table~\ref{tab:hyper_grid2}. For RoBERTa-large, we use a learning rate of \{1e-6, 5e-6\} and $\varepsilon$=1e-3 for MeZO and SubZero,  with a  batch size of 64. The rank for SubZero is set to \{8, 16, 24\}, and subspace change frequency is adjusted to \{1000, 2000\}. 

For the ablation study, we evaluate the effectiveness of the orthogonal projection matrix using the OPT-13B model in full-parameter tuning scheme on the RTE and WSC datasets, and the results are presented in Table~\ref{tab:abla_projtype}. The hyperparameter settings are consistent with those in Table~\ref{tab:opt13b_performance}, and further details are available in Table~\ref{tab:hyper_grid1}. The subspace dimensionality remains fixed across all experiments. It is noteworthy that both orthogonal and non-orthogonal projection matrices can utilize the same learning rate and perturbation scale. This is because the overall perturbation matrix is scaled by a factor of $\frac{1}{r}$, following a similar norm alignment strategy as detailed in Eqn.~\eqref{eqn:norm alignment}. We also perform ablation studies on the rank and subspace update frequency for  SubZero, with results shown in Table~\ref{tab:abla-rank-T}. Full-parameter tuning scheme is conducted on the RTE dataset using the OPT-13B model, with specific experimental settings outlined in Table~\ref{tab:hyper_grid1}. All experiments employ the same learning rate and perturbation scale, enabled by the norm alignment technique described in Eqn.~\eqref{eqn:norm alignment}.

\begin{table}[t]	
	\caption{The hyperparameter search grids for LLama2-7B and Mistral-7B. For each task, we run 20K steps for ZO methods (MeZO and SubZero) and SGD. We record the best model checkpoint based on the validation loss every 500 training steps.}
	\vspace{-1em}
	\label{tab:hyper_grid2}
	\begin{small}
	\begin{center}		
		\renewcommand{\arraystretch}{0.9} 
		\begin{tabular}{lrc}
			\toprule
			Experiment & Hyperparameter & Value \\
			\midrule
			\multirow{3}{*}{MeZO(FT)} & batch size & 16 \\
			& learning rate & \{1e-7, 5e-7, 1e-6\} \\
			& $\varepsilon$ & 1e-3 \\
			\midrule
			\multirow{3}{*}{MeZO(LoRA)} & batch size & 16 \\
			& learning rate & \{1e-6, 5e-6, 1e-5, 3e-5\} \\
			& $\varepsilon$ & 1e-3 \\
			\midrule
			\multirow{3}{*}{MeZO(Prefix)} & batch size & 16 \\
			& learning rate & \{1e-3, 5e-3, 1e-2\} \\
			& $\varepsilon$ & 1e-1 \\
			\midrule
			\multirow{3}{*}{MeZO(Prompt)} & batch size & 16 \\
			& learning rate & \{1e-3, 5e-3, 1e-2\} \\
			& $\varepsilon$ & 1e-2 \\
			\midrule
			\multirow{5}{*}{SubZero(FT)} & batch size & 16 \\
			& learning rate & \{1e-7, 5e-7, 1e-6\} \\
			& $\varepsilon$ & 1e-3 \\
			& rank & \{24, 48\} \\
			& subspace change frequency & 1000 \\
			\midrule
			\multirow{5}{*}{SubZero(LoRA)} & batch size & 16 \\
			& learning rate & \{1e-6, 5e-6, 1e-5, 3e-5\} \\
			& $\varepsilon$ & 1e-3 \\
			& rank & \{4, 8\} \\
			& subspace change frequency & 1000 \\
			\midrule
			\multirow{5}{*}{SubZero(Prefix)} & batch size & 16 \\
			& learning rate & \{1e-3, 5e-3, 1e-2\} \\
			& $\varepsilon$ & 1e-1 \\
			& rank & \{4, 8\} \\
			& subspace change frequency & 1000 \\
			\midrule
			\multirow{5}{*}{SubZero(Prompt)} & batch size & 16 \\
			& learning rate & \{1e-3, 5e-3, 1e-2\} \\
			& $\varepsilon$ & 1e-2 \\
			& rank & \{16, 24\} \\
			& subspace change frequency & 1000 \\
			\midrule
			\multirow{2}{*}{SGD(FT)} & batch size & 16 \\
			& learning rate & \{1e-5, 1e-4, 1e-3, 5e-3\} \\
			\bottomrule
		\end{tabular}
	\end{center}
\end{small}
\end{table}

\subsection{Prompt Templates}
\label{sec:prompt}

For autoregressive LLMs, we have three task types: classification, multiple-choice, and question answering. We adopt the prompt templates for various tasks in~\citep{malladi2023fine}, which are summarized in Table~\ref{tab:task_prompt}. For masked LLMs, we also adopt the prompt templates in~\citep{malladi2023fine} and present them in Table~\ref{tab:dataset_statistics}.

\begin{table}[htbp]	
	\caption{The prompt templates used in the OPT-1.3B, OPT-13B, LLama2-7B, and Mistral-7B experiments.}
	\vspace{-1em}
	\begin{small}
	\label{tab:task_prompt}	
	\begin{center}
			\renewcommand{\arraystretch}{0.9} 
			\begin{tabular}{lll}
				\toprule
				{Task} & {Type} & {Prompt} \\
				\midrule
				SST-2 & cls. & <text> It was \textcolor{blue}{terrible/great} \\
				RTE & cls. & <premise> \\
				& & Does this mean that "<hypothesis>" is true? Yes or No? \\
				&& \textcolor{blue}{Yes or No} \\
				CB & cls. & Does this mean that "<hypothesis>" is true? Yes or No? \\
				&& \textcolor{blue}{Yes/No/Maybe} \\
				BoolQ & cls. & <passage> <question>? \\
				&& \textcolor{blue}{Yes/No} \\
				WSC & cls. & <text> \\
				&& In the previous sentence, does the pronoun "<span2>" refer to <span1>? Yes or No? \\
				&& \textcolor{blue}{Yes/No} \\
				WIC & cls. & Does the word "<word>" have the same meaning in these two sentences? Yes, No? \\
				&&<sentence1> \\
				&& <sentence2> \\
				&& \textcolor{blue}{Yes/No} \\
				MultiRC & cls. & <paragraph> \\
				&& Question: <question> \\
				&& I found this answer "<answer". Is that correct? Yes or No? \\
				&& \textcolor{blue}{Yes/No} \\
				COPA & mch. & <premise> so/because <candidate> \\
				ReCoRD & mch. & <passage> \\
				&& <query>.replace("@placeholder", <candidate>) \\
				SQuAD & QA & Title: <title> \\
				&& Context: <context> \\
				&& Question: <question> \\
				&& Answer: \\
				DROP & QA & Passage: <context> \\
				&& Question: <question> \\
				&& Answer: \\				
				\bottomrule
		\end{tabular}
	\end{center}
\end{small}
\end{table}

\begin{table}[ht]
	\caption{The prompt templates used in RoBERTa-large experiments. $C$ is the number of classification categories.}
	\begin{small}
	\centering
	\renewcommand{\arraystretch}{0.9} 
		\begin{tabular}{llll}
			\toprule
			{Task} & $C$  & {Type} & {Prompt} \\
			\midrule
			SST-2 & 2 & sentiment  cls.& <sentence1> It was \textcolor{blue}{great/terrible} \\
			SST-5 & 5  & sentiment cls.&  <sentence1> It was \textcolor{blue}{great/good/okay/bad/terrible}  \\
			MNLI & 3  &  NLI & <sentence1> ? \textcolor{blue}{Yes/Maybe/No} , <sentence2>  \\
			SNLI & 3  &  NLI & <sentence1> ? \textcolor{blue}{Yes/Maybe/No} , <sentence2>  \\
			\bottomrule
		\end{tabular}
	
	\label{tab:dataset_statistics}
	\end{small}
\end{table}

\subsection{Proofs}
\label{appendix: proofs}


In practice, SubZero employs smaller and layer-specific low-rank perturbation matrices instead of a large model-scale projection matrix. However, it is more convenient to prove SubZero's properties using a model-scale projection. Fortunately, the following lemma shows that the low-rank perturbation matrix for each layer can be represented as a layer-scale projection matrix, which is column orthogonal.
\begin{restatable}{lemma}{lemmauvproj}
	\label{lem:UV-proj}
	Let $\tilde{\mZ}=\mU\mZ\mV^{\mathsf{T}}$, where $\mU\in\R^{m\times r}, \mZ\in\R^{r\times r}, \mV\in\R^{n\times r}$, and $\mU^{\mathsf{T}}\mU=\mV^{\mathsf{T}}\mV=\mI_r$. Then we have ${\rm vec}(\tilde{\mZ}) = \mP{{\rm vec}}(\mZ)$ and $\mP^{\mathsf{T}}\mP=\mI_{r^2}$, where $\mP=\mV \otimes \mU$. 
\end{restatable}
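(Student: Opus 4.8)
The plan is to invoke the standard Kronecker-product identity $\mathrm{vec}(\mU\mZ\mV^{\mathsf{T}}) = (\mV \otimes \mU)\,\mathrm{vec}(\mZ)$, which holds for any conformable matrices. With $\mP := \mV \otimes \mU \in \R^{mn \times r^2}$, this immediately gives $\mathrm{vec}(\tilde{\mZ}) = \mP\,\mathrm{vec}(\mZ)$, settling the first claim.

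For the second claim, I would compute $\mP^{\mathsf{T}}\mP$ using the mixed-product property of the Kronecker product: $(\mV \otimes \mU)^{\mathsf{T}}(\mV \otimes \mU) = (\mV^{\mathsf{T}} \otimes \mU^{\mathsf{T}})(\mV \otimes \mU) = (\mV^{\mathsf{T}}\mV) \otimes (\mU^{\mathsf{T}}\mU)$. By hypothesis $\mU^{\mathsf{T}}\mU = \mI_r$ and $\mV^{\mathsf{T}}\mV = \mI_r$, so this equals $\mI_r \otimes \mI_r = \mI_{r^2}$, which is exactly the assertion. Thus $\mP$ is column-orthogonal.

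There is essentially no obstacle here — the lemma is a direct application of two textbook facts about the $\mathrm{vec}$ operator and the Kronecker product. The only point worth stating carefully is the precise form of the $\mathrm{vec}$ identity (the order $\mV \otimes \mU$ rather than $\mU \otimes \mV$ depends on the column-stacking convention for $\mathrm{vec}$ declared in the Notations section), so I would recall that convention explicitly before applying the identity. If desired for completeness, one could verify the $\mathrm{vec}$ identity entrywise by writing the $(p,q)$ entry of $\mU\mZ\mV^{\mathsf{T}}$ as $\sum_{a,b} \mU_{pa}\mZ_{ab}\mV_{qb}$ and matching it against the corresponding entry of $(\mV\otimes\mU)\,\mathrm{vec}(\mZ)$, but this is routine and I would simply cite it.
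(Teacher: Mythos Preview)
Your proposal is correct and matches the paper's proof essentially line for line: the paper invokes ${\rm vec}(\mU\mZ\mV^{\mathsf{T}}) = (\mV \otimes \mU){\rm vec}(\mZ)$ for the first claim and then computes $(\mV \otimes \mU)^{\mathsf{T}}(\mV \otimes \mU) = (\mV^{\mathsf{T}} \otimes \mU^{\mathsf{T}})(\mV \otimes \mU) = (\mV^{\mathsf{T}}\mV) \otimes (\mU^{\mathsf{T}}\mU) = \mI_r \otimes \mI_r = \mI_{r^2}$ for the second. Your additional remark about the column-stacking convention is a nice touch but not present in the paper.
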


\begin{proof}
	Since ${\rm vec}(\mU\mZ\mV^{\mathsf{T}}) = (\mV \otimes \mU){\rm vec}(\mZ)$, we only need to show $(\mV \otimes \mU)^{\mathsf{T}}(\mV \otimes \mU)=\mI_{r^2}$. In fact
	\begin{align*}
		(\mV \otimes \mU)^{\mathsf{T}}(\mV \otimes \mU) = (\mV^{\mathsf{T}} \otimes \mU^{\mathsf{T}})(\mV \otimes \mU) = (\mV^{\mathsf{T}}\mV) \otimes (\mU^{\mathsf{T}}\mU) = \mI_r \otimes \mI_r = \mI_{r^2}.
	\end{align*}
	The proof is completed.
\end{proof}


We can also demonstrate that the low-rank perturbation matrices across all layers can be represented as a model-scale projection matrix. We first give the following lemma.
\begin{restatable}{lemma}{lemmablockdiag}
	\label{lem:block-diag}
	Let a block diagonal matrix $\mP = {\rm bdiag}(\mP_1, \mP_2, \cdots, \mP_l)$ and $\tilde{\vz}_i=\mP_i\vz_i$, where $\mP_i^{\mathsf{T}}\mP_i=\mI_{r^2}$ and $i=1,2,\dots,l$. Then we have $\tilde{\vz}=\mP\vz$, where $\tilde{\vz} = [\tilde{\vz}_1^{\mathsf{T}}, \dots, \tilde{\vz}_l^{\mathsf{T}}]^{\mathsf{T}}$, $\vz = [\vz_1^{\mathsf{T}}$, $\dots, \vz_l^{\mathsf{T}}]^{\mathsf{T}}$ and $\mP^{\mathsf{T}}\mP=\mI_{lr^2}$.
\end{restatable}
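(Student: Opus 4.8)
The plan is to prove the claim by a direct block-wise verification, reducing the statement to two elementary facts about block-diagonal matrices: how they act on a conformally partitioned vector, and how the product $\mP^{\mathsf{T}}\mP$ decomposes. First I would observe that since $\mP = {\rm bdiag}(\mP_1, \dots, \mP_l)$ and the $i$-th block row of $\mP$ has $\mP_i$ in the $i$-th block column and zeros elsewhere, the matrix-vector product $\mP\vz$ applied to the stacked vector $\vz = [\vz_1^{\mathsf{T}}, \dots, \vz_l^{\mathsf{T}}]^{\mathsf{T}}$ yields, in its $i$-th block, exactly $\mP_i\vz_i$. By the hypothesis $\tilde{\vz}_i = \mP_i\vz_i$, this $i$-th block equals $\tilde{\vz}_i$, so stacking over $i$ gives $\mP\vz = [\tilde{\vz}_1^{\mathsf{T}}, \dots, \tilde{\vz}_l^{\mathsf{T}}]^{\mathsf{T}} = \tilde{\vz}$, which is the first assertion.

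Next I would verify the orthonormality of $\mP$. Using the standard fact that the transpose of a block-diagonal matrix is $\mP^{\mathsf{T}} = {\rm bdiag}(\mP_1^{\mathsf{T}}, \dots, \mP_l^{\mathsf{T}})$, and that the product of two block-diagonal matrices is block-diagonal with blocks given by the products of corresponding blocks, I get
\begin{align*}
	\mP^{\mathsf{T}}\mP = {\rm bdiag}(\mP_1^{\mathsf{T}}\mP_1, \mP_2^{\mathsf{T}}\mP_2, \dots, \mP_l^{\mathsf{T}}\mP_l) = {\rm bdiag}(\mI_{r^2}, \mI_{r^2}, \dots, \mI_{r^2}) = \mI_{lr^2},
\end{align*}
where the middle equality uses the hypothesis $\mP_i^{\mathsf{T}}\mP_i = \mI_{r^2}$ for each $i$, and the last equality is just the observation that a block-diagonal matrix with $l$ identity blocks of size $r^2$ is the identity of size $lr^2$. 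This completes both parts of the lemma.

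There is essentially no main obstacle here; the lemma is a bookkeeping statement and the only thing to be careful about is that the block partition of $\vz$ is conformal with the column partition of $\mP$ (i.e., each $\vz_i$ has length equal to the number of columns of $\mP_i$, which is $r^2$ by the preceding Lemma~\ref{lem:UV-proj}), and similarly that each $\tilde{\vz}_i$ has length equal to the number of rows of $\mP_i$. Once these dimensions are recorded, both claims follow from the two displayed identities. In the write-up I would state the two block-matrix facts (transpose and product of block-diagonal matrices) as immediate, then assemble the conclusion in two short lines as above.
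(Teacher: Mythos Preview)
Your proposal is correct and follows essentially the same approach as the paper: the paper also dismisses $\tilde{\vz}=\mP\vz$ as immediate from the block structure and then computes $\mP^{\mathsf{T}}\mP = {\rm bdiag}(\mP_1^{\mathsf{T}}, \dots, \mP_l^{\mathsf{T}})\,{\rm bdiag}(\mP_1, \dots, \mP_l) = {\rm bdiag}(\mP_1^{\mathsf{T}}\mP_1, \dots, \mP_l^{\mathsf{T}}\mP_l) = \mI_{lr^2}$, exactly as you do. Your write-up is in fact slightly more explicit about the conformal partitioning, but there is no substantive difference.
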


\begin{proof}
	It is easy to check that $\tilde{\vz}=\mP\vz$. Besides, we have
	\begin{align*}
		\mP^{\mathsf{T}}\mP = {\rm bdiag}(\mP_1^{\mathsf{T}}, \dots, \mP_l^{\mathsf{T}}){\rm bdiag}(\mP_1, \dots, \mP_l) = {\rm bdiag}(\mP_1^{\mathsf{T}}\mP_1, \dots, \mP_l^{\mathsf{T}}\mP_l) = \mI_{lr^2}.
	\end{align*}
	The proof is completed.
\end{proof}

We may define $\mP = {\rm bdiag} (\boldsymbol{V}_1 \otimes \boldsymbol{U}_1, \boldsymbol{V}_2 \otimes \boldsymbol{U}_2, \cdots, \boldsymbol{V}_l \otimes \boldsymbol{U}_l )$ that satisfies $\mP^\mathsf{T}\mP = \mI$, $\vz = [{\rm vec}(\mZ_1)^{\mathsf{T}}, {\rm vec}(\mZ_2)^{\mathsf{T}}, \dots, {\rm vec}(\mZ_l)^{\mathsf{T}}]^{\mathsf{T}}$, and $\tilde{\vz} = [{\rm vec}(\tilde{\mZ}_1)^{\mathsf{T}}, {\rm vec}(\tilde{\mZ}_2)^{\mathsf{T}}, \dots, {\rm vec}(\tilde{\mZ}_l)^{\mathsf{T}}]^{\mathsf{T}}$. Then according to Lemma~\ref{lem:block-diag}, the perturbation vector of SubZero is $\tilde{\boldsymbol{z}} = \mP \boldsymbol{z}$, which is similar as existing random subspace methods in Eqn.~\eqref{eqn:subspace_spsa}, but with SubZero's projection matrix being block diagonal and column orthogonal. 

To prove Theorem~\ref{thm:oracles-mean} and Theorem~\ref{thm:oracles-quadratic}, we first introduce some definitions and lemmas about Gaussian distribution.

\begin{defination}
	We say $\boldsymbol{z}$ is a standard $n$-dimensional Gaussian vector (denote by $\boldsymbol{z} \sim \mathcal{N} (\boldsymbol{0} , \mI_n)$), if its probability density function $p(\boldsymbol{z}) = \frac{1}{\kappa}e^{-\frac{1}{2}\| \boldsymbol{z} \|^2}$, where $\kappa > 0$ satisfies $\int_{\R^n} \frac{1}{\kappa}e^{-\frac{1}{2}\| \boldsymbol{z} \|^2} d\boldsymbol{z}=1$.
\end{defination}

\begin{defination}
	Let $\boldsymbol{z} \sim \mathcal{N} (\boldsymbol{0} , \mI_n)$. We say $x$ is a chi-square random variable with degrees of freedom $n$ (denote by $x \sim \chi^2(n)$), if $x = \| \boldsymbol{z} \|^2$.
\end{defination}

\begin{restatable}{lemma}{lemmagaussian}
	\label{lem:gaussian}
	Let $\boldsymbol{z} \sim \mathcal{N} (\boldsymbol{0} , \mI_n)$. For any orthogonal $(n \times n)$-matrix $\mQ$ and continuous function $f$, we have $\E_{\boldsymbol{z}} [ f(\boldsymbol{z}) ] = \E_{\boldsymbol{z}} [ f(\mQ\boldsymbol{z}) ]$.
\end{restatable}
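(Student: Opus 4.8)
The plan is to prove this identity by a change of variables in the integral defining the expectation, exploiting the two elementary facts that an orthogonal matrix preserves the Euclidean norm and has determinant of absolute value one, so that it leaves the standard Gaussian density $p(\boldsymbol{z}) = \frac{1}{\kappa}e^{-\|\boldsymbol{z}\|^2/2}$ from the definition of a standard Gaussian vector invariant.

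First I would write the right-hand side explicitly,
\[
\E_{\boldsymbol{z}}[f(\mQ\boldsymbol{z})] = \int_{\R^n} f(\mQ\boldsymbol{z})\,\frac{1}{\kappa}e^{-\frac{1}{2}\|\boldsymbol{z}\|^2}\,d\boldsymbol{z},
\]
and then substitute $\boldsymbol{y} = \mQ\boldsymbol{z}$. Because $\mQ$ is orthogonal it is invertible with $\mQ^{-1}=\mQ^{\mathsf{T}}$, so this substitution is a smooth bijection of $\R^n$ onto itself with inverse $\boldsymbol{z}=\mQ^{\mathsf{T}}\boldsymbol{y}$ and constant Jacobian $\mQ^{\mathsf{T}}$; since $|\det\mQ^{\mathsf{T}}| = |\det\mQ| = 1$ we get $d\boldsymbol{z}=d\boldsymbol{y}$. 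Moreover $\|\boldsymbol{z}\|^2 = \|\mQ^{\mathsf{T}}\boldsymbol{y}\|^2 = \boldsymbol{y}^{\mathsf{T}}\mQ\mQ^{\mathsf{T}}\boldsymbol{y} = \|\boldsymbol{y}\|^2$. Plugging these in,
\[
\int_{\R^n} f(\mQ\boldsymbol{z})\,\frac{1}{\kappa}e^{-\frac{1}{2}\|\boldsymbol{z}\|^2}\,d\boldsymbol{z} = \int_{\R^n} f(\boldsymbol{y})\,\frac{1}{\kappa}e^{-\frac{1}{2}\|\boldsymbol{y}\|^2}\,d\boldsymbol{y} = \E_{\boldsymbol{z}}[f(\boldsymbol{z})],
\]
which is exactly the desired equality.

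There is essentially no genuine obstacle here; the only subtlety worth a sentence is the legitimacy of the substitution, i.e. that the integrals converge absolutely so the change-of-variables theorem applies. In all uses of this lemma in the paper, $f$ is built from norms and inner products weighted against the Gaussian density, so integrability is immediate; for a fully general statement one can either assume $\boldsymbol{z}\mapsto f(\boldsymbol{z})e^{-\|\boldsymbol{z}\|^2/2}$ is integrable, or first establish the identity for nonnegative continuous $f$ (where Tonelli's theorem removes any concern about convergence) and then extend to general $f$ by splitting into positive and negative parts.
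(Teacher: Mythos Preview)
Your argument is correct and is the standard proof of rotational invariance of the isotropic Gaussian: write the expectation as an integral against the density $p(\boldsymbol{z})=\frac{1}{\kappa}e^{-\|\boldsymbol{z}\|^2/2}$, substitute $\boldsymbol{y}=\mQ\boldsymbol{z}$, and use $\|\mQ^{\mathsf{T}}\boldsymbol{y}\|=\|\boldsymbol{y}\|$ together with $|\det\mQ|=1$. The paper itself does not supply a proof of this lemma; it is recorded as a standard auxiliary fact and then invoked in the proofs of Lemma~\ref{lem:gaussnorm2} and Theorem~\ref{thm:oracles-quadratic}, so there is nothing to compare against and your write-up is entirely adequate.
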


\begin{restatable}{lemma}{lemmachiF}
	\label{lem:chi-F}
	If $x \sim \chi^2(n)$, then we have
	\begin{align*}
		\E_{x} [ x ] = n, \quad \Var_{x}[ x ] = 2n.
	\end{align*}
\end{restatable}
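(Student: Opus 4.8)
\textbf{Proof proposal for Lemma~\ref{lem:chi-F}.}

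The plan is to reduce everything to the well-known moments of a single standard Gaussian and then use independence of the coordinates. Write $\boldsymbol{z} = (\boldsymbol{z}_1, \dots, \boldsymbol{z}_n)^{\mathsf{T}} \sim \mathcal{N}(\boldsymbol{0}, \mI_n)$, so by the definition of the chi-square distribution we have $x = \|\boldsymbol{z}\|^2 = \sum_{i=1}^n \boldsymbol{z}_i^2$ with the $\boldsymbol{z}_i$ independent and each $\boldsymbol{z}_i \sim \mathcal{N}(0,1)$. First I would record the scalar Gaussian moments $\E[\boldsymbol{z}_i^2] = 1$ and $\E[\boldsymbol{z}_i^4] = 3$, which can be obtained by a one-line integration by parts on $\int t^2 e^{-t^2/2}\,dt$ and $\int t^4 e^{-t^2/2}\,dt$ against the normalizing constant $\kappa$ from the definition of the standard Gaussian, or simply quoted as standard.

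For the mean, linearity of expectation gives $\E_x[x] = \sum_{i=1}^n \E[\boldsymbol{z}_i^2] = n$ directly. For the variance, I would use independence to split the computation: $\Var_x[x] = \sum_{i=1}^n \Var[\boldsymbol{z}_i^2]$, and then for each term $\Var[\boldsymbol{z}_i^2] = \E[\boldsymbol{z}_i^4] - (\E[\boldsymbol{z}_i^2])^2 = 3 - 1 = 2$, so that $\Var_x[x] = 2n$. This is essentially all that is needed; the only mild subtlety is justifying that the cross terms vanish, which follows because for $i \neq j$ the variables $\boldsymbol{z}_i^2$ and $\boldsymbol{z}_j^2$ are independent (hence uncorrelated), so $\Cov(\boldsymbol{z}_i^2, \boldsymbol{z}_j^2) = 0$.

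There is no real obstacle here — the result is classical and the proof is a couple of lines. The only thing worth being careful about is consistency with the paper's own definitions: the statement uses the density parametrization $p(\boldsymbol{z}) = \frac{1}{\kappa} e^{-\|\boldsymbol{z}\|^2/2}$ rather than the usual $(2\pi)^{-n/2}$, so if I wanted a fully self-contained derivation I would compute the one-dimensional moments against $\frac{1}{\kappa_1} e^{-t^2/2}$ where $\kappa_1 = \sqrt{2\pi}$, noting that $\kappa = \kappa_1^n$ by Fubini, and the normalization cancels in each moment ratio. Given the routine nature of this, I expect the paper simply states these Gaussian moments as known facts and assembles the two identities in three or four lines.
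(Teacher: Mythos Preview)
Your proposal is correct and entirely standard. The paper does not actually supply a proof for this lemma at all; it is stated without proof alongside Lemma~\ref{lem:gaussian} as a classical fact about the chi-square distribution, so your expectation in the final sentence is, if anything, an overestimate of what the paper does.
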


\begin{restatable}{lemma}{lemmaC22func} \citep{nesterov2017random}
	\label{lem:C22-func}
	Let $f\in C^{2,2}_{L_2}(\R^n)$. Then for all $\vx, \vy \in \R^n$, we have
	\begin{align*}
		| f(\vy)-f(\vx)-\langle \nabla f(\vx), \vy-\vx \rangle - \frac{1}{2}\langle \nabla^2f(\vx)(\vy-\vx), \vy-\vx\rangle | \le \frac{L_2}{6} \| \vy-\vx \|^3.
	\end{align*}
\end{restatable}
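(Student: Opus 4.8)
The plan is to realize the left-hand side as an exact double integral of the Hessian increment along the segment from $\vx$ to $\vy$, and then to bound that increment using the $L_2$-Lipschitz property that defines the class $C^{2,2}_{L_2}(\R^n)$. Write $\vh = \vy - \vx$ and let $R = f(\vy) - f(\vx) - \langle \nabla f(\vx), \vh\rangle - \frac{1}{2}\langle \nabla^2 f(\vx)\vh, \vh\rangle$ denote the quantity whose absolute value we must control.

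First I would apply the fundamental theorem of calculus twice along the curve $t \mapsto \vx + t\vh$, $t\in[0,1]$. A single application gives $f(\vy) - f(\vx) = \int_0^1 \langle \nabla f(\vx + t\vh), \vh\rangle\,dt$, and rewriting $\nabla f(\vx + t\vh) - \nabla f(\vx) = \int_0^t \nabla^2 f(\vx + s\vh)\vh\,ds$ turns this into an iterated integral. The key observation is that the zeroth- and first-order terms $f(\vx) + \langle\nabla f(\vx),\vh\rangle$ appear automatically, and the frozen-Hessian term matches $\frac{1}{2}\langle \nabla^2 f(\vx)\vh,\vh\rangle$ precisely because $\int_0^1\!\int_0^t 1\,ds\,dt = \frac{1}{2}$. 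Subtracting these matched pieces leaves the clean remainder representation $R = \int_0^1\!\int_0^t \langle (\nabla^2 f(\vx + s\vh) - \nabla^2 f(\vx))\vh, \vh\rangle\,ds\,dt$.

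The second step is purely estimation. By Cauchy--Schwarz together with the operator-norm bound, the integrand is at most $\|\nabla^2 f(\vx+s\vh) - \nabla^2 f(\vx)\|\,\|\vh\|^2$, and the defining Lipschitz property of $C^{2,2}_{L_2}(\R^n)$ controls the spectral norm by $L_2\|s\vh\| = L_2 s\|\vh\|$. Hence the integrand is at most $L_2 s\|\vh\|^3$, and evaluating $\int_0^1\!\int_0^t s\,ds\,dt = \int_0^1 \frac{t^2}{2}\,dt = \frac{1}{6}$ yields exactly the asserted constant $L_2/6$.

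I anticipate no substantive obstacle, since this is the standard third-order Taylor estimate for functions with a Lipschitz Hessian. The only care required is (i) fixing the convention that membership in $C^{2,2}_{L_2}$ means $\|\nabla^2 f(\vx) - \nabla^2 f(\vy)\| \le L_2\|\vx - \vy\|$ in spectral norm, and (ii) keeping the order of the iterated integrals straight so the $\frac{1}{6}$ factor emerges correctly. Because the lemma is quoted from \citep{nesterov2017random}, one could simply cite it, but the self-contained two-integral argument above is short enough to present in full.
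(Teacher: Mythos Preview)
Your argument is correct: the double-integral representation of the second-order Taylor remainder, followed by the Lipschitz bound on the Hessian and the computation $\int_0^1\int_0^t s\,ds\,dt=\tfrac{1}{6}$, is exactly the standard derivation. The paper itself provides no proof of this lemma---it is simply quoted from \citep{nesterov2017random}---so your self-contained version goes beyond what the paper does, but it matches the classical proof in Nesterov's work that the citation points to.
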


\begin{restatable}{lemma}{lemmagaussnorm1} \citep{nesterov2017random}
	\label{lem:gaussnorm1}
	Let $\boldsymbol{z} \sim \mathcal{N} (\boldsymbol{0} , \mI_n)$. For $0\le t \le 2$, we have
	\begin{align*}
		\E_{\boldsymbol{z}} [ \| \boldsymbol{z} \|^t ] \le n^{t/2}.
	\end{align*}
	For $t \ge 2$, we have
	\begin{align*}
		n^{t/2} \le \E_{\boldsymbol{z}} [ \| \boldsymbol{z} \|^t ] \le (n+t)^{t/2}.
	\end{align*}
\end{restatable}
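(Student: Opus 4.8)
The plan is to reduce the whole statement to a single chi-square moment and then dispatch the three regimes separately. First I would set $X = \|\boldsymbol{z}\|^2$, so that $X \sim \chi^2(n)$ by the definition of the chi-square variable and $\E_{\boldsymbol{z}}[\|\boldsymbol{z}\|^t] = \E_X[X^{t/2}]$. The lemma then becomes a statement about the $(t/2)$-th moment of $X$, and Lemma~\ref{lem:chi-F} supplies the one fact needed for the easy directions, namely $\E_X[X] = n$.

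For $0 \le t \le 2$ the exponent $t/2$ lies in $[0,1]$, so $x \mapsto x^{t/2}$ is concave on $[0,\infty)$; Jensen's inequality gives $\E_X[X^{t/2}] \le (\E_X[X])^{t/2} = n^{t/2}$, which is exactly the claimed bound. For the lower bound in the regime $t \ge 2$, the exponent $t/2 \ge 1$ makes $x \mapsto x^{t/2}$ convex, and the reverse application of Jensen yields $\E_X[X^{t/2}] \ge (\E_X[X])^{t/2} = n^{t/2}$. Both conclusions are immediate once the reduction to $X$ is in place.

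The substance is the upper bound for $t \ge 2$. Here I would compute the moment exactly from the chi-square density $f_X(x) = x^{n/2-1}e^{-x/2}/(2^{n/2}\Gamma(n/2))$ on $x>0$: after the substitution $x = 2u$ the integral collapses to a Gamma function, giving $\E_X[X^{t/2}] = 2^{t/2}\,\Gamma((n+t)/2)/\Gamma(n/2)$. Writing $a = n/2$ and $s = t/2 \ge 1$, the target inequality $\E_X[X^{t/2}] \le (n+t)^{t/2}$ is then equivalent to the Gamma-ratio bound $\Gamma(a+s)/\Gamma(a) \le (a+s)^s$. To establish this last inequality I would pass to logarithms and write $\ln\Gamma(a+s) - \ln\Gamma(a) = \int_a^{a+s}\psi(x)\,dx$, where $\psi = \Gamma'/\Gamma$ is the digamma function; applying the standard bound $\psi(x) \le \ln x$ for $x > 0$ together with the monotonicity of $\ln$ on $[a,a+s]$, the integral is at most $\int_a^{a+s}\ln(a+s)\,dx = s\ln(a+s)$, which exponentiates to precisely $\Gamma(a+s)/\Gamma(a) \le (a+s)^s$.

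The main obstacle is exactly this Gamma-ratio estimate: the two Jensen arguments are routine, but the sharp upper bound $(n+t)^{t/2}$ genuinely requires controlling the growth of $\Gamma$, and the cleanest route is the logarithmic-derivative inequality $\psi(x)\le\ln x$. If one prefers to avoid the digamma function entirely, an alternative is to observe that $\Gamma(a+s)/\Gamma(a) = \E[T^s]$ for $T \sim \mathrm{Gamma}(a,1)$ and bound this moment directly, but the integration of $\psi(x)\le\ln x$ over $[a,a+s]$ is the shortest self-contained argument.
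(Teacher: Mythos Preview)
Your argument is correct. The paper, however, does not supply a proof of this lemma at all: it is quoted verbatim from \citet{nesterov2017random} and used as a black box in the proof of Theorem~\ref{thm:oracles-mean}. So there is nothing to compare against in the paper itself.

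For completeness: your reduction to the chi-square moment $\E[X^{t/2}]$ and the two Jensen applications are exactly the right moves, and the upper bound for $t\ge 2$ via the digamma inequality $\psi(x)<\ln x$ is clean and self-contained. The one step a careful reader might want justified is $\psi(x)<\ln x$ for $x>0$; a one-line argument suffices (from $\ln\Gamma(x+1)-\ln\Gamma(x)=\ln x$ and the mean value theorem, there is $c\in(x,x+1)$ with $\psi(c)=\ln x$, and strict convexity of $\ln\Gamma$ makes $\psi$ increasing, so $\psi(x)<\psi(c)=\ln x$). With that remark your proof stands on its own and is in fact more explicit than what the paper provides.
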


\begin{restatable}{lemma}{lemmagaussnorm2}
	\label{lem:gaussnorm2}
	Let $\boldsymbol{z} \sim \mathcal{N} (\boldsymbol{0} , \mI_n)$. For all $\vy \in \R^n$, we have
	\begin{align*}
		\E_{\boldsymbol{z}} [ \| \langle \vy, \boldsymbol{z} \rangle \boldsymbol{z} \|^2 ] = (n+2) \| \vy \|^2.
	\end{align*}
\end{restatable}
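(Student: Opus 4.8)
The plan is to first rewrite $\E_{\boldsymbol{z}}[\|\langle\vy,\boldsymbol{z}\rangle\boldsymbol{z}\|^2] = \E_{\boldsymbol{z}}[\langle\vy,\boldsymbol{z}\rangle^2\|\boldsymbol{z}\|^2]$ and dispose of the trivial case $\vy=\vzero$, where both sides vanish. For $\vy\neq\vzero$, I would exploit the rotational invariance of the standard Gaussian: choose an orthogonal matrix $\mQ$ whose first column equals $\vy/\|\vy\|$, so that $\mQ^{\mathsf{T}}\vy = \|\vy\|\,\ve_1$ with $\ve_1$ the first standard basis vector, and apply Lemma~\ref{lem:gaussian} to the continuous function $\boldsymbol{z}\mapsto\langle\vy,\boldsymbol{z}\rangle^2\|\boldsymbol{z}\|^2$. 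This gives $\E_{\boldsymbol{z}}[\langle\vy,\boldsymbol{z}\rangle^2\|\boldsymbol{z}\|^2] = \E_{\boldsymbol{z}}[\langle\mQ^{\mathsf{T}}\vy,\boldsymbol{z}\rangle^2\|\boldsymbol{z}\|^2] = \|\vy\|^2\,\E_{\boldsymbol{z}}[z_1^2\|\boldsymbol{z}\|^2]$, using $\|\mQ\boldsymbol{z}\| = \|\boldsymbol{z}\|$.

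It then remains to evaluate $\E_{\boldsymbol{z}}[z_1^2\|\boldsymbol{z}\|^2]$. I would split $\|\boldsymbol{z}\|^2 = z_1^2 + \sum_{i=2}^n z_i^2$, noting that $\sum_{i=2}^n z_i^2 \sim \chi^2(n-1)$ is independent of $z_1$. Hence $\E_{\boldsymbol{z}}[z_1^2\|\boldsymbol{z}\|^2] = \E[z_1^4] + \E[z_1^2]\,\E\bigl[\sum_{i=2}^n z_i^2\bigr] = 3 + (n-1) = n+2$, where $\E[z_1^4]=3$ is the fourth moment of a standard normal and $\E\bigl[\sum_{i=2}^n z_i^2\bigr] = n-1$ follows from Lemma~\ref{lem:chi-F}. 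Multiplying back by $\|\vy\|^2$ yields $\E_{\boldsymbol{z}}[\|\langle\vy,\boldsymbol{z}\rangle\boldsymbol{z}\|^2] = (n+2)\|\vy\|^2$, as claimed. An alternative that avoids the rotation is to expand $\langle\vy,\boldsymbol{z}\rangle^2\|\boldsymbol{z}\|^2 = \sum_{i,j,k} y_i y_j z_i z_j z_k^2$ and observe, by independence and vanishing odd moments, that only the terms with $i=j$ contribute, then separate the cases $k=i$ (giving $\E[z_i^4]=3$) and $k\neq i$ (giving $\E[z_i^2 z_k^2]=1$); summing yields $\sum_i y_i^2(3 + (n-1)) = (n+2)\|\vy\|^2$.

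There is no genuine obstacle here: the statement is a routine second-moment computation for the standard Gaussian vector. The only place that calls for mild care is the setup of the rotational-invariance reduction (equivalently, in the direct-expansion route, the bookkeeping of which index patterns give a nonzero expectation), but both steps are elementary given Lemmas~\ref{lem:gaussian} and~\ref{lem:chi-F} already established above.
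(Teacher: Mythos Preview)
Your proposal is correct and follows essentially the same route as the paper: both use rotational invariance (Lemma~\ref{lem:gaussian}) to reduce to the case where $\vy$ points along the first coordinate axis, then compute $\E[z_1^2\|\boldsymbol{z}\|^2]$ by splitting off $z_1^4$ from the cross terms via independence and Lemma~\ref{lem:chi-F}. Your treatment is slightly more explicit about the scaling by $\|\vy\|^2$ and you additionally sketch the direct index-expansion alternative, but the core argument is identical.
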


\begin{proof}
	Note that for any orthogonal $(n \times n)$-matrix $\mQ$, we have
	\begin{align*}
		\| \langle \vy,\mQ\boldsymbol{z} \rangle \mQ\boldsymbol{z} \|^2 = \| \langle \mQ^{\mathsf{T}}\vy, \boldsymbol{z} \rangle\boldsymbol{z} \|^2, \quad \| \mQ^{\mathsf{T}}\vy \| = \| \vy \|.
	\end{align*}
	In accordance with Lemma~\ref{lem:gaussian}, we can set $\vy=[1, 0, \dots, 0]^{\mathsf{T}}$, and only need to prove $\E_{\boldsymbol{z}} [ \| \langle \vy, \boldsymbol{z} \rangle \boldsymbol{z} \|^2 ] = n+2$.
	Equipped with Lemma~\ref{lem:chi-F}, we get
	\begin{align*}
		\E_{\boldsymbol{z}} [ \| \langle \vy, \boldsymbol{z} \rangle \boldsymbol{z} \|^2 ] = \E_{\boldsymbol{z}} \left[ \sum_{i=1}^n\boldsymbol{z}_1^2\boldsymbol{z}_i^2 \right] = \sum_{i=1}^n  \E_{\boldsymbol{z}}[\boldsymbol{z}_1^2\boldsymbol{z}_i^2] = \E_{\boldsymbol{z}_1}[\boldsymbol{z}_1^4] + \E_{\boldsymbol{z}_1}[\boldsymbol{z}_1^2] \sum_{i=2}^n  \E_{\boldsymbol{z}}[\boldsymbol{z}_i^2] = n+2.
	\end{align*}
	The proof is completed.
\end{proof}

\theoremmean*

\begin{proof}
	 \textbf{a)} Evidently, the conclusion is established based on Lemma~\ref{lem:UV-proj} and Lemma~\ref{lem:block-diag}.
	
	\textbf{b)} Let $a_{\vz}(\tau)=f(\vx+\tau\vz)-f(\vx)-\tau\langle \nabla f(\vx),\vz \rangle - \frac{\tau^2}{2} \langle \nabla^2f(\vx)\vz,\vz \rangle$. Lemma~\ref{lem:C22-func} implies that
	\begin{align*}
		| a_{\vz}(\pm\varepsilon) | \le \frac{\varepsilon^3}{6}L_2\|\vz\|^3.
	\end{align*}
	Note that
	\begin{align*}
		& \mathbb{E}_{\boldsymbol{z}}[ \hat{g}_{\varepsilon}(\vx, \mP, \boldsymbol{z}) ] - \mP\mP^{\mathsf{T}}\nabla f(\vx) \\ & = \frac{\mP}{2\kappa\varepsilon} \int_{\R^q} [f(\vx+\varepsilon\mP\vz)-f(\vx-\varepsilon\mP\vz)-2\varepsilon\langle \nabla f(\vz),\mP\vz \rangle] \vz e^{-\frac{1}{2}\| \vz \|^2} d\vz.
	\end{align*}
	Therefore, in accordance with Lemma~\ref{lem:gaussnorm1}, we have
	\begin{align*}
		& \| \mathbb{E}_{\boldsymbol{z}}[ \hat{g}_{\varepsilon}(\vx, \mP, \boldsymbol{z}) ] - \mP\mP^{\mathsf{T}}\nabla f(\vx) \| \\ 
		& \le \frac{1}{2\kappa\varepsilon} \int_{\R^q} |f(\vx+\varepsilon\mP\vz)-f(\vx-\varepsilon\mP\vz)-2\varepsilon\langle \nabla f(\vz),\mP\vz \rangle| \|\vz\| e^{-\frac{1}{2}\| \vz \|^2} d\vz \\
		& = \frac{1}{2\kappa\varepsilon} \int_{\R^q} | a_{\mP\vz}(\varepsilon)-a_{\mP\vz}(-\varepsilon) | \|\vz\| e^{-\frac{1}{2}\| \vz \|^2} d\vz \\
		& \le \frac{\varepsilon^2 L_2}{6\kappa} \int_{\R^q} \|\vz\|^4 e^{-\frac{1}{2}\| \vz \|^2}  d\vz \le \frac{\varepsilon^2}{6}L_2(q+4)^2.
	\end{align*}
	The proof is completed.
\end{proof}

\theoremquadratic*

\begin{proof}
	It is easy to check that $\hat{g}_{\varepsilon}(\vx, \mP, \boldsymbol{z}) = \mP\langle \mP^{\mathsf{T}}\nabla f(\vx),\boldsymbol{z} \rangle \boldsymbol{z}$. Thus we have $\mathbb{E}_{\boldsymbol{z}}[ \hat{g}_{\varepsilon}(\vx, \mP, \boldsymbol{z}) ] = \mP\mP^{\mathsf{T}}\nabla f(\vx)$. Combined with Lemma~\ref{lem:gaussnorm2}, we get $	\mathbb{E}_{\boldsymbol{z}}[ \| \hat{g}_{\varepsilon}(\vx, \mP, \boldsymbol{z}) \|^2 ] = (q+2) \| \mP^{\mathsf{T}}\nabla 	f(\vx) \|^2$. Note that for any orthogonal $(q \times q)$-matrix $\mQ$, we have
	\begin{align*}
		\mathbb{E}_{\boldsymbol{z}} \left[ \frac{\langle \nabla f(\vx), \hat{g}_{\varepsilon}(\vx, \mP, \boldsymbol{z}) 	\rangle^2}{\| \mP^{\mathsf{T}}\nabla f(\vx) \|^2 \| \hat{g}_{\varepsilon}(\vx, \mP, \boldsymbol{z}) \|^2} \right] & = \mathbb{E}_{\boldsymbol{z}} \left[ \frac{\langle \mP^{\mathsf{T}}\nabla f(\vx),\boldsymbol{z} \rangle^2}{\| \mP^{\mathsf{T}}\nabla f(\vx) \|^2\| \boldsymbol{z} \|^2} \right] \\
		& = \mathbb{E}_{\boldsymbol{z}} \left[ \frac{\langle \mP^{\mathsf{T}}\nabla f(\vx),\mQ\boldsymbol{z} \rangle^2}{\| \mP^{\mathsf{T}}\nabla f(\vx) \|^2\| \mQ\boldsymbol{z} \|^2} \right] \\
		& = \mathbb{E}_{\boldsymbol{z}} \left[ \frac{\langle \mQ^{\mathsf{T}}\mP^{\mathsf{T}}\nabla f(\vx),\boldsymbol{z} \rangle^2}{\| \mQ^{\mathsf{T}}\mP^{\mathsf{T}}\nabla f(\vx) \|^2\| \boldsymbol{z} \|^2} \right] .
	\end{align*}
	In accordance with Lemma~\ref{lem:gaussian}, we can set $\mP^{\mathsf{T}}\nabla f(\vx)=[1, 0, \dots, 0]^{\mathsf{T}}$. Thus we have
	\begin{align*}
		\mathbb{E}_{\boldsymbol{z}} \left[ \frac{\langle \nabla f(\vx), \hat{g}_{\varepsilon}(\vx, \mP, \boldsymbol{z}) 	\rangle^2}{\| \mP^{\mathsf{T}}\nabla f(\vx) \|^2 \| \hat{g}_{\varepsilon}(\vx, \mP, \boldsymbol{z}) \|^2} \right] = \mathbb{E}_{\boldsymbol{z}} \left[ \frac{\boldsymbol{z}_1^2}{\|\boldsymbol{z}\|^2} \right] = \frac{1}{q}.
	\end{align*}
	The proof is completed.
\end{proof}


To illustrate the convergence of Subzero with SGD, our analysis is divided into two main segments. We first investigate the convergence behavior of SubZero solution process while keeping the projection matrix \(\mP\) constant. Next, we evaluate the effects of the lazy updates to  \(\mP\). Based on these evaluations, we establish the global convergence of Subzero. Without loss of generality, we concentrate on the scenario where the number of layers is 1.
 
First, when the subspace $\mP$ is fixed, the original problem of SubZero can be reformulated as an optimization problem within the subspace. Define $h(\vy) = f(\vx + \mP\vy)$, $h_\varepsilon(\vy) = \mathbb{E}_{\vz}[h(\vy+\varepsilon \vz)]$, and  $g_\varepsilon (\vy)= \frac{h(\vy + \varepsilon \boldsymbol{z})-f(\vy)}{\varepsilon}\boldsymbol{z} $. According to Lemma~\ref{lem:preserveconvexandsmooth}, if $f$ is first $L_1$-smooth, then $h$ is also first $L_1$-smooth.

\begin{restatable}{lemma}{lemmapreserveconvexandsmooth}
	\label{lem:preserveconvexandsmooth}
	Let $h(\vy) = f(\vx + \mP\vy)$, where $f\in C^{1,1}_{L_1}(\mathbb{R}^d)$, and $\mP^\mathsf{T}\mP = \mI$, then we have $h \in C^{1,1}_{L_1}(\mathbb{R}^q)$.
\end{restatable}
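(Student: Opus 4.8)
The plan is to verify the two claimed properties---convexity of $h$ and $L_1$-smoothness of its gradient---directly from the definition $h(\vy)=f(\vx+\mP\vy)$, exploiting the chain rule together with the isometry property $\mP^{\mathsf{T}}\mP=\mI$. First I would compute the gradient of $h$. By the chain rule, $\nabla h(\vy)=\mP^{\mathsf{T}}\nabla f(\vx+\mP\vy)$, so $h$ is differentiable with a continuous gradient since $f\in C^{1,1}_{L_1}$.

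For convexity, I would use the first-order characterization. Since $f$ is convex, for any $\vy_1,\vy_2\in\R^q$ we have $f(\vx+\mP\vy_2)\ge f(\vx+\mP\vy_1)+\langle\nabla f(\vx+\mP\vy_1),\mP\vy_2-\mP\vy_1\rangle$. Rewriting the inner product as $\langle\mP^{\mathsf{T}}\nabla f(\vx+\mP\vy_1),\vy_2-\vy_1\rangle=\langle\nabla h(\vy_1),\vy_2-\vy_1\rangle$ gives exactly $h(\vy_2)\ge h(\vy_1)+\langle\nabla h(\vy_1),\vy_2-\vy_1\rangle$, so $h$ is convex. (Alternatively one could argue $h$ is the composition of the convex function $f$ with the affine map $\vy\mapsto\vx+\mP\vy$, which preserves convexity; I would likely mention this as the cleaner route.)

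For $L_1$-smoothness, I would bound $\|\nabla h(\vy_1)-\nabla h(\vy_2)\|$. Using the gradient formula and submultiplicativity of the spectral norm,
\begin{align*}
\|\nabla h(\vy_1)-\nabla h(\vy_2)\| &= \|\mP^{\mathsf{T}}(\nabla f(\vx+\mP\vy_1)-\nabla f(\vx+\mP\vy_2))\| \\
&\le \|\mP^{\mathsf{T}}\|\,\|\nabla f(\vx+\mP\vy_1)-\nabla f(\vx+\mP\vy_2)\| \\
&\le \|\mP\|\, L_1\,\|\mP\vy_1-\mP\vy_2\| \\
&\le L_1\,\|\mP\|^2\,\|\vy_1-\vy_2\|,
\end{align*}
and since $\mP^{\mathsf{T}}\mP=\mI$ the columns of $\mP$ are orthonormal, so $\|\mP\|=1$, yielding the bound $L_1\|\vy_1-\vy_2\|$. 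This shows $h\in C^{1,1}_{L_1}(\R^q)$.

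There is no real obstacle here---the statement is essentially a bookkeeping lemma recording that composing with a partial isometry preserves convexity and the smoothness constant without inflation. The only point that requires a moment's care is observing that $\mP^{\mathsf{T}}\mP=\mI$ forces $\|\mP\|=1$ (not merely $\|\mP\|\le$ something), which is what keeps the Lipschitz constant exactly $L_1$ rather than a larger multiple; this is immediate since $\|\mP\vy\|^2=\vy^{\mathsf{T}}\mP^{\mathsf{T}}\mP\vy=\|\vy\|^2$.
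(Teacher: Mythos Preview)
Your proposal is correct and follows essentially the same approach as the paper: convexity via composition with an affine map (the paper simply cites affine invariance, while you additionally spell out the first-order characterization), and $L_1$-smoothness via the chain rule plus the isometry property $\mP^{\mathsf{T}}\mP=\mI$ to keep the Lipschitz constant unchanged. The only cosmetic difference is that the paper uses $\|\mP(\vy_1-\vy_2)\|=\|\vy_1-\vy_2\|$ directly in the last step rather than bounding it by $\|\mP\|\,\|\vy_1-\vy_2\|$, but this is immaterial.
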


\begin{proof}
	
	The following proves that if $f$ is first $L_1$-smooth, then $h$ is also first $L_1$-smooth. For any $\vy_1 \in \mathbb{R}^q$ and $\vy_2 \in \mathbb{R}^q$, we have
	\begin{align*}
		\left \| \nabla h(\vy_1) - \nabla h(\vy_2) \right \| &= \left \| \mP^\mathsf{T} \nabla(f(\vx + \mP \vy_1) - \boldsymbol{P}^\mathsf{T} \nabla(f(\vx + \mP \vy_2)\right \|  \\
		& \le \left \| \mP^\mathsf{T}\right \| \left \|\nabla(f(\vx + \mP \vy_1) -  \nabla(f(\vx + \mP \vy_2) \right \| \\
		& \le  L_1 \left \| \mP(\vy_1 - \vy_2) \right \| \\
		&= L_1 \left \|\vy_1 -\vy_2 \right \|.
	\end{align*}
	
	The proof is completed. 	
\end{proof}

Now, we can analyze the convergence of SubZero when fixing the subspace.

\begin{restatable}{lemma}{lemmanestlem5} \cite{nesterov2017random}
	\label{len:nestlem5}
	Let $f\in C_{L_1}^{1,1}(\mathbb{R})$. Then, for any $\vx \in \mathbb{R}$, we have 
	\begin{align}
		E_\vz[\|g_\varepsilon(\vx)\|^2] &= E_\vz\left[\|\frac{f(\vx + \varepsilon \vz) - f(\vx)}{\varepsilon}\|^2\right] \leq 4(n+4)\|\nabla f_\varepsilon(\vx)\|^2+3\varepsilon^2L_1^2(f)(n+4)^3,
	\end{align}
	and
	\begin{align}
		\|\nabla f(\vx)\|^2\leq2\|\nabla f_\varepsilon(\vx)\|^2+\frac{\varepsilon^2}{2}L_1^2(f)(n+6)^3,
	\end{align}
    where $f_\varepsilon(\vx) = \mathbb{E}_\vz[f(\vx + \varepsilon\vz)]$.
\end{restatable}

\begin{restatable}{lemma}{lemmasubspaceconvergence}
	\label{lem:sub_convergence}
	Let $\vy^*$ = $\argmin_{\vx \in \mathbb{R}^q}h(\vy)$, where $h \in C^{1,1}_{L_1}(\mathbb{R}^q)$ and $h$ is non-convex. Suppose $\mathcal{E}_{k} = (\vz_0, \vz_1, \cdots, \vz_{k-1}, \vz_{k})$, where $\vz_{k} \sim \mathcal{N}(0, \mI_q)$ and $\eta = \frac{1}{4(q+4)L_1}$. $\{\vy_{k}\}_{k>0}$ is the sequence generated by Algorithm~\ref{alg:ss_mezo}. Let \( \phi_{0} = h(\vy_{0}) \), and for \( k \geq 1 \), \( \phi_{k} = \mathbb{E}_{\mathcal{E}_{k-1}}[h(\vy_{k})] \). For the $\boldsymbol{P}$ defined in~\eqref{eq:def_P}, which is fixed,  we have 
	\begin{align}
		\phi_{k+1} - \phi_{k} \leq  - \frac{1}{4} \eta \mathbb{E}_{\mathcal{E}_{k}}\left[\|\nabla h(\vy_{k})\|^{2}\right]  + \frac{\varepsilon^{2}(q+6)^{3}}{8}L_{1}^{2}+ \frac{3\varepsilon^{2}(q+4)}{32}L_{1}
	\end{align}
\end{restatable}

\begin{proof}
	
	If a subspace \( \mP \in \mathbb{R}^{d \times q} \) is fixed, the optimization objective can be reformulated as
	\[
	\min_{\vy \in \mathbb{R}^{q}} h(\vy) := f(\vx + \mP \vy),
	\]
	
	 Let \( \vy_{0} \) be an initial point and \( \{\eta_{k}\}_{k \geq 0} \) a sequence of positive real numbers. Consider the randomized gradient search algorithm \( \mathcal{R} \mathcal{G}_{\varepsilon} (\varepsilon > 0) \):
	
	1) Generate \( \vz_{k} \) and the corresponding \( g_{\varepsilon}(\vy_{k}) \), where \( \vz_{k} \sim \mathcal{N}(\vzero, \mI_{q}) \).
	
	2) Update \( \vy_{k+1} = \vy_{k} - \eta_k g_{\varepsilon}(\vy_{k}) \).
	
	We aim to estimate the evolution of the function \( h_{\varepsilon} \) after one iteration of this algorithm.
	
	Given that \( h \) is \( L_1 \)-Lipschitz continuous for the first derivative, and \( h_{\varepsilon} \) is \(L_\varepsilon\)-Lipschitz continuous for the first derivative (where \(L_\varepsilon \leq L_1\))\citep{nesterov2017random}. Thus, we have
	\[
	h_{\varepsilon}(\vy_{k+1}) \le h_{\varepsilon}(\vy_{k}) - \eta_{k} \langle \nabla h_{\varepsilon}(\vy_{k}), g_{\varepsilon}(\vy_{k}) \rangle + \frac{1}{2} \eta_{k}^{2} L_\varepsilon\|g_{\varepsilon}(\vy_{k})\|^{2}.
	\]
	Taking expectation with respect to \( \vz_{k} \), we obtain
	\[
	\mathbb{E}_{\vz_{k}}[h_{\varepsilon}(\vy_{k+1})] \le h_{\varepsilon}(\vy_{k}) - \eta_{k} \|\nabla h_{\varepsilon}(\vy_{k})\|^{2} + \frac{1}{2} \eta_{k}^{2} L_{\varepsilon} \, \mathbb{E}_{\vz_{k}}[\|g_{\varepsilon}(\vy_{k})\|^{2}].
	\]
	Since \( h \in C^{1,1}(\mathbb{R}^{q}) \), from Lemma~\ref{len:nestlem5}, we have 
	\[
	\begin{aligned}
		\mathbb{E}_{\vz_{k}}[h_{\varepsilon}(\vy_{k+1})] &\le h_{\varepsilon}(\vy_{k}) - \eta_{k} \|\nabla h_{\varepsilon}(\vy_{k})\|^{2} \\
		&\quad + \frac{1}{2} \eta_{k}^{2} L_{1} \left(4(q + 4) \|\nabla h_{\varepsilon}(\vy_{k})\|^{2} + 3 \varepsilon^{2} L_{1}^{2}(q + 4)^{3}\right).
	\end{aligned}
	\]
	Setting \( \eta_{k} = \hat{\eta} = \frac{1}{4(q + 4)L_{1}} \), we get
	\[
	\begin{aligned}
		\mathbb{E}_{\vz_{k}}[h_{\varepsilon}(\vy_{k+1})] &\le h_{\varepsilon}(\vy_{k}) - \frac{1}{2} \hat{\eta} \|\nabla h_{\varepsilon}(\vy_{k})\|^{2} + \frac{3 \varepsilon^{2}}{32} L_{1}(q + 4).
	\end{aligned}
	\]
	Taking the expectation with respect to \( \mathcal{E}_{k} \), we get
	\[
	\phi_{k+1} \le \phi_{k} - \frac{1}{2} \hat{\eta} \mathbb{E}_{\mathcal{E}_{k}}[\|\nabla h_{\varepsilon}(\vy_{k})\|^{2}] + \frac{3 \varepsilon^{2}(q + 4)}{32} L_{1},
	\]
	From Lemma~\ref{len:nestlem5}, we have \(  \mathbb{E}_{\mathcal{E}_{k}}[\|\nabla h(\vy_{k})\|^{2}] \le 2 \mathbb{E}_{\mathcal{E}_{k}}[\|\nabla h_{\varepsilon}(\vy_{k})\|^{2}] + \frac{\varepsilon^{2} (q + 6)^{3}}{2} L_{1}^{2} \).
Therefore,
	\begin{align}
		\phi_{k+1} - \phi_{k} \leq -\frac{1}{4} \hat{\eta} \mathbb{E}_{\mathcal{E}_{k}}\left[\|\nabla h(\vy_{k})\|^{2}\right]  + \frac{\varepsilon^{2}(q+6)^{3}}{8}L_{1}^{2}+ \frac{3\varepsilon^{2}(q+4)}{32}L_{1}.
	\end{align}
	The proof is completed. 	
\end{proof}

Next, we need to measure the randomness of our random subspace. From Lemma~\ref{lem:randomproj3}, if the projection matrix is obtained by Algorithm~\ref{alg:getortho}, we have $\mathbb{E}[\mP \mP^T] = \frac{q}{d} \mI$, where $q$ represents the dimension of the subspace, $d$ represents the dimension of the origin space, and $\mP = \mV \otimes \mU$ (see Lemma~\ref{lem:UV-proj}).

\begin{restatable}{lemma}{lemmarandomproj0}
	\label{lem:randomproj0}
	Let matrix $\mA = (\va_1, \va_2, \cdots, \va_r) \in \mathbb{R}^{n \times r}$ be composed of column vectors $\va_k$ which are mutually independent and $\va_k \in \mathcal{N}(0, \mI_n)$. Suppose Gram-Schmidt process $\vu_k = \va_k - \sum_{s=1}^{k-1}\left \langle  \va_k, \ve_s\right \rangle \ve_s $ and $\ve_k = \frac{\vu_k}{\left \| \vu_k \right \| }$. $[\va_k]_i\leftrightarrow [\va_k]_j$ represents the exchange of the $i$-th element and the $j$-th element of $\va_k$, while all other elements remain unchanged. $[\va_k]_i = -1 \times [\va_k]_i$ signifies that only the $i$-th element of $\va_k$ is multiplied by $-1$, while all other elements remain unchanged. Suppose $f(\mA, \mU, \mE)$ be a function of the matrix $\mA$, $\mU = (\vu_1, \vu_2, \cdots, \vu_r)$ and $\mE = (\ve_1, \ve_2, \cdots, \ve_r)$, then 
    
    (1) if $[\va_k]_i\leftrightarrow [\va_k]_j$ or $[\va_k]_i = -1 \times [\va_k]_i$, $\mathbb{E}[f]$ remain unchanged.
    
    (2) if $[\va_k]_i\leftrightarrow [\va_k]_j \Rightarrow [\vu_k]_i\leftrightarrow [\vu_k]_j$ and $[\ve_k]_i\leftrightarrow [\ve_k]_j$.

    (3) if  $[\va_k]_i = -1 \times [\va_k]_i\Rightarrow [\vu_k]_i = -1 \times [\vu_k]_i$ , $[\ve_k]_i = -1 \times [\ve_k]_i$, $[\vu_k]_j = 1 \times [\vu_k]_j$, and $[\ve_k]_j = 1 \times [\ve_k]_j$, where $i \neq j$.

    (4) $\mathbb{E}\left[\frac{[\vu_k]_i^2}{\left \langle \vu_k, \vu_k\right \rangle}\right] = \frac{1}{n}$.

    (5) $\mathbb{E}\left[\frac{[\vu_k]_i [\vu_k]_j}{\left \langle \vu_k, \vu_k\right \rangle}\right] = 0$, where $i \neq j$.
\end{restatable}

\begin{proof}

According to real analysis, the matrix $\mA$ is full rank almost everywhere under a Gaussian distribution, and both $\vu_k$ and $\ve_k$ are non-zero almost everywhere.

(1) Since $\va_k$ is independently and identically distributed, it obviously holds.

(2) For base case $k=1$, it obviously holds. Assume the result holds for all $k= 1,2,\cdots, k-1$, where $k \ge 2$, then
$[\va_k]_i\leftrightarrow [\va_k]_j \Rightarrow [\vu_k]_i = [\va_k]_i - \sum_{s=1}^{k-1}\left \langle  \va_k, \ve_s\right \rangle [\ve_s]_i $, $[\vu_k]_j = [\va_k]_j - \sum_{s=1}^{k-1}\left \langle  \va_k, \ve_s\right \rangle [\ve_s]_j$, $[\ve_k]_i = \frac{[\vu_k]_i}{\left \| \vu_k\right \|}$, and  $[\ve_k]_j = \frac{[\vu_k]_j}{\left \| \vu_k\right \|}.$

Thus, by strong induction, we have $[\vu_k]_i\leftrightarrow [\vu_k]_j$ and $[\ve_k]_i\leftrightarrow [\ve_k]_j$.

(3) For base case $k=1$, it obviously holds. Assume the result holds for all $k= 1,2,\cdots, k-1$, where $k \ge 2$, then

\begin{align*}
[\va_k]_i = -1 \times [\va_k]_i &\Rightarrow \left\{\begin{matrix}
	[\vu_k]_i = [\va_k]_i \times (-1) - \sum_{s=1}^{k-1}\left \langle  \va_k, \ve_s\right \rangle [\ve_s]_i \times (-1) = [\vu_k]_i \times (-1) \\
	[\vu_k]_j = [\vu_k]_j \times 1, i \neq j 
\end{matrix}\right. \\
&\Rightarrow \left\{\begin{matrix}
	[\ve_k]_i \times (-1)= \frac{[\vu_k]_i}{\left \| \vu_k\right \|} \times (-1)\\
	[\ve_k]_j = [\ve_k]_j \times 1, j \neq i
\end{matrix}\right.
\end{align*}

By strong induction, we have $[\vu_k]_i = -1 \times [\vu_k]_i$ , $[\ve_k]_i = -1 \times [\ve_k]_i$, $[\vu_k]_j = 1 \times [\vu_k]_j$, and $[\ve_k]_j = 1 \times [\ve_k]_j$, where $i \neq j$.
\end{proof}

(4) Since $ \left | \frac{[\vu_k]_i^2}{\left \langle \vu_k, \vu_k\right \rangle} \right | \le 1$, $\mathbb{E}\left[\frac{[\vu_k]_i^2}{\left \langle \vu_k, \vu_k\right \rangle} \right] $ exists. $[\va_k]_i\leftrightarrow [\va_k]_j \Rightarrow \frac{[\vu_k]_i^2}{\left \langle \vu_k, \vu_k\right \rangle} \leftrightarrow \frac{[\vu_k]_j^2}{\left \langle \vu_k, \vu_k\right \rangle}$.

Thus, $\mathbb{E}\left[\frac{[\vu_k]_i^2}{\left \langle \vu_k, \vu_k\right \rangle}\right] \times n = \sum_{s=1}^{n} \mathbb{E}\left[\frac{[\vu_k]_s^2}{\left \langle \vu_k, \vu_k\right \rangle}\right] = \mathbb{E}\left[\frac{\left \langle \vu_k, \vu_k\right \rangle}{\left \langle \vu_k, \vu_k\right \rangle}\right] = 1 \Rightarrow \mathbb{E}\left[\frac{[\vu_k]_i^2}{\left \langle \vu_k, \vu_k\right \rangle}\right] = \frac{1}{n}$.

(5) Since $\left | \frac{[\vu_k]_i [\vu_k]_j}{\left \langle \vu_k, \vu_k\right \rangle} \right | \le \left | \frac{[\vu_k]_i^2 +  [\vu_k]_j^2}{2\left \langle \vu_k, \vu_k\right \rangle} \right | \le 1$, $ \mathbb{E}\left[ \frac{[\vu_k]_i [\vu_k]_j}{\left \langle \vu_k, \vu_k\right \rangle}\right]$ exists. 

$[\va_k]_i = [\va_k]_i \times -1 \Rightarrow \mathbb{E}\left[\frac{[\vu_k]_i [\vu_k]_j}{\left \langle \vu_k, \vu_k\right \rangle}\right] = \mathbb{E}\left[\frac{-[\vu_k]_i [\vu_k]_j}{\left \langle \vu_k, \vu_k\right \rangle}\right] = 0$, where $i \neq j$.

\begin{restatable}{lemma}{lemmarandomproj}
	\label{lem:randomproj}
	Let \(\mA \in \mathbb{R}^{n \times r}\) be a matrix with independent standard normal entries, i.e., each element of \(\mA\) is an i.i.d. \(\mathcal{N}(0,1)\) random variable. Suppose \(\mA\) undergoes QR decomposition via the Gram-Schmidt process to yield a column-orthogonal matrix \(\mQ \in \mathbb{R}^{n \times r}\) with orthonormal columns \(\ve_1, \ve_2, \ldots, \ve_r\) and an upper triangular matrix \(\mR \in \mathbb{R}^{r \times r}\). Then, for each \(k = 1, 2, \ldots, r\), the expected value of the outer product of the \(k\)-th orthonormal column vector \(\ve_k\) of \(\mQ\) is given by:
	\[
	\mathbb{E}[\ve_k \ve_k^T] = \frac{1}{n} \mI,
	\]
	where \(\mI\) is the \(n \times n\) identity matrix.
\end{restatable}

\begin{proof}

By the Gram-Schmidt process, we have \(\ve_k = \frac{\vu_k}{\|\vu_k\|}\), where \(\vu_k = \va_k - \sum_{s=1}^{k-1}\langle \va_k, \ve_s \rangle \ve_s\). Thus, \(\ve_k \ve_k^T = \frac{\vu_k \vu_k^T}{\langle \vu_k, \vu_k \rangle}\).

The \((i,j)\)-th entry of \(\mathbb{E}[\ve_k \ve_k^T]\) can be written as:
\[
\mathbb{E}[[\ve_k \ve_k^T]_{ij}] = \mathbb{E}\left[\frac{[\vu_k]_i [\vu_k]_j}{\langle \vu_k, \vu_k \rangle}\right].
\]

For diagonal entries (\(i = j\)):
When \(i = j\), from Lemma~\ref{lem:randomproj0}(4), we have:
\[
\mathbb{E}[[\ve_k \ve_k^T]_{ii}] = \mathbb{E}\left[\frac{[\vu_k]_i^2}{\langle \vu_k, \vu_k \rangle}\right] = \frac{1}{n}.
\]

For off-diagonal entries (\(i \neq j\)):
When \(i \neq j\), from Lemma~\ref{lem:randomproj0}(5), we have:
\[
\mathbb{E}[[\ve_k \ve_k^T]_{ij}] = \mathbb{E}\left[\frac{[\vu_k]_i [\vu_k]_j}{\langle \vu_k, \vu_k \rangle}\right] = 0.
\]

Combining these two cases, we conclude that \(\mathbb{E}[\ve_k \ve_k^T]\) is a diagonal matrix with all diagonal entries equal to \(\frac{1}{n}\). Thus,
\[
\mathbb{E}[\ve_k \ve_k^T] = \frac{1}{n} \mI,
\]
where \(\mI\) is the \(n \times n\) identity matrix.  The proof is completed. 	
\end{proof}

\begin{restatable}{lemma}{lemmarandomproj2}
	\label{lem:randomproj2}
	Let \(\mA \in \mathbb{R}^{n \times r}\) be a matrix with independent standard normal entries, i.e., each element of \(\mA\) is an i.i.d. \(\mathcal{N}(0,1)\) random variable. Suppose \(\mA\) undergoes QR decomposition to yield an orthogonal matrix \(\mQ \in \mathbb{R}^{n \times r}\) with orthonormal columns and an upper triangular matrix \(\mR \in \mathbb{R}^{r \times r}\). Then, the expected value of the outer product of the matrix \(\mQ\) with itself is given by:
	\[
	\mathbb{E}[\mQ \mQ^T] = \frac{r}{n} \mI
	\]
	where \(\mI\) is the \(n \times n\) identity matrix.
\end{restatable}

\begin{proof}
	
	The QR decomposition of \(\mA\) is given by \(\mA = \mQ \mR\), where \(\mQ\) is an orthogonal matrix with columns \(\ve_1, \ve_2, \ldots, \ve_r\) and \(\mR\) is an upper triangular matrix. Since \(\mQ\) is orthogonal, \(\mQ \mQ^T = \mI_r\), where \(\mI_r\) is the \(r \times r\) identity matrix. We aim to compute \(\mathbb{E}[\mQ \mQ^T]\). By linearity of expectation and the fact that the columns of \(\mQ\) are orthonormal, we have:
	\[
	\mathbb{E}[\mQ \mQ^T] = \mathbb{E}\left[\sum_{k=1}^r \ve_k \ve_k^T\right] = \sum_{k=1}^r \mathbb{E}[\ve_k \ve_k^T].
	\]
	From Lemma~\ref{lem:randomproj}, we know that \(\mathbb{E}[\ve_k \ve_k^T] = \frac{1}{n} \mI\) for each \(k\). Therefore:
	\[
	\mathbb{E}[\mQ \mQ^T] = \sum_{k=1}^r \frac{1}{n} \mI = \frac{r}{n} \mI.
	\]
	The proof is completed. 	
\end{proof}

\begin{restatable}{lemma}{lemmarandomproj3}
	\label{lem:randomproj3}
	Let \(\mA_1 \in \mathbb{R}^{m \times r}\) and \(\mA_2 \in \mathbb{R}^{n \times r}\) be matrices with independent standard normal entries, i.e., each element of \(\mA_1\) and \(\mA_2\) is an i.i.d. \(\mathcal{N}(0,1)\) random variable. Suppose \(\mA_1\) and \(\mA_2\) undergo QR decomposition to yield orthogonal matrices \(\mQ_1 \in \mathbb{R}^{m \times r}\) and \(\mQ_2 \in \mathbb{R}^{n \times r}\) with orthonormal columns, respectively. Define \(\mP = \mQ_2 \otimes \mQ_1\), where \(\otimes\) denotes the Kronecker product. Then, the expected value of the outer product of the matrix \(\mP\) with itself is given by:
	\[
	\mathbb{E}[\mP \mP^T] = \frac{r^2}{mn} \mI,
	\]
	where \(\mI\) is the \(mn \times mn\) identity matrix.
\end{restatable}

\begin{proof}
	
	The Kronecker product \(\mP = \mQ_2 \otimes \mQ_1\) results in a matrix \(\mP \in \mathbb{R}^{mn \times r^2}\). 
	From Lemma~\ref{lem:randomproj2}, we have \(\mQ_1 \mQ_1^T = \frac{r}{m}\mI\) and \(\mQ_2 \mQ_2^T = \frac{r}{n}\mI\). We aim to compute \(\mathbb{E}[\mP \mP^T]\). Using the properties of the Kronecker product, we have:
	\[
	\mathbb{E}[\mP \mP^T] = \mathbb{E}[(\mQ_2 \otimes \mQ_1)(\mQ_2^T \otimes \mQ_1^T)] = \mathbb{E}[(\mQ_2 \mQ_2^T)] \otimes \mathbb{E}[(\mQ_1 \mQ_1^T)] = \frac{r^2}{mn}\mI \otimes \mI = \frac{r^2}{mn}\mI
	\]
	The proof is completed. 	
\end{proof}

Now we can  assess the impact of the lazy updates to \(\mP\).

\theoremconvergence*
\begin{proof}
	
	Suppose $\mathcal{P}_j = (\mP_0, \mP_1, \cdots, \mP_j)$, where $\mP_j$ is the sequence generated by Eqn.~\eqref{eq:def_P} and $j \leq K$. In accordance with Lemma~\ref{lem:preserveconvexandsmooth} and Lemma~\ref{lem:sub_convergence}, if the subspace is fixed, we can transform the original problem $f\in C^{1,1}_{L_1}(\mathbb{R}^d)$ into $h \in C^{1,1}_{L_1}(\mathbb{R}^q)$ through transformation $h(\vy) = f(\vx + \mP\vy)$.  Consider the update rule:
    \begin{align}
        \vy_{j, 0} &= 0, h_j(\vy) = f(\vx_{jF} + \mP_j \vy), \forall j \in {0,1,\cdots, K-1} \\
        \vy_{j, k}  &= \vy_{j, k-1} - \eta \widehat{\nabla}h_j(\vy_{j,k-1}), \forall  k \in {0,1,\cdots,F} \\
        \vx_{jF+k} &= \vx_{jF} + \mP_j \vy_{k},
    \end{align}
	 In the $j$-th subspace, the projection matrix $\mP_j$ remains constant, hence we can accumulate the changes of $\phi$ within the current subspace. Using Lemma~\ref{lem:sub_convergence}, we have
	\begin{align}
		\phi_{(j+1)F} - \phi_{jF} &\leq -\frac{1}{4} \hat{\eta} \sum_{i=0}^{K-1} \mathbb{E}_{\mathcal{E}_{jF+i}} \left[\|\nabla h_j(\vy_{j, i})\|^{2}\right] + \frac{\varepsilon^{2} (q + 6)^{3}}{8} K L_{1}^{2} + \frac{3 \varepsilon^{2} (q + 4)}{32} K L_{1}\\
         &\leq -\frac{1}{4} \hat{\eta} \mathbb{E}_{\mathcal{E}_{jF}} \left[\|\nabla h_j(\vy_{j, 0})\|^{2}\right] + \frac{\varepsilon^{2} (q + 6)^{3}}{8} K L_{1}^{2} + \frac{3 \varepsilon^{2} (q + 4)}{32} K L_{1}.
	\end{align}
	Additionally, we note that \(\nabla h_j(\vy_{j, 0}) = (\mP_{j})^\mathsf{T} \nabla f(\vx_{jF})\).  Taking expectations over the overall historical projection matrix \(\mathcal{P}_j\), and noting Lemma~\ref{lem:randomproj3}, \(\mathbb{E}[\mP_{j} (\mP_{j})^\mathsf{T}] = \frac{q}{d} \mI\), with \(\mP_j\) independent of \(\vx_{jF}\), we get
	\begin{align}
		\mathbb{E}_{\mathcal{P}_{j+1}}[\phi_{(j+1)F}] - \mathbb{E}_{\mathcal{P}_j}[\phi_{jF}] &\leq - \frac{1}{4} \hat{\eta} \mathbb{E}_{\mathcal{E}_{jF}, \mathcal{P}_j} \left[\|(\mP_{j})^\mathsf{T} \nabla f(\vx_{jF})\|^{2}\right] + \frac{\varepsilon^{2} (q + 6)^{3}}{8} KL_{1}^{2} + \frac{3 \varepsilon^{2} (q + 4)}{32} KL_{1} \\
		&= -\frac{q}{4d} \hat{\eta} \mathbb{E}_{\mathcal{E}_{jF}, \mathcal{P}_j} \left[\| \nabla f(\vx_{jF})\|^{2}\right] + \frac{\varepsilon^{2} (q + 6)^{3}}{8} KL_{1}^{2} + \frac{3 \varepsilon^{2} (q + 4)}{32} KL_{1}. 
	\end{align}
     Assuming \(f(\vx) \geq f^{*}\) holds for all \(\vx \in \mathbb{R}^d\), and letting \(T = KF\), summing the inequality yields
	\begin{align}
		\mathbb{E}_{\mathcal{P}_{K-1}}[\phi_{T}] \leq \mathbb{E}_{\mathcal{P}_{0}}[\phi_{0}] - \frac{q}{4d} \hat{\eta} \sum_{j=0}^{K-1}  \mathbb{E}_{\mathcal{E}_{jF}, \mathcal{P}_j} \left[\| \nabla f(\vx_{jF}) \|^{2}\right] + T \frac{\varepsilon^{2} (q + 6)^{3}}{8} L_{1}^{2} + T \frac{3 \varepsilon^{2} (q + 4)}{32} L_{1}.
	\end{align}
	Since \(\mathbb{E}_{\mathcal{P}_{K-1}}[\phi_{T}]  \geq f^*\), we have:
	\begin{align}
		f^* &\leq \mathbb{E}_{\mathcal{P}_0}[\phi_{0}] - \frac{q}{4d} \hat{\eta} \sum_{j=0}^{K-1}  \mathbb{E}_{\mathcal{E}_{jF}, \mathcal{P}_j} \left[\| \nabla f(\vx_{jF}) \|^{2}\right] + T \frac{\varepsilon^{2} (q + 6)^{3}}{8} L_{1}^{2} + T \frac{3 \varepsilon^{2} (q + 4)}{32} L_{1}.
	\end{align}
	Rearranging the inequality, we get
	\begin{align}
		\frac{q}{4d}\hat{\eta} \sum_{j=0}^{K-1}  \mathbb{E}_{\mathcal{E}_{jF}, \mathcal{P}_j} \left[\| \nabla f(\vx_{jF}) \|^{2}\right] &\leq \mathbb{E}_{\mathcal{P}_0}[\phi_{0}] - f^{*} + T \frac{\varepsilon^{2} (q + 6)^{3}}{8} L_{1}^{2} + T \frac{3 \varepsilon^{2} (q + 4)}{32} L_{1}.
	\end{align}
	Substituting \(\hat{\eta} = \frac{1}{4(q+4)L_{1}}\), we obtain:
	\begin{align}
		\frac{q}{16d(q+4)L_{1}} \sum_{j=0}^{K-1} \mathbb{E}_{\mathcal{E}_{jF}, \mathcal{P}_j} \left[\| \nabla f(\vx_{jF}) \|^{2}\right] &\leq \mathbb{E}_{\mathcal{P}_0}[\phi_{0}] - f^{*} + T \frac{\varepsilon^{2} (q + 6)^{3}}{8} L_{1}^{2} + T \frac{3 \varepsilon^{2} (q + 4)}{32} L_{1}.
	\end{align}
	Thus, we have
	\begin{align}
		\frac{1}{T} \sum_{k=0}^{T - 1} \mathbb{E}_{\mathcal{E}_{k}, \mathcal{P}_{\left \lfloor k / F \right \rfloor }} \left[\| \nabla f(\vx_{k}) \|^{2}\right] \leq \frac{16 (q + 4) d L_{1} (\mathbb{E}_{\mathcal{P}_0}[\phi_{0}] - f^{*})}{q T }  + \frac{2 \varepsilon^{2} (q + 6)^{3} (q + 4) d}{q} L_{1}^{3} + \frac{3 \varepsilon^{2} (q + 4)^{2} d}{2q} L_{1}^{2}.
	\end{align}
	To ensure \(\sum_{k=0}^{T - 1} \mathbb{E}_{\mathcal{E}_{k}, \mathcal{P}_{\left \lfloor k / F \right \rfloor }} \left[\| \nabla f(\vx_{k}) \|^{2}\right] \leq \epsilon\), we can choose
	\[
	\varepsilon \leq \mathcal{O} \left(\frac{\epsilon^{1/2}}{q^{3/2} d^{1/2} L_{1}^{3/2}}\right).
	\]
	Then, the upper bound for the expected number of steps is $\mathcal{O}\left(\frac{d}{\epsilon}\right)$.
	The proof is completed.
\end{proof}

\end{document}